\newcommand{\xv}{{\boldsymbol x}}
\newcommand{\zv}{{\boldsymbol z}}
\newcommand{\cov}{\text{cov}}
\newcommand{\EE}{\mathbb{E}}
\newcommand{\BR}{\mathbb{R}}
\newcommand{\ud}{\,\text{d}}
\newcommand{\CE}{\mathcal{E}}
\newcommand{\KL}{\text{KL}}
\newcommand{\CM}{\mathcal{M}}
\DeclareMathOperator{\argmin}{\arg\min}
\newcommand{\CU}{\mathcal{U}}
\newcommand{\CN}{\mathcal{N}}
\newcommand{\beq}{\begin{equation}}
\newcommand{\eeq}{\end{equation}}
\newcommand{\beqs}{\begin{eqnarray}}
\newcommand{\eeqs}{\end{eqnarray}}
\newcommand{\barr}{\begin{array}}
\newcommand{\earr}{\end{array}}
\newcommand{\ELBO}{\text{ELBO}}
\newcommand{\EUBO}{\text{EUBO}}
\newcommand{\TVO}{\text{TVO}}
\newcommand{\IWELBO}{\text{IW-ELBO}}
\newcommand{\RVI}{\text{RVI}}
\newcommand{\AVI}{\text{VI}_{\text{Aux}}}
\newcommand{\MCMCVI}{\text{VI}_{\text{MCMC}}}
\newcommand{\WUBO}{\text{WUBO}}
\newcommand{\WLBO}{\text{WLBO}}
\newcommand{\HBO}{\text{HBO}}
\newcommand{\ESS}{\text{ESS}}
\newcommand{\IW}{\text{IW}}
\newcommand{\BD}{\mathbb{D}}
\newcommand{\Zv}{\mathbb{Z}}
\theoremstyle{plain}
\newtheorem{thm}{Theorem}[section] 
\newtheorem{coro}{Corollary}[section] 
\theoremstyle{definition}
\newtheorem{defn}[thm]{Definition} 
\newtheorem{prop}[thm]{Proposition} 
\newtheorem{lem}[thm]{Lemma}
\begin{document}

\twocolumn[

\aistatstitle{Variational Inference with H\"older Bounds}

\aistatsauthor{ Junya Chen${}^{*}$ \And Danni Lu \And  Zidi Xiu \And Ke Bai \And Lawrence Carin \And Chenyang Tao${}^{*}$}

\aistatsaddress{ Duke University \And  Virginia Tech \And Duke University \And Duke University \And Duke University \And Duke University} ]

\begin{abstract}
The recent introduction of thermodynamic integration techniques has provided a new framework for understanding and improving variational inference (VI). In this work, we present a careful analysis of the thermodynamic variational objective (TVO), bridging the gap between existing variational objectives and shedding new insights to advance the field. In particular, we elucidate how the TVO naturally connects the three key variational schemes, namely the importance-weighted VI, R\'enyi-VI and MCMC-VI, which subsumes most VI objectives employed in practice. To explain the performance gap between theory and practice, we reveal how the pathological geometry of thermodynamic curves negatively affect TVO. By generalizing the integration path from the geometric mean to the weighted H\"older mean, we extend the theory of TVO and identify new opportunities for improving VI. This motivates our new VI objectives, named the {\it H\"older bounds}, which flatten the thermodynamic curves and promise to achieve one-step approximation of the exact marginal $\log$-likelihood. A comprehensive discussion on the choices of numerical estimators is provided. We present strong empirical evidence on both synthetic and real-world datasets to support our claims. 
\end{abstract}

\vspace{-1em}
\section{Introduction}
\vspace{-5pt}
One of the key challenges in modern machine learning is to approximate complex distributions.
Due to recent advances on learning scalability \citep{hoffman2013stochastic} and flexibility \citep{kingma2016improving}, and the development of automated inference procedures \citep{ranganath2014black}, {\it variational inference} (VI) has become a popular approach for general latent variable models \citep{blei2017var}.
Variational inference leverages a posterior approximation to derive a lower bound on the log-evidence of the observed data, and it can be efficiently optimized.
This variational bound, more commonly known as the {\it evidence lower bound} (ELBO), serves as a surrogate objective for {\it maximum likelihood estimation} (MLE).
Successful applications of VI have been reported in document analysis \citep{blei2003latent}, neuroscience \citep{friston2007variational}, generative modeling \citep{kingma2013auto}, among many others.

A widely recognized heuristic is that, tightening the variational bound, in general, improves model performance \citep{burda2015importance}.
Consequently, considerable research has been directed toward this goal.
The most direct approach seeks to boost the expressive power of the approximate posterior, such that it can match the true posterior better.
For instance, normalizing flows \citep{rezende2015variational, kingma2016improving} exploited invertible transformations with tractable Jacobian on the latent codes, \citet{salimans2015markov, gregor2015draw, ranganath2016hierarchical} explored the hierarchical structure of the latent code generation, \citet{miller2016variational} modeled the posterior as a mixture of Gaussians, and \citet{mescheder2017adversarial} adversarially trained a neural sampler to enable flexibility. Alternatively, employing data-adaptive priors similarly closes the variational gap \citep{tomczak2018vae}. While showing varying degree of successes, these approaches often involve specific design choices ({\it e.g.}, network architectures), which complicates implementations. 

\begin{figure}
\begin{center}
\includegraphics[width=1.\columnwidth]{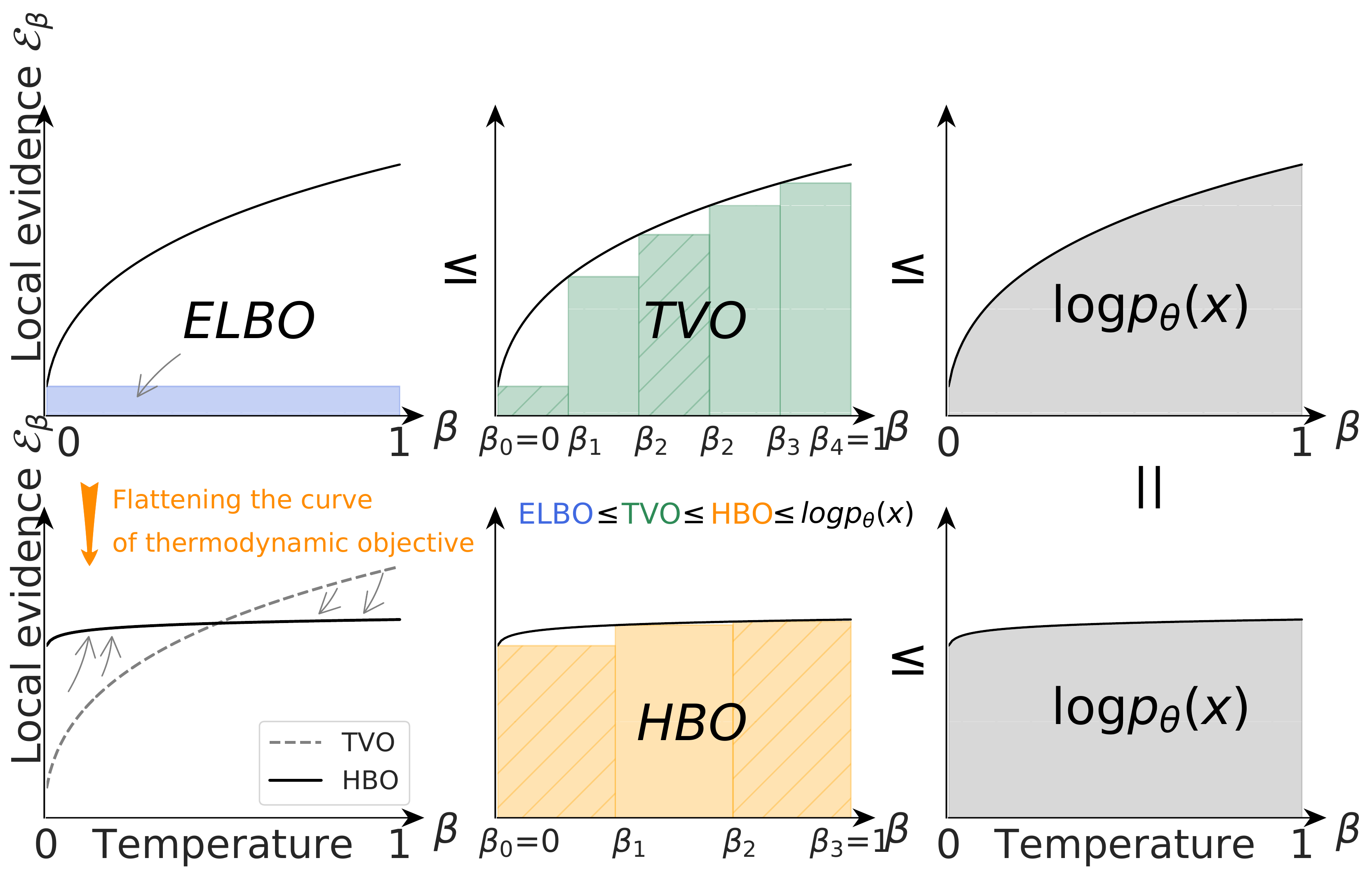}
\end{center}
\vspace{-1em}
\caption{Thermodynamic integration and variational bounds. Marginal likelihood $\log p_{\theta}(x)$ equals the area under the thermodynamic curve. We show by carefully choosing an integral path to flatten the thermodynamic curve, the $\log p_{\theta}(x)$ can be approximated more accurately and with less computations.  \label{fig:tv}}
\label{fig:tvo_grad}
\vspace{-2em}
\end{figure}

Independent of design choices of posteriors and priors, an orthogonal direction instead advocates direct modifications to the variational objectives, making them provably tighter or otherwise more favorable ({\it e.g.}, efficiency, stability, {\it etc.}).
As a prominent example, the $\ELBO$ can be sharpened by leveraging multiple posterior samples weighted in accordance with their importance \citep{burda2015importance}.  
Further, R\'enyi-VI \citep{li2016renyi} and $\chi$-VI \citep{dieng2017chi} derived a family of variational targets that interpolates between lower and upper bounds to the $\log$-likelihood \citep{tao2018variational}, and they belong to the more general family of $f$-divergence VI \citep{wan2020f}.
\citet{bamler2017perturbative} developed a more general view on and presented a low-variance estimator based on a perturbative argument.
However, an important note made by \citet{rainforth2017tighter} is that sharpening the variational bound may inadvertently hurt model inference ({\it i.e.}, the approximate posterior).

More recently, \citet{masrani2019the} derived a novel variational objective using {\it Thermodynamic Integration} techniques \citep{lartillot2006computing}. This new scheme, known as the {\it thermodynamic variational objective} (TVO), is defined by a path integral of local evidence curve, which in theory recovers the exact model marginal $\log$-likelihood (see Figure \ref{fig:tv} and Sec \ref{sec:tvo}). The discrete Riemann left sum of the curve yields lower bounds the $\log$ marginal likelihood and is provably sharper than the ELBO. TVO applies widely to continuous, discrete and non-reparametrizable distributions, with competitive performance reported. Perhaps surprisingly, TVO manages to break the tension between the bound tightness and estimator variance \citep{tao2018variational}, at the cost of sampling transition distributions along the integral path. 

While TVO points to new directions for advancing variational inference, there are important open questions to be answered: ($a$) How is this TVO connected to the existing vast literature on VI? ($b$) Is the convenient choice of geometric path optimal? If not, what is a feasible, more favorable alternative? ($c$) Implementation-wise, what are the trade-offs between different numerical strategies? ($d$) Why TVO failed to deliver the promised exact inference even on toy problem? 

In this study, we seek answers to the above questions, in the hope that our insights and observations can serve to better understand and guide the practice of TVO as well as other advanced VI schemes. In particular, our key contributions include: ($i$) Clarification of how TVO recovers major VI schemes, including importance-weighted (IW) VAE, R\'enyi-VI and MCMC-VI; ($ii$) careful analysis of TVO wrt the more general H\"older path integrals, extending theoretical results and motivating novel VI objectives named $\HBO$; ($iii$) practical discussions on the choice of numerical schemes for thermodynamic variational schemes, covering important topics such as trade-offs and automated parameter tuning.

\vspace{-8pt}
\section{Thermodynamic Objectives for Variational Inference}
\vspace{-3pt}

We first review the basics of $\TVO$ and its role in inference, and then elaborate on the important connections to other well-established VI schemes ({\it e.g.}, IW-VAE, R\'enyi-VI and MCMC-VI). 

\vspace{-5pt}
\subsection{Thermodynamic integration and VI}
\vspace{-2pt}
\label{sec:tvo}

{\bf Thermodynamic integration.} Assume we have two unnormalized distributions $\tilde{\pi}_0(z)$ and $\tilde{\pi}_1(z)$, with $Z_0$ and $Z_1$ as their respective normalizing constants ({\it i.e.}, $\int \tilde{\pi}_i(z) \ud z / Z_i = 1, i \in \{ 0, 1\}$). Thermodynamic integration allows us to compute the $\log$ ratio $\log \frac{Z_1}{Z_0}$ between the normalization constants via integrating over a path $\tilde{\pi}_{\beta}$ that interpolates between $\tilde{\pi}_0$ and $\tilde{\pi}_1$. Let $\pi_{\beta} = \tilde{\pi}_{\beta}/Z_{\beta}$ be the normalized form of some intermediate unnormalized density $\tilde{\pi}_{\beta}$ for $\beta\in[0,1]$, with $Z_{\beta}=\int \tilde{\pi}_{\beta}(z) \ud z$ as its normalizing constant. We further denote its potential function as $U_{\beta}(z)\triangleq \log \tilde{\pi}_{\beta}(z)$. Then the following identity immediately follows from \citep{masrani2019the}
\vspace{-5pt}
\beq
\begin{array}{rcl}
\log Z_1 - \log Z_0 & = & \int_0^1 \partial_{\beta} \{ \log Z_{\beta} \} \ud \beta \\
[5pt]
& = & \int_0^1 \EE_{Z\sim\pi_{\beta}}[\partial_{\beta} U_{\beta}(Z)] \ud \beta. 
\end{array}
\label{eq:ti}
\vspace{-.5em}
\eeq

{\bf Variational inference.} For a latent variable model $p_{\theta}(x,z)$, we consider $x$ as an observation ({\it i.e.}, data) and $z$ as the latent variable we want to infer. 
The marginal likelihood $p_{\theta}(x) = \int p_{\theta}(x,z) \ud z$ typically does not have a closed-form expression, 
and to avoid direct numerical estimation of $p_{\theta}(x)$, VI instead optimizes a variational bound to the marginal $\log$-likelihood. The most popular choice is known as the {\it Evidence Lower Bound} (ELBO), given by
\beq
\ELBO \triangleq \EE_{Z\sim q_{\phi}(z|x)}\left[ \log \frac{p_{\theta}(x,Z)}{q_{\phi}(Z|x)} \right] \leq \log p_{\theta}(x), 
\eeq
where $q_{\phi}(z|x)$ is an approximation to the true posterior $p_{\theta}(z|x)$ and the inequality is a direct result of Jensen's inequality. This bound tightens as $q_{\phi}(z|x)$ approaches the true posterior $p_{\theta}(z|x)$. For estimation, we seek parameters $\theta$ that maximize the ELBO, and the commensurately learned parameters $\phi$ are often used in a subsequent inference task with new data.


\begin{figure}[!t]
    \centering
	\includegraphics[width=.45\columnwidth]{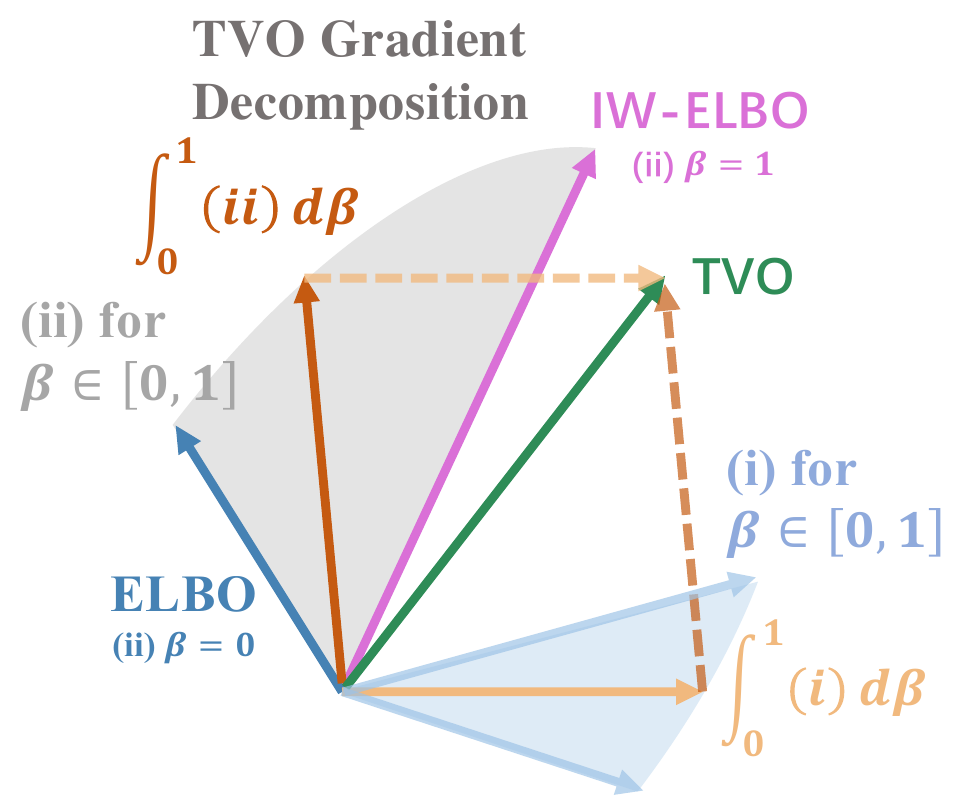}
	\includegraphics[width=.45\columnwidth, trim=.5in 0in 0 .5in, clip]{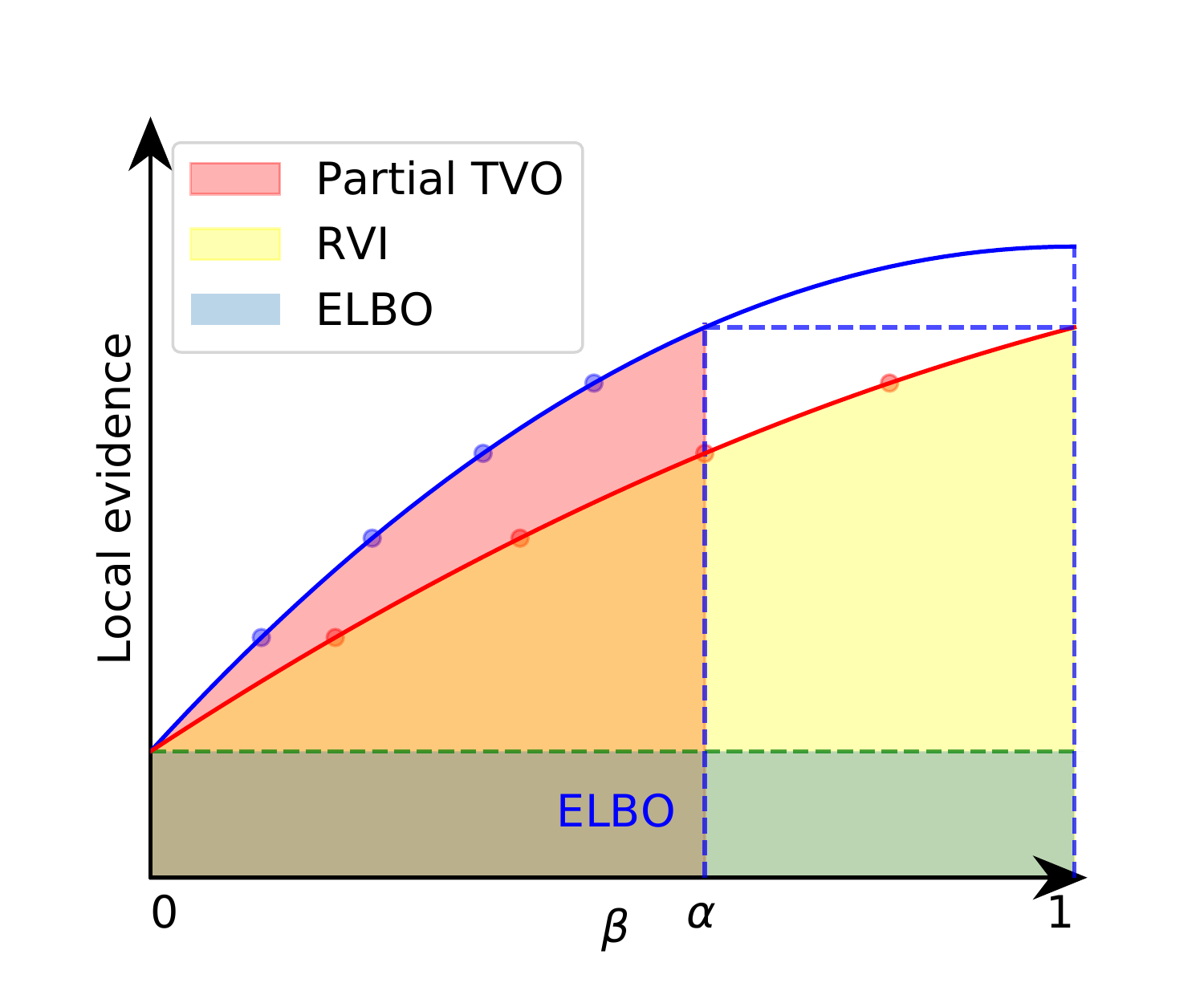}
	\vspace{-1em}
	\caption{(Left) TVO gradient can be factorized into an interpolation between the ELBO and IW-ELBO gradients (Eq.(\ref{eq:grad}) term ($ii$)) plus a REINFORCE gradient correction (Eq.(\ref{eq:grad}) term ($i$)). (Right)
	The $\alpha$-R\'enyi variational bound (yellow) is equivalent to partially integrated $\TVO$ bound (red) rescaled with $\frac{1}{\alpha}$.}
	\label{fig:tvo_grad}
 \vspace{-1em}
\end{figure}

\vspace{-5pt}
{\bf Thermodynamic variational objective.} To connect VI to TVO, let us set $\tilde{\pi}_0 = q_{\phi}(z|x), \tilde{\pi}_1 = p_{\theta}(x,z)$, and conveniently choose the {\it geometric mean} path 
\beq
\tilde{\pi}_{\beta}(z) = \tilde{\pi}_1^{\beta} \tilde{\pi}_0^{1-\beta} = p_{\theta}(x,z)^{\beta} q_{\phi}(z|x)^{1-\beta}
\eeq
between the posterior approximation $q_{\phi}$ and model distribution $p_{\theta}$. Simple computation reveals 
\beq
\resizebox{.9\hsize}{!}{$\log Z_1 = \log p_{\theta}(x), \, \log Z_0 = 0, \, \partial_{\beta}U_{\beta}(z) = \log \frac{p_{\theta}(x,z)}{q_{\phi}(z|x)}.$}
\eeq
Plugging these into (\ref{eq:ti}), it gives us
\beq
\resizebox{.9\hsize}{!}{$
\log p_{\theta}(x) = \int_{0}^1 \CE_{\beta} \ud \beta, \,\, \CE_{\beta} \triangleq \EE_{Z_{\beta}\sim \pi_{\beta}}\left[\log \frac{p_{\theta}(x,Z_{\beta})}{q_{\phi}(Z_{\beta}|x)}\right]. 
$}
\label{eq:tvi}
\eeq
To simplify our discussion, hereafter we refer to $\CE_{\beta}$ as the {\it local evidence}, which recovers the classical ELBO when $\beta=0$ and becomes an upper bound to the likelihood at $\beta=1$ (denoted as $\EUBO$). 
That is to say, when following a geometric mean path, the $\log$ evidence $\log p_{\theta}(x)$ can be expressed as the integration of local evidence along the path. 
To construct practical estimators from (\ref{eq:tvi}), one can partition the unit interval $[0,1]$ into $K$ discrete bins $\{ \beta_k \}_{k=0}^K$ and integrate the local evidence with left Riemann sum as
\beq
\TVO = \sum\nolimits_k (\beta_{k+1} - \beta_k) \CE_{\beta_k}. 
\eeq
The main result from the original TVO paper \citep{masrani2019the} is that $\CE_{\beta}$ is a non-decreasing function of $\beta$, {\it i.e.}, $\CE_{\beta} \leq \CE_{\beta'}$ for $\beta < \beta'$, which implies
\beq
\ELBO \leq \TVO \leq \log p_{\theta}(x), 
\eeq
And the gap $\BD(q(z|x) \parallel p(z|x)) \triangleq \log p(x) - \TVO $ defines a specific divergence measure between $q(z|x)$ and $p(z|x)$ \citep{brekelmans2020all}, {\it i.e.}, $\BD(q(z|x)\parallel p(z|x))\geq 0$ with the equality holds iff $p(z|x) = q(z|x)$.

\vspace{-6pt}
\subsection{Bridging the gap}
\vspace{-4pt}

A missing piece in the original work of \citet{masrani2019the} is the TVO's connection to the more recent developments in the VI literature. We contribute this section to the discussion of the inherent connections between TVO and other prominent examples of advanced VI schemes, which all seek to improve the bound. Technical derivations are deferred to the {\it supplementary material} (SM) Sec. C.



\vspace{-2pt}
{\bf Importance-weighted VAE \citep{burda2015importance}} uses multiple latent samples to tighten the variational bound. In particular, the importance-weighted ELBO is given by 
\beq
\resizebox{.9\hsize}{!}{
$\IWELBO = \EE_{Z_{1:S}\sim q_{\phi}}\left[ \log \left\{ \frac{1}{S} \sum_s \frac{p_{\theta}(x,Z_s)}{q_{\phi}(Z_s|x)} \right\} \right],$
}
\eeq
where $\{Z_s\}$ are $S$ independent samples from $q_{\phi}$ and $\IWELBO$ is provably tighter than the vanilla ELBO. The empirical estimator for $\TVO$ also employed self-normalized importance weights, using $q_{\phi}$ as the proposal distribution. The importance weights for $\pi_{\beta}$ are then given by $w_s^{\beta} = \tilde{w}_s^{\beta} / (\sum_s \tilde{w}_s^{\beta})$, where $\tilde{w}_s = p_{\theta}(x,Z_s)/q_{\phi}(Z_s)$ denotes the unnormalized importance weights and $Z_s \sim q_{\phi}(z|x)$. 
$\TVO$ uses the following estimator to compute the gradients of $\lambda \triangleq (\theta, \phi)$
\setlength\arraycolsep{2pt}
\begin{equation}
\resizebox{.9\hsize}{!}{
$
\begin{aligned}\label{eq:grad}
&\nabla_{\lambda} \CE_{\beta}  =  \EE_{\pi_{\beta}}\left[ \nabla_{\lambda} f_{\lambda}\right] + \cov_{\pi_{\beta}}[\nabla_{\lambda} \log \tilde{\pi}_{\beta}(\lambda), f_{\lambda}] \\
&\nabla_{\lambda} \hat{\CE}_{\beta} = \sum_{s} \underbrace{ w_s^{\beta} \nabla_{\lambda} \log \tilde{\pi}_{\beta}(Z_s) (f_{\lambda}(Z_s) -\bar{f}_\lambda)}_{(i)} + \sum_s \underbrace{w_s^{\beta} \nabla_{\lambda} f_{\lambda}(Z_s)}_{(ii)}, 
\end{aligned}$
}
\vspace{-.5em}
\end{equation}
where $f_{\lambda}(z) \triangleq \log \frac{p_{\theta}(x,z)}{q_{\phi}(z|x)}, \bar{f}_\lambda \triangleq \sum_s w_s^{\beta} f_{\lambda}(Z_s)$. Note the ($i$) term is the REINFORCE gradient of the ELBO \citep{williams1992simple} \footnote{We note that while the original $\TVO$ paper claim to be ``REINFORCE-free'', the derived gradient estimator is actually an instantiation of the REINFORCE gradient.}. When $\beta\in\{0,1\}$, the ($ii$) term coincides with the $\ELBO$'s and $\IWELBO$'s gradient respectively. This helps enforce the view that  $\TVO$ is a generalization of standard bounds with annealed importance weights, enhanced with additional REINFORCE gradient corrections (see Figure \ref{fig:tvo_grad}).

\vspace{-2pt}
{\bf R\'enyi variational inference \citep{li2016renyi, dieng2017chi, tao2018variational}} While vanilla VI minimizes the $\KL$-divergence between the approximate and true posterior, R\'enyi-VI instead (implicitly) optimizes the more general R\'enyi-divergence \citep{renyi1961measures}, also known as the $\alpha$-divergence or $\chi$-divergence. Specifically, R\'enyi-VI targets the following objective
\beq
\RVI_{\alpha} \triangleq \frac{1}{\alpha} \log \EE_{Z\sim q_{\phi}}\left[\left(\frac{p_{\theta}(x,Z)}{q_{\phi}(Z|x)}\right)^{\alpha}\right],
\label{eq:renyi}
\eeq
where $\alpha>0$ and $\RVI_{0} \triangleq \lim_{\alpha\rightarrow0} \RVI_{\alpha}$. Note that $\RVI_{\alpha}$ is a non-decreasing function of $\alpha$, and the following relation holds
\beq
\begin{array}{c}
\RVI_{0} = \ELBO, \quad \RVI_{1} = \log p_{\theta}(x),  \\
[5pt]
\ELBO \leq \RVI_{\alpha\in(0,1)} \leq \log p_{\theta}(x) \leq \RVI_{\alpha>1}. 
\end{array}
\eeq
In simple words, for $\alpha\in[0,1]$, the bound is tighter for a larger $\alpha$. The following result explicitly connects $\RVI$ to $\TVO$, showing $\RVI$ is essentially a re-scaled version of partially integrated $\TVO$ (see Figure \ref{fig:tvo_grad}).
\vspace{5pt}
\begin{prop}
\small{If $\alpha \in [0,1]$, then $\RVI_{\alpha} = \frac{1}{\alpha} \int_0^\alpha \CE_{\beta} \ud \beta$}.
\end{prop}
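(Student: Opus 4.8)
The plan is to reduce the claim to the fundamental theorem of calculus by recognizing that both sides are expressible through the normalizing constant $Z_\beta$ of the geometric path. Writing $f(z) \triangleq \log \frac{p_\theta(x,z)}{q_\phi(z|x)}$ for the log-importance ratio, I would first observe that the geometric-mean path can be written as $\tilde{\pi}_\beta(z) = q_\phi(z|x)\, e^{\beta f(z)}$, so that
\beq
Z_\beta = \int q_\phi(z|x)\, e^{\beta f(z)}\ud z = \EE_{Z\sim q_\phi}\!\left[\left(\tfrac{p_\theta(x,Z)}{q_\phi(Z|x)}\right)^{\beta}\right].
\eeq
Comparing with the definition in (\ref{eq:renyi}), this identifies $\RVI_\alpha = \frac{1}{\alpha}\log Z_\alpha$, i.e. the R\'enyi order $\alpha$ plays exactly the role of the integration endpoint; in particular $Z_0 = 1$ and $\log Z_0 = 0$.

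Next I would connect the integrand $\CE_\beta$ to the same quantity. By the thermodynamic identity (\ref{eq:ti}), specialized to the geometric path, $\partial_\beta \log Z_\beta = \EE_{Z\sim\pi_\beta}[\partial_\beta U_\beta(Z)]$, and since $\partial_\beta U_\beta(z) = f(z)$ for this path (as already recorded in the excerpt), the right-hand side is precisely $\EE_{Z\sim\pi_\beta}[f(Z)] = \CE_\beta$. Hence $\CE_\beta = \partial_\beta \log Z_\beta$, so the local evidence is nothing but the $\beta$-derivative of the log-normalizer.

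With these two facts in hand, the conclusion follows by integrating over $[0,\alpha]$ rather than over the full interval $[0,1]$:
\beq
\int_0^\alpha \CE_\beta \ud\beta = \int_0^\alpha \partial_\beta \log Z_\beta \,\ud\beta = \log Z_\alpha - \log Z_0 = \log Z_\alpha.
\eeq
Dividing by $\alpha$ and substituting $\RVI_\alpha = \frac{1}{\alpha}\log Z_\alpha$ closes the argument. Because the computation is essentially a restriction of the full thermodynamic integration to a sub-interval, no genuinely new difficulty arises; the only point I would treat carefully is the regularity needed to differentiate under the integral sign and invoke the fundamental theorem of calculus. I would justify $\partial_\beta\log Z_\beta = \EE_{\pi_\beta}[f]$ by dominated convergence, assuming the moment $\EE_{q_\phi}[e^{\beta f}]$ is finite and smooth in $\beta$ on $[0,1]$ — which is exactly the regime in which the R\'enyi objective (\ref{eq:renyi}) is well-defined for $\alpha\in[0,1]$, so the hypothesis of the proposition supplies precisely what is required.
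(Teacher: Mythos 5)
Your proposal is correct and follows essentially the same route as the paper: both identify $\RVI_{\alpha}=\frac{1}{\alpha}\log Z_{\alpha}$ with $Z_{\beta}=\int \tilde{\pi}_1^{\beta}\tilde{\pi}_0^{1-\beta}\ud z$ the log-normalizer of the geometric path, and then apply the thermodynamic integration identity $\CE_{\beta}=\partial_{\beta}\log Z_{\beta}$ restricted to the sub-interval $[0,\alpha]$, using $\log Z_0=0$. The only difference is that you spell out the regularity conditions for differentiating under the integral sign, which the paper leaves implicit.
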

To clarify the implications from the above Proposition, we recall two major issues with the direct implementation of $\RVI$ \citep{tao2018variational}: ($i$) practical estimators are not guaranteed to be a lower bound; ($ii$) for large $\alpha$, it suffers from the variance-tightness trade-off. These are due to the disconnection between theoretical definition and practical estimation, and the integrand in (\ref{eq:renyi}) is very unstable for larger values of $\alpha$. Our insight implies $\RVI$ can be equivalently implemented with thermodynamic integrations, which is guaranteed to be a lower bound and it partly solves the variance issue as the integrand $\CE_{\beta}$ is now expressed in the more stable $\log$-scale \footnote{This is because the use of importance re-weighting removes another major source of variance.}. Further, the R\'enyi interpretation lends insight for understanding the mode-covering behavior of the inference distribution $q_{\phi}(z|x)$. As discussed in \citet{li2016renyi}, the approximate posterior $q_{\phi}(z|x)$ transitions from mode-seeking to mode-covering as we gradually increase $\alpha$. 

{\bf MCMC variational inference \citep{salimans2015markov}} belongs to a more general family of VI schemes known as the {\it Auxiliary VI} (Aux-VI) \citep{maaloe2016auxiliary}, which seeks to improve bound sharpness via introducing additional auxiliary variables $\tilde{z}$. Due to space limit, we show details on how MCMC-VI recover $\TVO$ in the SM Sec. A. 

\vspace{-2pt}

\begin{figure*}[!t]
    \centering
    \begin{minipage}{.3\textwidth}
        \centering
        \vspace{1em}
	\includegraphics[width=.9\columnwidth]{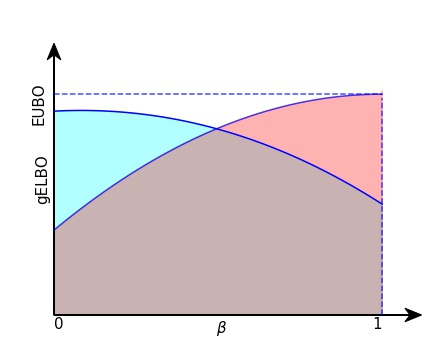}
        \vspace{-.6em}
        \caption{Comparing geometric (green) \& Wasserstein (red) thermodynamic curves, the monotonicity is flipped, but area under curve remains the same.  
        \label{fig:wtv}}
    \end{minipage}%
    \hspace{.4em}
    \begin{minipage}{0.3\textwidth}
	\begin{center}
	\includegraphics[width=.92\columnwidth,  trim=.8in 0in 0 0in, clip]{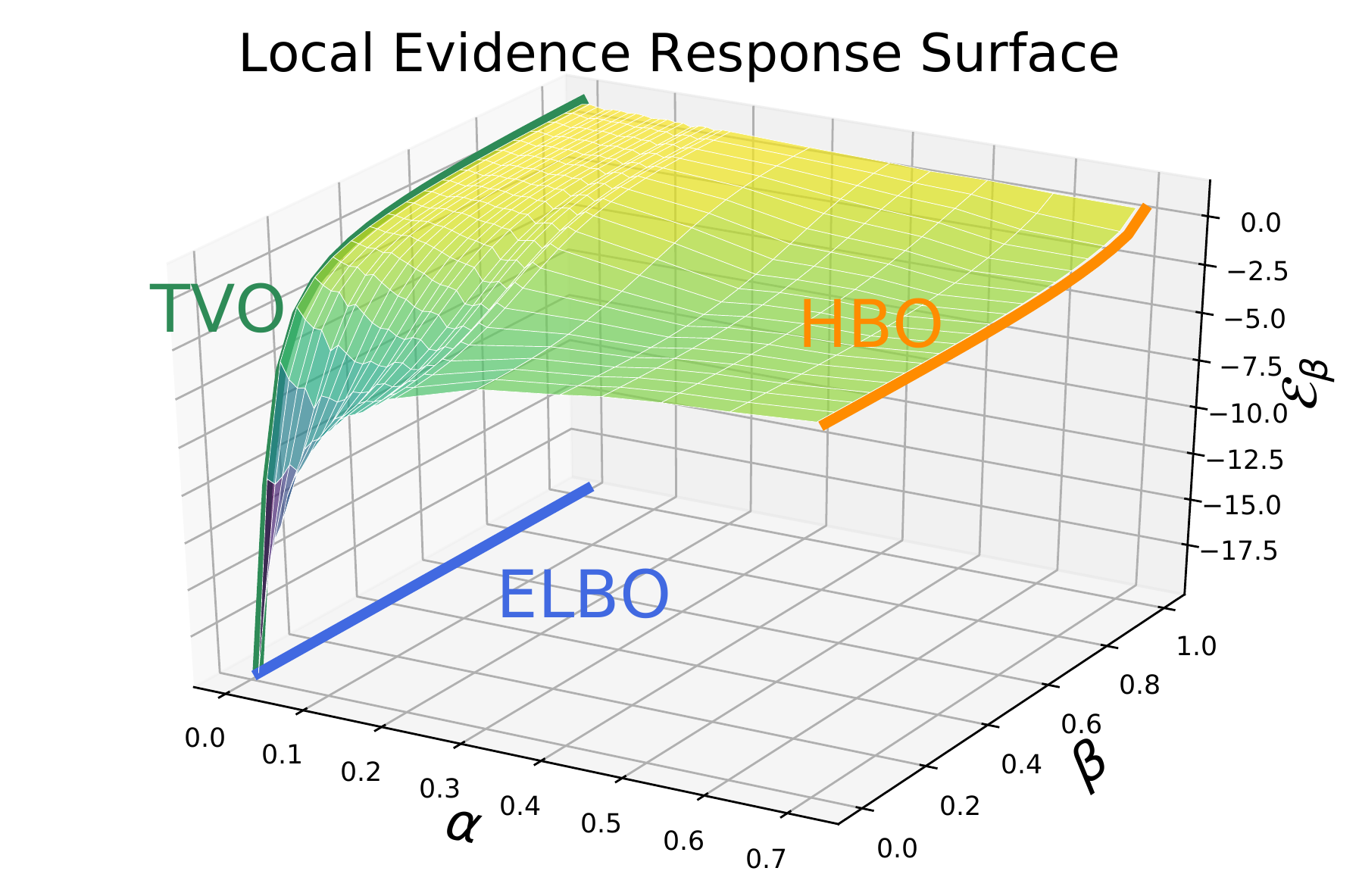}
	\end{center}
	\vspace{-1.5em}
	\caption{Local evidence surface for the H\"older paths. HBO curve flattens as $\alpha$ increases, until the monotonicity flips. }
	\label{fig:surface}
    \end{minipage}
    \hspace{.4em}
    \begin{minipage}{0.3\textwidth}
	\begin{center}
    	\includegraphics[width=.9\columnwidth]{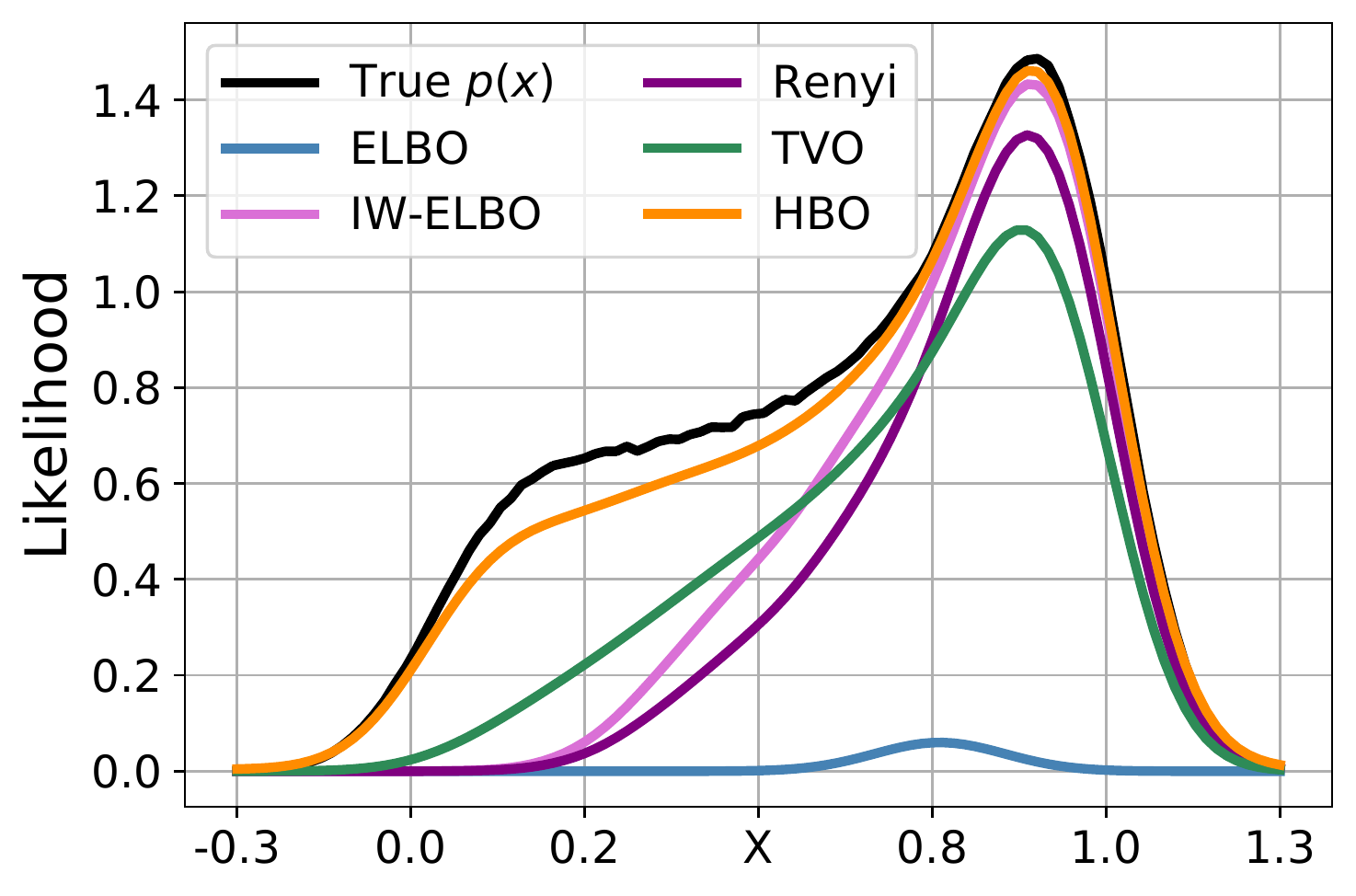}
	\end{center}
	\vspace{-1.5em}
	\caption{Comparison of approximation accuracy for different variational bounds. Bounds closer to the true $p(x)$ (black line) is considered better.}
	\label{fig:bounds}
	\vspace{-1.5em}
     \end{minipage}
\vspace{-.8em}
\end{figure*}

\vspace{-5pt}
\section{H\"older Bound Objective: A H\"older Path Analysis of Generalized TVO}
\vspace{-4pt}

While the TVO framework puts no restriction on the path $\{\tilde{\pi}_{\beta}\}$, the convenient choice of a geometric mean path yields an elegant solution. A natural question is, can we improve the results by taking an alternative path? This seems an illegitimate question at first glance, since the $\TVO$ is already a sharp bound to the $\log$-evidence. We argue that, as the original $\TVO$ paper observed, in practice TVO performs characteristically different from its theoretical predictions, and fails to outperform state-of-the-art VI counterparts. In this section, we provide a careful analysis of its failure, and generalize TVO beyond its original scope to seek remedies, inspiring a new family of thermodynamic variational bounds to close this gap. 

\vspace{-5pt}
\subsection{Limitations of TVO and a heuristic fix}
\vspace{-5pt}

We identify the main culprit for the degenerated performance of TVO as the pathological curvature of the thermodynamic curve $\CE_{\beta}$, which apparently offsets in practical terms the theoretical advantage enjoyed by $\TVO$. By taking an alternative integration path, one hopes to improve $\TVO$ via attenutating the harmful geometry: a flatter $\TVO$ curve allows a sharper approximation to the marginal $\log$ evidence given the same or smaller partition budget $K$ (Figure \ref{fig:tvo_grad}). 

To motivate, we first look at the simplest case, with the geometric mean path replaced by the {\it arithmetic mean} path\footnote{ This corresponds to follow the Wasserstein geodesics in the space of probability distributions}. We will refer to the corresponding bound as Wasserstein bound (WBO), and with a bit of computation, we have the following assertion. 

\begin{prop}
The thermodynamic curve $\CE_{\beta}^W$ for arithmetic mean path is non-increasing wrt $\beta$. Denoting the lower ($\CE_1^W$) and upper bound ($\CE_0^W$) respectively as $\WLBO$ and $\WUBO$, the following inequalities hold
\beq
\begin{array}{c}
\frac{1}{p_{\theta}(x)} \ELBO(x) \leq \WLBO(x) \leq \log p_{\theta}(x),  \\
[5pt]
\log p_{\theta}(x) \leq \WUBO(x) \leq \EUBO(x). 
\end{array}
\label{eq:wa_ineq}
\eeq
\end{prop}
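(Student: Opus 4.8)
The plan is to reduce everything to a single closed-form expression for the arithmetic-path local evidence. First I would instantiate the identity (\ref{eq:ti}) for the arithmetic mean path $\tilde{\pi}_{\beta} = (1-\beta)q_{\phi}(z|x) + \beta\, p_{\theta}(x,z)$. Its normalizer is $Z_{\beta} = (1-\beta) + \beta\, p_{\theta}(x) = 1 + \beta(p_{\theta}(x)-1)$, and since $\partial_{\beta}U_{\beta}(z) = (p_{\theta}(x,z) - q_{\phi}(z|x))/\tilde{\pi}_{\beta}(z)$, the factor $\tilde{\pi}_{\beta}$ cancels inside the expectation $\EE_{\pi_{\beta}}[\cdot]$ and one obtains the strikingly simple form $\CE_{\beta}^W = (p_{\theta}(x)-1)/Z_{\beta}$. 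Differentiating gives $\partial_{\beta}\CE_{\beta}^W = -(p_{\theta}(x)-1)^2/Z_{\beta}^2 \le 0$, which establishes the claimed monotonicity (non-increasing) irrespective of whether $p_{\theta}(x)$ is above or below $1$. Evaluating at the endpoints then identifies $\WUBO = \CE_0^W = p_{\theta}(x) - 1$ and $\WLBO = \CE_1^W = 1 - 1/p_{\theta}(x)$.

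Next I would dispatch the two ``outer'' inequalities $\WLBO \le \log p_{\theta}(x)$ and $\log p_{\theta}(x) \le \WUBO$ simultaneously. By (\ref{eq:ti}), the area under the curve is $\int_0^1 \CE_{\beta}^W\,\ud\beta = \log Z_1 - \log Z_0 = \log p_{\theta}(x)$, and the mean of a monotone function on $[0,1]$ is pinned between its endpoint values; equivalently these are precisely the elementary scalar bounds $\frac{u-1}{u} \le \log u \le u-1$ evaluated at $u = p_{\theta}(x)$.

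For the two ``inner'' cross-path comparisons I would work with the likelihood ratio $r(z) \triangleq p_{\theta}(x,z)/q_{\phi}(z|x)$, recording that $\EE_{q_{\phi}}[r] = p_{\theta}(x)$ and that the change of measure $\EE_{p_{\theta}(\cdot|x)}[g] = \tfrac{1}{p_{\theta}(x)}\EE_{q_{\phi}}[r\,g]$ holds. The lower-bound comparison $\tfrac{1}{p_{\theta}(x)}\ELBO \le \WLBO$ is equivalent, after clearing $p_{\theta}(x)>0$, to $\ELBO \le p_{\theta}(x)-1$, which is immediate from the pointwise inequality $\log t \le t-1$ applied to $r$ and averaged under $q_{\phi}$. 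The upper-bound comparison is the crux: rewriting $\EUBO = \EE_{p_{\theta}(\cdot|x)}[\log r] = \tfrac{1}{p_{\theta}(x)}\EE_{q_{\phi}}[r\log r]$ and $\WUBO = \EE_{q_{\phi}}[r]-1$, the natural tool is the sharper pointwise inequality $t\log t \ge t-1$, which yields $\EE_{q_{\phi}}[r\log r] \ge \EE_{q_{\phi}}[r]-1 = \WUBO$, i.e. $p_{\theta}(x)\,\EUBO \ge \WUBO$.

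I expect the main obstacle to be this final step: reconciling the normalization so that the provable relation $p_{\theta}(x)\,\EUBO \ge \WUBO$ delivers the stated $\WUBO \le \EUBO$. The two lines of (\ref{eq:wa_ineq}) are genuinely dual, with a factor $1/p_{\theta}(x)$ on the $\ELBO$ side mirrored by a factor $p_{\theta}(x)$ on the $\EUBO$ side, so the careful bookkeeping of these multiplicative factors, and checking their effect separately in the regimes $p_{\theta}(x)<1$ and $p_{\theta}(x)>1$, is exactly where I would concentrate my attention. I also note that the weaker Jensen bound $\EE_{q_{\phi}}[r\log r] \ge p_{\theta}(x)\log p_{\theta}(x)$ is insufficient here; the refinement to $t\log t \ge t-1$ is what makes the argument go through.
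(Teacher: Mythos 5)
Your proposal follows essentially the same route as the paper's proof: the derivative computation giving $-\bigl(\int[\tilde{\pi}_1-\tilde{\pi}_0]\,\ud z\bigr)^2/Z_\beta^2\le 0$ for monotonicity, the endpoint evaluations $\CE_0^W=p_\theta(x)-1$ and $\CE_1^W=1-1/p_\theta(x)$, and the elementary inequalities $g\log(f/g)\le f-g\le f\log(f/g)$ (your $\log t\le t-1$ and $t\log t\ge t-1$) for the cross-path comparisons. Your closed form $\CE_\beta^W=(p_\theta(x)-1)/Z_\beta$ is a slightly cleaner packaging of the same computation. The obstacle you flag at the end is real and is not resolved in the paper either: the paper's proof establishes exactly $h(0)\le Z_1\cdot\EUBO=p_\theta(x)\,\EUBO$ and stops there, so the second line of the proposition should carry the normalization factor $p_\theta(x)$ on the $\EUBO$ side (dual to the $1/p_\theta(x)$ on the $\ELBO$ side); the unnormalized claim $\WUBO\le\EUBO$ fails in general, e.g.\ when $q_\phi(z|x)=p_\theta(z|x)$ one has $\WUBO=p_\theta(x)-1\ge\log p_\theta(x)=\EUBO$ with strict inequality unless $p_\theta(x)=1$. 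So your bookkeeping is correct and the discrepancy lies in the statement, not in your argument.
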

\vspace{-.5em}


Unfortunately, the above bounds are not very useful for practical considerations. This is because for real-world applications, it is often expected that $p(x)\ll 1$, that is to say $\ELBO<0$. As such, the Wasserstein lower bound $\WLBO$ is much worse than the $\ELBO$, aggregating the pathological geometry that we intend to fix (see SM Sec. C for more analysis). 
However, there is a silver lining in the statement: an interesting observation is that, perhaps surprisingly, the monotonicity of the $\TVO$ curve has been flipped wrt the Wasserstein geodesic (see Figure \ref{fig:wtv}).
{ This suggests that, by looking for a continuous path $\CE_{\alpha,\beta}$ parameterized by $\alpha$ which interpolates between the geometric ($\alpha=0$) and arithmetic ($\alpha=1$) means, we might be able to find a tipping point $\alpha^*$ where the monotonicity flips}. In that case, the $\TVO$ curve is flat and we are able to evaluate the exact $\log$-likelihood at any point along the thermodynamic curve.

\vspace{-5pt}
\subsection{Flattening the curve with H\"older bounds}
\vspace{-5pt}


In order to generalize the integration path, we consider the more general H\"older averaging operation. 
\begin{defn}[Weighted H\"older mean path] For $a,b \in \BR_+$ and $\beta \in [0,1]$, the weighted H\"older mean $ \CM_{\alpha}(a,b;\beta) \triangleq \left[\beta a^\alpha+(1-\beta)b^{\alpha}\right]^{\frac{1}{\alpha}}$, and for $\alpha=0$ we use $\CM_{0} = \lim_{\alpha \rightarrow 0} \CM_{\alpha}$.
\end{defn}
{See Figure S6 in the SM for examples of H\"older paths.}
We can analogously define the thermodynamic curves wrt H\"older paths, with the following statement directly generalizing the monotonicity result.

\begin{prop}
Under the weighted H\"older mean path $\tilde{\pi}_{\alpha,\beta} = \left[\beta \tilde{\pi}_1^\alpha+(1-\beta)\tilde{\pi}_0^{\alpha}\right]^{\frac{1}{\alpha}}$, the H\"older thermodynamic curve given by
\beq
\CE_{\alpha,\beta} = \mathbb{E}_{\pi_{\alpha,\beta}}\left[\frac{1}{\alpha}\frac{\tilde{\pi}_1^\alpha - \tilde{\pi}_0^\alpha}{\tilde{\pi}_{\alpha, \beta}^\alpha}\right] 
\eeq
is non-decreasing with respect to $\beta$ for $\alpha\leq 0$ and non-increasing for $\alpha\geq 1$.
\label{thm:g_mono}
\end{prop}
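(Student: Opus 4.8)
The plan is to reduce the monotonicity claim to a sign analysis of $\partial_\beta \CE_{\alpha,\beta}$, mirroring the variance-derivative argument that underlies the original $\TVO$ monotonicity result. First I would observe that, exactly as in the geometric case, the H\"older curve is an expectation of the $\beta$-derivative of the potential: writing $U_{\alpha,\beta} = \log\tilde{\pi}_{\alpha,\beta} = \frac{1}{\alpha}\log[\beta\tilde{\pi}_1^\alpha + (1-\beta)\tilde{\pi}_0^\alpha]$ and setting $g_\beta \triangleq \partial_\beta U_{\alpha,\beta} = \frac{1}{\alpha}\frac{\tilde{\pi}_1^\alpha - \tilde{\pi}_0^\alpha}{\tilde{\pi}_{\alpha,\beta}^\alpha}$, one has $\CE_{\alpha,\beta} = \EE_{\pi_{\alpha,\beta}}[g_\beta]$, which by the thermodynamic identity (\ref{eq:ti}) also equals $\partial_\beta \log Z_{\alpha,\beta}$.

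Next I would differentiate $\CE_{\alpha,\beta} = \int g_\beta\,\pi_{\alpha,\beta}\,\ud z$ in $\beta$ under the integral sign. Using $\partial_\beta \log\tilde{\pi}_{\alpha,\beta} = g_\beta$ together with $\partial_\beta\log Z_{\alpha,\beta} = \CE_{\alpha,\beta}$, the normalized density satisfies $\partial_\beta \pi_{\alpha,\beta} = \pi_{\alpha,\beta}(g_\beta - \CE_{\alpha,\beta})$, so the product rule yields the standard decomposition
\[ \partial_\beta \CE_{\alpha,\beta} = \EE_{\pi_{\alpha,\beta}}[\partial_\beta g_\beta] + \var_{\pi_{\alpha,\beta}}[g_\beta]. \]
The crucial simplification is a direct computation showing $\partial_\beta g_\beta = -\alpha\, g_\beta^2$ (differentiating the ratio $\frac{\tilde{\pi}_1^\alpha-\tilde{\pi}_0^\alpha}{\beta\tilde{\pi}_1^\alpha+(1-\beta)\tilde{\pi}_0^\alpha}$, noting $\tilde{\pi}_0,\tilde{\pi}_1$ do not depend on $\beta$). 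Substituting gives the compact formula
\[ \partial_\beta \CE_{\alpha,\beta} = (1-\alpha)\,\EE_{\pi_{\alpha,\beta}}[g_\beta^2] - \left(\EE_{\pi_{\alpha,\beta}}[g_\beta]\right)^2. \]

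The conclusion then follows from Jensen's inequality $\EE[g_\beta^2]\geq(\EE[g_\beta])^2$. For $\alpha\le 0$ we have $1-\alpha\ge 1$, so $(1-\alpha)\EE[g_\beta^2]\ge \EE[g_\beta^2]\ge(\EE[g_\beta])^2$ and the derivative is nonnegative, giving the non-decreasing claim; the limiting case $\alpha=0$, where $g_\beta = \log(\tilde{\pi}_1/\tilde{\pi}_0)$ is independent of $\beta$, recovers the original geometric-path result with derivative $\var[g_\beta]\ge 0$. For $\alpha\ge 1$ both terms are nonpositive, so the derivative is nonpositive and the curve is non-increasing. Notably, the same formula makes transparent why the regime $0<\alpha<1$ is excluded from the statement: there the two terms compete in sign, which is exactly the mechanism producing the monotonicity-flip / tipping point $\alpha^*$ discussed above.

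The main obstacle I anticipate is analytic rather than algebraic: justifying the interchange of $\partial_\beta$ with the integral and ensuring $g_\beta$ and $g_\beta^2$ are $\pi_{\alpha,\beta}$-integrable, so that $\var_{\pi_{\alpha,\beta}}[g_\beta]$ is finite and the manipulations are valid. This requires mild regularity/integrability assumptions on $\tilde{\pi}_0,\tilde{\pi}_1$ that hold uniformly in $\beta$ on compact subintervals; everything else reduces to the clean identity $\partial_\beta g_\beta = -\alpha g_\beta^2$ plus the sign check.
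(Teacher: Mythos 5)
Your proposal is correct and follows essentially the same route as the paper's proof: the same key identity $U''_{\alpha,\beta}=-\alpha\,(U'_{\alpha,\beta})^2$, the same decomposition $\partial_\beta \CE_{\alpha,\beta}=\var_{\pi_{\alpha,\beta}}[U'_{\alpha,\beta}]+\EE_{\pi_{\alpha,\beta}}[U''_{\alpha,\beta}]=(1-\alpha)\,\EE[(U')^2]-(\EE[U'])^2$, and the same sign analysis in the two regimes. The only cosmetic difference is that you package the differentiation under the integral via $\partial_\beta\pi_{\alpha,\beta}=\pi_{\alpha,\beta}(g_\beta-\CE_{\alpha,\beta})$, whereas the paper expands the quotient rule term by term.
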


While Proposition \ref{thm:g_mono} does not confirm the existence of such a conjectured tipping point when the curve becomes flat, it predicts where we might be able to find one if it exists. More specifically, such a phase transition point might exist in $\alpha \in (0,1)$ for H\"older paths. 
So motivated, we propose the following new family of H\"older variational objectives that promise to rectify the pathological geometry of $\TVO$. 

\begin{defn}[{\bf H}\"older {\bf BO}unds ($\HBO$)] 
\beq
\HBO_{\alpha} \triangleq \int_0^1 \CE_{\alpha,\beta} \ud \beta, \text{ for } \alpha \in (0, 1). 
\eeq  
\end{defn}

\begin{coro}
\label{thm:hbo}
$\HBO_{\alpha} = \log p_{\theta}(\xv)$. 
\end{coro}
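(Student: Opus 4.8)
The plan is to recognize Corollary~\ref{thm:hbo} as a direct instantiation of the thermodynamic integration identity~(\ref{eq:ti}), applied to the H\"older path rather than the geometric one. The conceptual heart is that thermodynamic integration is path-\emph{independent}: for \emph{any} admissible interpolation between the fixed endpoints $\tilde\pi_0$ and $\tilde\pi_1$, the integral of $\partial_\beta\log Z_\beta$ telescopes to $\log Z_1 - \log Z_0$ by the fundamental theorem of calculus. This is exactly the ``area under curve remains the same'' phenomenon illustrated in Figure~\ref{fig:wtv}, so the claim that $\HBO_\alpha$ is independent of $\alpha$ is expected rather than surprising.

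First I would check that the weighted H\"older mean path reuses the same endpoints that define the TVO. Setting $\beta=0$ gives $\tilde\pi_{\alpha,0} = [(1-0)\,\tilde\pi_0^\alpha]^{1/\alpha} = \tilde\pi_0 = q_\phi(z|x)$, and $\beta=1$ gives $\tilde\pi_{\alpha,1} = [\tilde\pi_1^\alpha]^{1/\alpha} = \tilde\pi_1 = p_\theta(x,z)$. Hence the boundary normalizing constants are $Z_{\alpha,0} = \int q_\phi(z|x)\ud z = 1$ and $Z_{\alpha,1} = \int p_\theta(x,z)\ud z = p_\theta(x)$, so $\log Z_{\alpha,0} = 0$ and $\log Z_{\alpha,1} = \log p_\theta(x)$, matching the geometric case.

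Next I would verify that the H\"older integrand is precisely the expected potential derivative. Writing the potential as $U_{\alpha,\beta}(z) = \log\tilde\pi_{\alpha,\beta}(z) = \frac{1}{\alpha}\log[\beta\tilde\pi_1^\alpha + (1-\beta)\tilde\pi_0^\alpha]$, a one-line differentiation gives $\partial_\beta U_{\alpha,\beta}(z) = \frac{1}{\alpha}\frac{\tilde\pi_1^\alpha - \tilde\pi_0^\alpha}{\beta\tilde\pi_1^\alpha + (1-\beta)\tilde\pi_0^\alpha} = \frac{1}{\alpha}\frac{\tilde\pi_1^\alpha - \tilde\pi_0^\alpha}{\tilde\pi_{\alpha,\beta}^\alpha}$, whose expectation under $\pi_{\alpha,\beta}$ is exactly $\CE_{\alpha,\beta}$ as defined in Proposition~\ref{thm:g_mono}. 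Substituting into~(\ref{eq:ti}) then yields $\HBO_\alpha = \int_0^1\CE_{\alpha,\beta}\ud\beta = \int_0^1 \EE_{\pi_{\alpha,\beta}}[\partial_\beta U_{\alpha,\beta}]\ud\beta = \log Z_{\alpha,1} - \log Z_{\alpha,0} = \log p_\theta(x)$ for every $\alpha\in(0,1)$.

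The only delicate step is the one already baked into~(\ref{eq:ti}): the interchange of $\partial_\beta$ and $\int\cdot\ud z$ that converts $\partial_\beta\log Z_{\alpha,\beta}$ into $\EE_{\pi_{\alpha,\beta}}[\partial_\beta U_{\alpha,\beta}]$, which I expect to be the main obstacle to a fully rigorous argument since it needs a dominated-convergence (Leibniz-rule) justification. For fixed $\alpha\in(0,1)$ this should be benign, because $\beta\mapsto\tilde\pi_{\alpha,\beta}(z)^\alpha = \beta\tilde\pi_1^\alpha + (1-\beta)\tilde\pi_0^\alpha$ is affine in $\beta$ and stays pinched between $\min(\tilde\pi_0^\alpha,\tilde\pi_1^\alpha)$ and $\max(\tilde\pi_0^\alpha,\tilde\pi_1^\alpha)$; consequently $|\partial_\beta\tilde\pi_{\alpha,\beta}|$ admits a $\beta$-uniform integrable envelope under the same moment/integrability assumptions that license the original $\alpha=0$ identity. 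I would state these conditions explicitly and otherwise inherit them verbatim from the geometric TVO case.
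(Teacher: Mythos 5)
Your proposal is correct and follows exactly the route the paper takes: the paper's entire justification is that Corollary~\ref{thm:hbo} is a direct consequence of the thermodynamic integration identity~(\ref{eq:ti}) because the H\"older path shares the endpoints $\tilde\pi_0 = q_\phi(z|x)$ and $\tilde\pi_1 = p_\theta(x,z)$, which is precisely your argument. Your added endpoint check, the verification that $\partial_\beta U_{\alpha,\beta}$ reproduces the integrand of $\CE_{\alpha,\beta}$ from Proposition~\ref{thm:g_mono}, and the remark on the Leibniz-rule interchange only make explicit what the paper leaves implicit.
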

Corollary \ref{thm:hbo} is a direct consequence of the thermodynamic integration equality (\ref{eq:ti}), since the H\"older mean path $\tilde{\pi}_{\alpha,\beta}$ connects $\pi_0 = q_{\phi}(z|x)$ and $\pi_1 = p_{\theta}(x,z)$. With slight abuse of notation, we denote 
\beq
\HBO_{\alpha} \triangleq \sum_k (\beta_{k+1} - \beta_k) \CE_{\alpha, \beta_k}
\eeq
as the empirical $\HBO$ estimator, which can be understood as approximations to the $\log$-likelihood. 

Figure \ref{fig:surface} provides a more intuitive picture via visualizing the H\"older thermodynamic curves wrt different $\alpha$ (using toy model from our Experiment section). In this example, a non-trivial choice of $\alpha$ significantly improves the pathological curvature observed in vanilla $\TVO$, with a wide range of $\alpha$ yielding relatively flat geometry. In Figure \ref{fig:bounds}, we further show comparisons of different popular variational bounds, presented in the original likelihood scale for better visualization. While our $\HBO$ consistently outperforms all its counterparts, $\TVO$ underperforms strong baselines ({\it e.g.}, $\IWELBO$, $\RVI$) in many of the regions.

\vspace{-6pt}
\subsection{Practical estimation of $\HBO$}
\vspace{-4pt}
\label{sec:imp}

{\bf Finding an appropriate $\alpha$.} The choice of $\alpha$ is crucial for the performance of $\HBO$. For a good $\alpha$, one can optimally approximate $\log p_{\theta}(\xv)$ with a minimal number of partitions, thus greatly reducing computational overhead. We consider two simple strategies to pick the best $\alpha$ from a candidate set. 
\vspace{-8pt}
\begin{itemize}
\item {\it Trial \text{$\&$} error.} Sample a few test points $\{\beta_k \in [0,1]\}$, evaluate $\CE_{\alpha, \beta}$ at all $\beta_k$ on candidate $\alpha$, choose the one with minimal gap $\hat{\alpha} = \argmin_{\alpha} \{ \max_{\beta} \{ \CE_{\alpha, \beta} \} - \min_{\beta} \{ \CE_{\alpha, \beta} \}  \}$. 
\vspace{-.5em}
\item {\it Binary search.} Assuming the monotonicity of the $\HBO$ curve holds, one can use root finding binary search to efficiently locate the optimal $\alpha$. Specifically, initialize with $0\leq\alpha_L<\alpha_R\leq1$, such that $\CE_{\alpha_L, \beta}$ and $\CE_{\alpha_R, \beta}$ are respectively monotonically increasing and decreasing wrt $\beta$. Pick $\alpha_M$ in between $\alpha_L$ and $\alpha_R$ ({\it e.g.}, mid point).  If $\CE_{\alpha_M, \beta}$ is increasing wrt $\beta$, set $\alpha_L\leftarrow \alpha_M$, otherwise $\alpha_R \leftarrow \alpha_M$. Repeat this until some stopping criteria is met ({\it e.g.}, $|\alpha_R-\alpha_L|$ or slope of $\CE_{\alpha_M, \beta}$ fall below some tolerance threshold). 
\end{itemize}
\vspace{-8pt}
The binary search approach is more efficient, but less reliable if the underlying assumption is violated. Note since all $\CE_{\alpha, \beta}$ based on finite-sample empirical estimate using ({\it i.e.}, a mini-batch of $z$ sampled from the proposal distribution), one should also properly account for the estimation variance involved. 

\vspace{-2pt}
{\bf Importance-weighted sampling of H\"older path.} To estimate $\CE_{\alpha, \beta}$ one needs to draw samples from the intermediate distributions $\tilde{\pi}_{\alpha, \beta}$ along the H\"older path. To avoid the excessive computation entailed by Markov chain Mote-Carlo (MCMC) schemes, we adopt a similar importance weighting strategy employed by the original $\TVO$. In particular, one draws $B$ samples $\{ z_i \}_{i=1}^B$ from the approximate posterior $q_{\phi}(z|x)$, and then adjusts according to the importance weights $\tilde{w}_i^{\beta} \triangleq \tilde{\pi}_{\alpha, \beta}(x, z_i) / q_{\phi}(z_i)$. After some algebraic manipulations, the importance weighted local evidence can be expressed as
\beq
\hat{\CE}_{\alpha,\beta}^{\IW} = \frac{1}{\alpha \sum_{i'} (\beta s_{i'} +1)^{1/\alpha}} \sum_i \frac{s_i}{(\beta s_i +1)^{1-1/\alpha}},
\eeq
where $s_i \triangleq \left(\frac{p_{\theta}(x,z_i)}{q_{\phi}(z_i)}\right)^\alpha - 1$. The corresponding importance-weighted $\HBO$ thus writes
\beq
\widehat{\HBO}_{\alpha}^{\IW} = \sum_k (\beta_{k+1} - \beta_k) \hat{\CE}_{\alpha,\beta_k}^{\IW}.
\vspace{-1em}
\eeq



{\bf The perturbed HBO.} While in theory we can directly simulate any H\"older path based on its definition, we might not want to do so for numerical considerations with a larger $\alpha$. This is because $\tilde{\pi}_0$ and $\tilde{\pi}_1$ typically live on very different scales, and hence a brute-force treatment can potentially lead to catastrophic numerical overflow. Instead, a more interesting regime is where $\alpha$ is close to zero, in which case both distributions are evaluated near a more comparable $\log$-scale. Inspired by the perturbation argument originally presented in \citet{bamler2017perturbative}, we consider a linear expansion near $\alpha=0$ wrt the H\"older path. As summarized by the following statement, the perturbed $\HBO$ admits a simple expression. 

\begin{prop}[Perturbed HBO] 
\label{prop:phbo} 
For a sufficiently small H\"older parameter $\delta \ll 1$, we have the following approximation for integrand $\partial_{\beta} U_{\delta,\beta}$ and sampling distribution $\tilde{\pi}_{\delta, \beta}$
\vspace{-1em}
\setlength\arraycolsep{2pt}
\beq
\resizebox{.85\hsize}{!}{$
\begin{array}{rcl}
\partial_{\beta} U_{\delta, \beta} & \approx & \log \frac{p_{\theta}(x,z)}{q_{\phi}(z|x)} + \left(\frac{1}{2} - \beta\right)\left[ \log \frac{p_{\theta}(x,z)}{q_{\phi}(z|x)} \right]^2 \delta , \\
[8pt]
\log\tilde{\pi}_{\delta, \beta} & \approx & [\beta\log\tilde{\pi}_1 +(1-\beta)\log\tilde{\pi}_0]+ \\
[5pt]
& & \frac{1}{2}\left[\beta(\log\tilde{\pi}_1)^2 +(1-\beta)(\log\tilde{\pi}_0)^2\right] \delta . 
\end{array}
$}
\eeq
\setlength\arraycolsep{5pt}
\end{prop}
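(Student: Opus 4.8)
The plan is to prove both approximations by a second-order Taylor expansion of the Hölder path in the Hölder parameter about $\alpha=0$, then truncating at first order in $\delta$. Throughout I would write $u_1 \triangleq \log\tilde\pi_1$ and $u_0 \triangleq \log\tilde\pi_0$, so that the log-ratio integrand is $\ell \triangleq u_1 - u_0 = \log\frac{p_\theta(x,z)}{q_\phi(z|x)}$. From the definition of the weighted Hölder path, the potential and its $\beta$-derivative have the closed forms $U_{\alpha,\beta} = \tfrac1\alpha\log\big(\beta e^{\alpha u_1}+(1-\beta)e^{\alpha u_0}\big)$ and $\partial_\beta U_{\alpha,\beta} = \tfrac1\alpha\cdot\frac{e^{\alpha u_1}-e^{\alpha u_0}}{\beta e^{\alpha u_1}+(1-\beta)e^{\alpha u_0}}$, the latter being exactly the integrand already displayed in Proposition \ref{thm:g_mono} (with $\tilde\pi_i^\alpha = e^{\alpha u_i}$).

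The key structural observation I would exploit is that $\beta e^{\alpha u_1}+(1-\beta)e^{\alpha u_0} = \EE[e^{\alpha X}]$ is the moment generating function, evaluated at $\alpha$, of a two-point random variable $X$ taking value $u_1$ with probability $\beta$ and $u_0$ with probability $1-\beta$. Hence $U_{\alpha,\beta}$ equals $\tfrac1\alpha$ times the associated cumulant generating function $K(\alpha)$, and the expansion $K(\alpha)=K'(0)\alpha+\tfrac12 K''(0)\alpha^2+O(\alpha^3)$ gives the density result directly: $K'(0)$ is the mean $\beta u_1+(1-\beta)u_0$ and $K''(0)$ is the second cumulant of $X$. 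Dividing by $\delta$ and keeping terms through $O(\delta)$ then yields $\log\tilde\pi_{\delta,\beta}$ as the arithmetic mean $\beta\log\tilde\pi_1+(1-\beta)\log\tilde\pi_0$ plus the stated order-$\delta$ correction built from the second moments of the log-potentials.

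For the integrand I would expand numerator and denominator separately, $e^{\alpha u_1}-e^{\alpha u_0} = \alpha\ell + \tfrac{\alpha^2}{2}(u_1^2-u_0^2)+O(\alpha^3)$ and $\beta e^{\alpha u_1}+(1-\beta)e^{\alpha u_0} = 1+\alpha m+O(\alpha^2)$ with $m\triangleq \beta u_1+(1-\beta)u_0$. Multiplying the numerator by the geometric-series expansion $1/(1+\alpha m+\cdots)=1-\alpha m+O(\alpha^2)$ and cancelling the leading $1/\alpha$, the coefficient of $\delta$ is $\tfrac12(u_1^2-u_0^2)-\ell\,m$. The decisive algebraic step is the identity $\tfrac12(u_1^2-u_0^2)-\ell\,m=(\tfrac12-\beta)\,\ell^2$, obtained by factoring $u_1^2+u_0^2-2u_1u_0=\ell^2$; substituting $\ell=\log\frac{p_\theta(x,z)}{q_\phi(z|x)}$ recovers the first displayed approximation exactly.

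The main obstacle is not any single expansion but the bookkeeping of the $O(\alpha^2)$ cross terms and verifying the clean cancellation into $(\tfrac12-\beta)\,\ell^2$: it is easy to drop the $-\ell\,m$ contribution coming from the denominator and land on a spurious coefficient. A secondary, more technical point is justifying that these pointwise expansions remain valid once integrated against $\pi_{\delta,\beta}$ (as they are consumed inside $\CE_{\delta,\beta}$), which requires uniform control of the remainder, e.g. a dominated-convergence or boundedness argument on $\ell$ guaranteeing the $O(\delta^2)$ error survives the expectation. I would therefore state the mild integrability hypothesis on $\log\frac{p_\theta(x,z)}{q_\phi(z|x)}$ needed to upgrade the ``$\approx$'' to a controlled first-order expansion.
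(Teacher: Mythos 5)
Your treatment of the integrand is correct and, if anything, cleaner than the paper's: the paper differentiates $U^{\prime}_{\alpha,\beta}$ in $\alpha$ and evaluates three separate limits $T_1,T_2,T_3$ by repeated L'H\^opital (the first two of which diverge individually, like $\mp\ell/\alpha$ in your notation, and only cancel in the sum), whereas your direct series expansion of numerator and denominator keeps every term finite. Your key identity $\tfrac12(u_1^2-u_0^2)-\ell\,m=(\tfrac12-\beta)\ell^2$ checks out and reproduces the first display exactly.

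The second display is where your proposal has a genuine problem --- not because your method is wrong, but because you assert it yields the stated formula when it does not. Writing $\log\tilde{\pi}_{\alpha,\beta}=K(\alpha)/\alpha$ with $K$ the cumulant generating function of the two-point variable $X$, the order-$\delta$ coefficient is $\tfrac12K''(0)$, i.e.\ half the \emph{variance} $\tfrac12\big[\beta u_1^2+(1-\beta)u_0^2-m^2\big]=\tfrac12\beta(1-\beta)\ell^2$, whereas the proposition states half the \emph{raw second moment} $\tfrac12\big[\beta u_1^2+(1-\beta)u_0^2\big]$. The difference $\tfrac12 m^2$ depends on $z$ and cannot be absorbed into the normalizing constant. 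A sanity check at $\beta\in\{0,1\}$, where $\tilde{\pi}_{\alpha,\beta}$ equals $\tilde{\pi}_0$ or $\tilde{\pi}_1$ exactly and hence has no $\alpha$-dependence, confirms the first-order term must vanish there: the variance does, the raw second moment does not. So carrying your cumulant expansion through honestly contradicts the second display rather than proving it. (The paper's own derivation of that display applies L'H\^opital a second time to $I_1$ and $I_2$ whose numerators tend to $m\neq 0$, which is how the spurious second-moment answer arises; your route is the reliable one, but you must either flag the discrepancy or restate the correction term as $\tfrac12\beta(1-\beta)\big[\log\tfrac{\tilde{\pi}_1}{\tilde{\pi}_0}\big]^2\delta$.)
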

\vspace{-8pt}

Proposition \ref{prop:phbo} allows us to calibrate vanilla $\TVO$ with first-order corrective terms to approximate $\HBO$, which hopefully helps to close the gap between the lower and upper bounds of a $\TVO$ curve, thereby ``flattening'' curvature for improved performance. 

\vspace{-6pt}
\section{Related Work}
\vspace{-4pt}


\begin{figure*}[!ht]
    \centering
        \begin{minipage}{.25\textwidth}
        \centering
        \includegraphics[width=1.\textwidth]{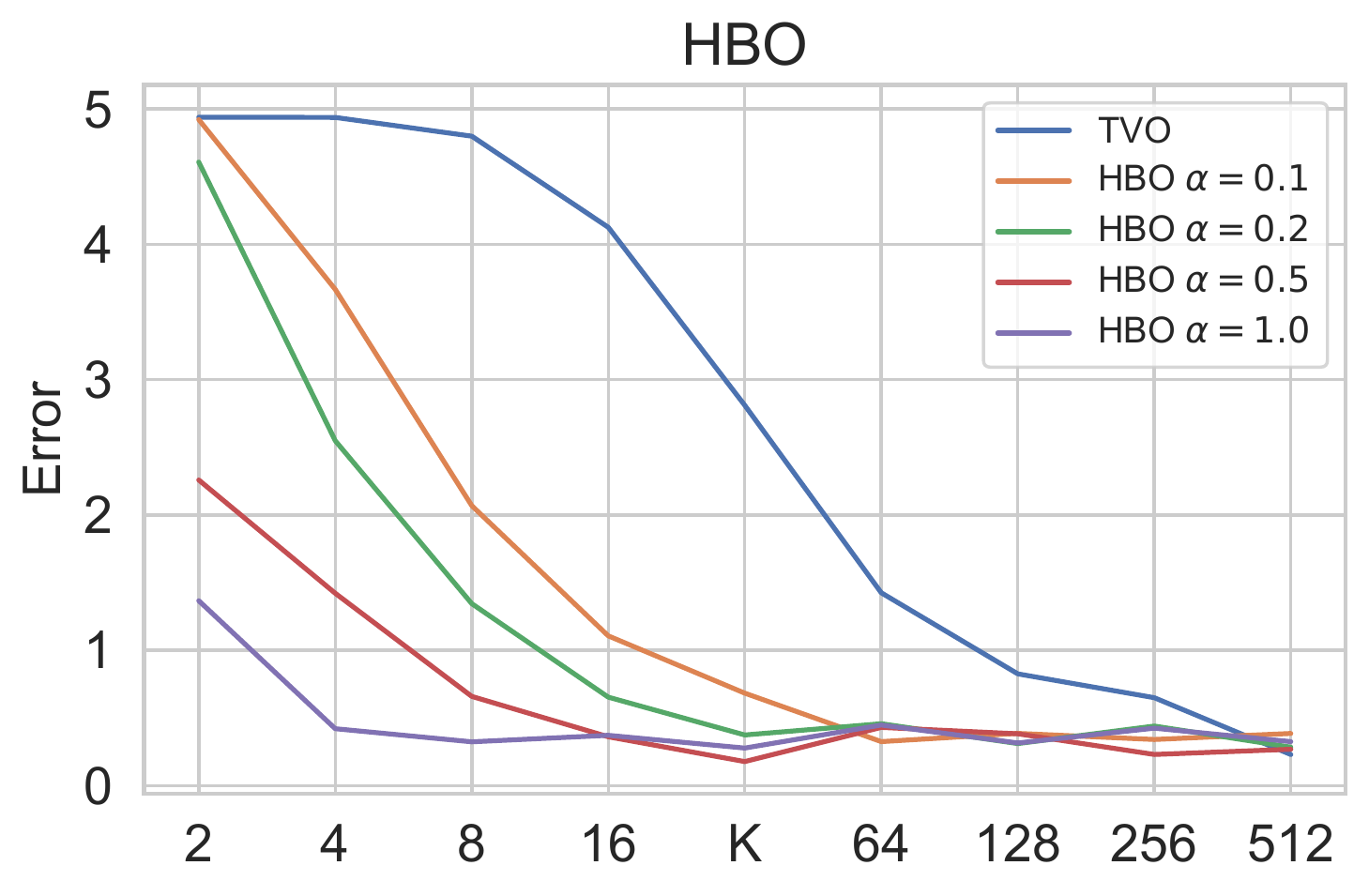}
        \vspace{-2em}
        \caption{Approximation error for $\TVO$ \& $\HBO$ with different partition size $K$.\label{fig:hbo_k_err}}
    \end{minipage}%
    \hspace{8pt}
    \begin{minipage}{.45\textwidth}
        \centering
        \vspace{-.5em}
        \includegraphics[width=0.34\textheight]{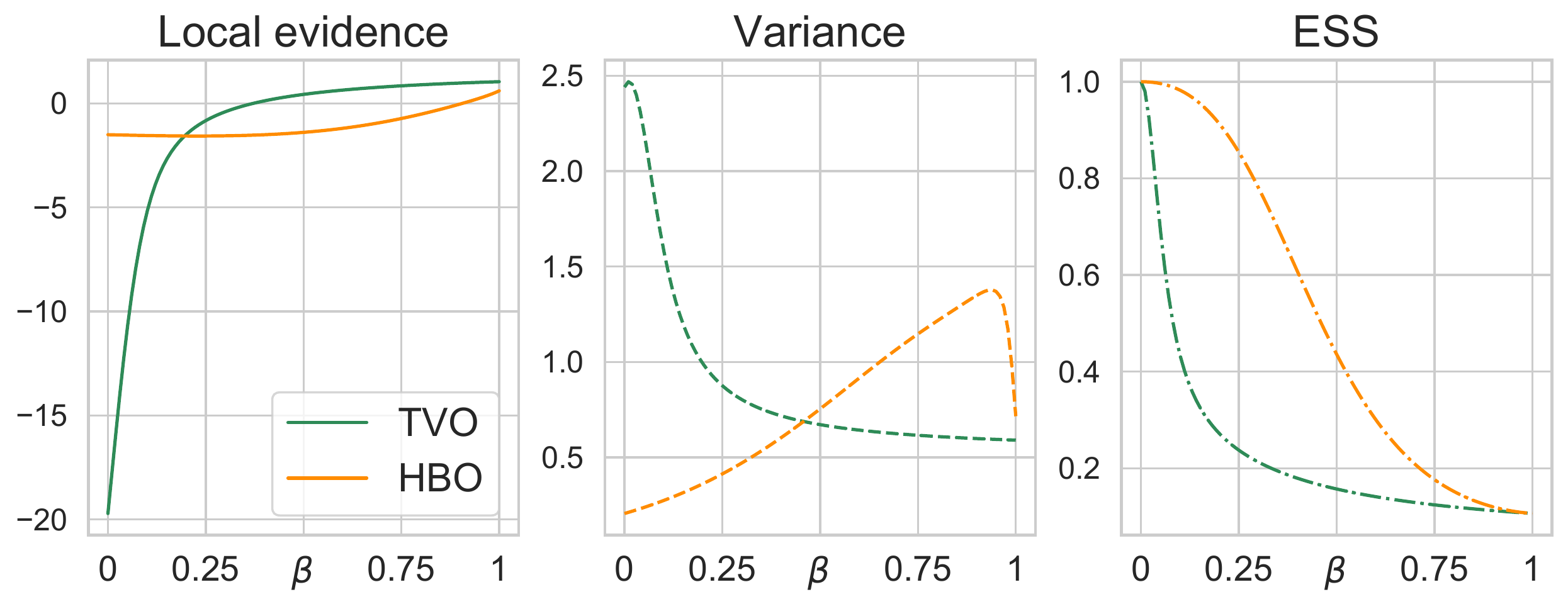}
        \vspace{-1.5em}
        \caption{Evidence curve estimation variance and effective-sample size.}
        \label{fig:var_ess}
    \end{minipage}%
    \hspace{8pt}
        \begin{minipage}{.25\textwidth}
        \centering
	\includegraphics[width=1.\textwidth]{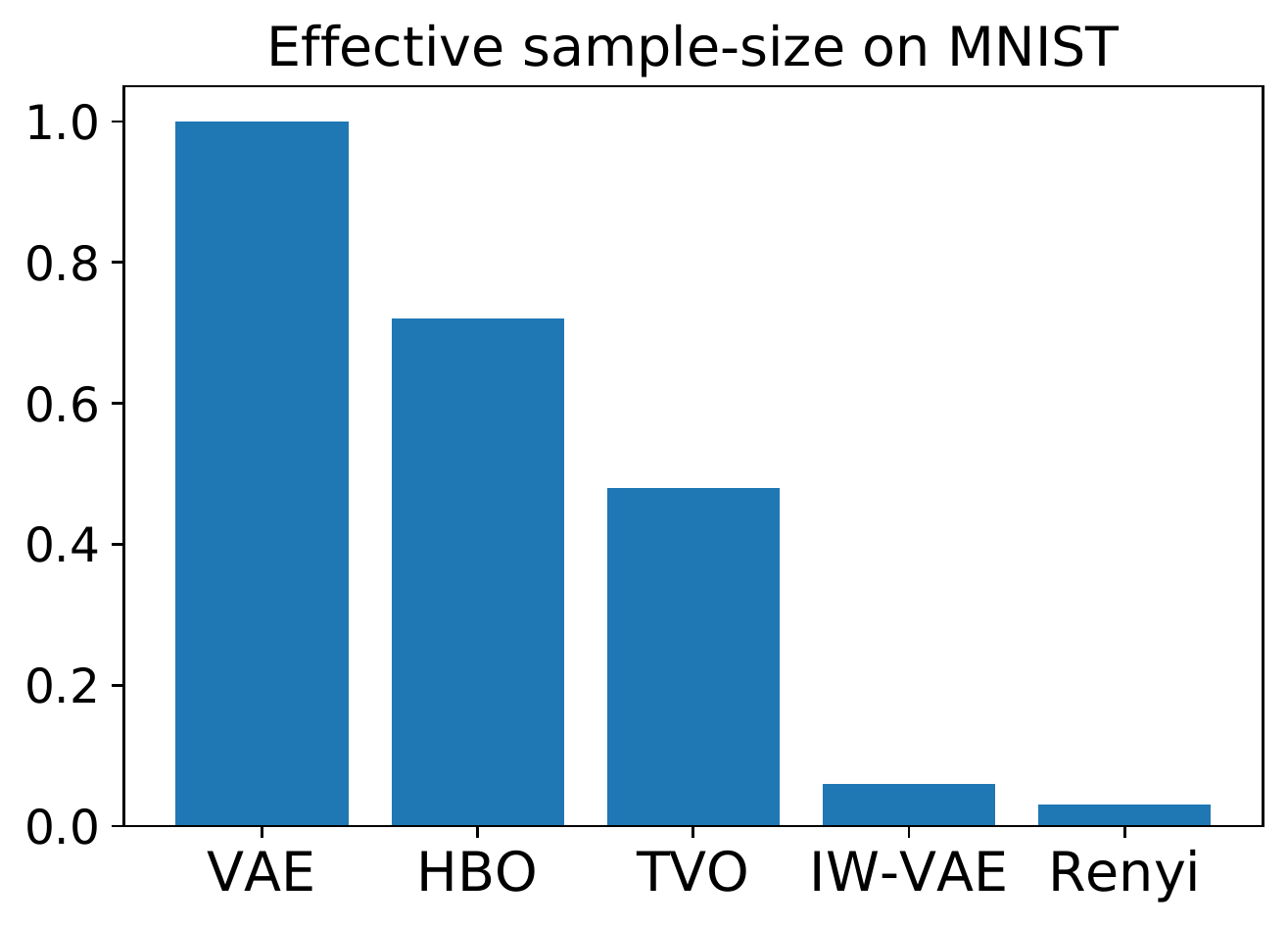}
        \vspace{-2em}
        \caption{Effective sample-size of different VI objectives on MNIST. \label{fig:ess_cmp}}
    \end{minipage}%
\vspace{-1.5em}
\end{figure*}


{\bf Trade-offs for tightening the variational bounds} have drawn extensive discussions in recent literature. While  \citet{rainforth2017tighter} has argued that tighter bounds may inadvertently hurt inference, we show experimentally that the proposed $\HBO$ transcends this trade-off by providing arguments based on both reduced variance and improved effective sample-size, see our experiments for details. In a similar spirit to \citet{tao2018variational}, we have leveraged geometric perspectives to improve VI bounds. Closely related is the work of \citet{brekelmans2020all}, where the author(s) analyzed the variational gap of $\TVO$ in generalized divergence metrics and proposed a smart partition scheme for reduce the estimation gap. \citet{habeck2017model} also discussed issues related to the construction of $\TVO$ from physics perspectives. It is possible to further extend the scope to intractable approximate posteriors using adversarial schemes \citep{mescheder2017adversarial, dai2018coupled}. 

\vspace{-2pt}
{\bf Energy-based modeling} has close ties to $\TVO$: ($i$) the marginal likelihood is the partition function of an energy model defined by the joint density; and ($ii$) $\TVO$-type procedures require sampling from intermediate distributions on the thermodynamic path defined by the energy models. Classical energy-based approaches have focused on estimating the parameters of an unnormalized energy model: MCMC-MLE \citep{geyer1991markov, geyer1994convergence}, {\it contrastive divergence} \citep{hinton2002training, du2019implicit, nijkamp2019learning}, {\it score matching} \citep{hyvarinen2005estimation}, {\it noise contrastive estimation}  \citet{gutmann2010noise, gutmann2012noise, gutmann2012bregman}. These schemes often lack a generative perspective and scale less competitively to data complexity. 
Recently, energy-based perspectives have gain renewed interests in probabilistic learning, Of particular interests are the Stein variational inference \citep{liu2016stein, pu2017vae} and Fenchel mini-max learning procedures \citep{dai2018coupled, tao2019fenchel}. See also  \citet{brekelmans2020all} for a theoretical interpretation of $\TVO$ variational gaps using the energy-based exponential family distributions. 

{\bf Simulated annealing and thermodynamic integration} have deep connections \citep{neal2001annealed, frenkel2001understanding}, and both have been used to approximate the intractable partition function \citep{ogata1989monte}. Such computational strategies are deeply rooted in non-equilibrium statistical physics \citep{habeck2012evaluation, crooks1999entropy, habeck2012evaluation, habeck2017model}, and have been used for evaluating the data likelihoods for complex models \citep{wu2017quantitative}. While the geometric mean path is dominantly popular,  alternative thermodynamic integration paths have also been considered in prior literature:  \citet{gelman1998simulating} derived a few optimal integration paths for special cases, and \citet{grosse2013annealing} explored the moment averaged path in the exponential family. \citet{brekelmans2020all} used the moment averaged path to motivate a novel non-uniform partition strategy for $\TVO$ that adapts to the shape of the thermodynamic curve. Our $\HBO$ instead finds a good thermodynamic path that properly ``bends'' the curve. A similar perspective was adopted by concurrent work \citep{masrani2021q} which has different focuses (see SM for clarifications). 


\vspace{-7pt}
\section{Experiments}
\vspace{-3pt}

To validate our HBO framework, we consider a wide range of experiments, with synthetic \& real-world datasets and state-of-the-art variational schemes. All experiments are implemented with PyTorch \footnote{Our code: \url{https://github.com/author_name/HBO}} and executed on a single NVIDIA TITAN Xp GPU. Details of the experimental setup and extended results are provided in the SM Sec. D. Note our experiments focus on validating theoretical aspects of the HBO framework, the establishment of new state-of-the-art results is beyond the scope of this study.


\vspace{-6pt}
\subsection{Synthetic examples: from $\TVO$ to $\HBO$}
\vspace{-3pt}


To expose the limitations of existing solutions and demonstrate the appealing features of $\HBO$ with a simple example, we synthesize our toy data from $x \sim \CN(\sin(z), 10^{-2}), z \sim \CN(0, 1)$, and fixed our approximate posterior to be $q(z|x) = \CN(0, 1.5^2)$. Unless otherwise specified, we employ the uniform partition strategy for both $\TVO$ and $\HBO$. Observations from real data are characteristically similar to this toy. 


\begin{figure*}[t!]
\begin{minipage}{1.05\columnwidth}
\begin{sc}
\small
\vspace{-.5em}
\captionof{table}{ MNIST \& Omniglot
 results {\small ($\uparrow$ is better)} \label{tab:mnist}}
\vspace{-5pt}
 \resizebox{1.\columnwidth}{!}{
\begin{tabular}{ccc@{\hskip 1em}cc}
\toprule
& \multicolumn{2}{c}{MNIST} & \multicolumn{2}{c}{Omniglot} \\
Test & $\ELBO \uparrow$ & $\IWELBO \uparrow$ & $\ELBO \uparrow$ & $\IWELBO \uparrow$ \\
\midrule
$\ELBO$ & -94.00 & -89.34 & -117.81 & -108.12 \\
$\IWELBO$ & -96.65 & -88.31 & -118.78 & -108.13 \\
R\'enyi &  -94.82 & -88.55 & -117.68 & -107.89 \\
$\TVO$ & -96.30 & -88.27 & -117.60 & -107.88\\
[2pt]
$\HBO$ & {\bf -93.12} & {\bf -87.82} & {\bf -116.86} & {\bf -107.70} \\
\bottomrule
\end{tabular}
}
\end{sc}
\end{minipage}
\begin{minipage}{1.0\columnwidth}
\includegraphics[width=1.\textwidth]{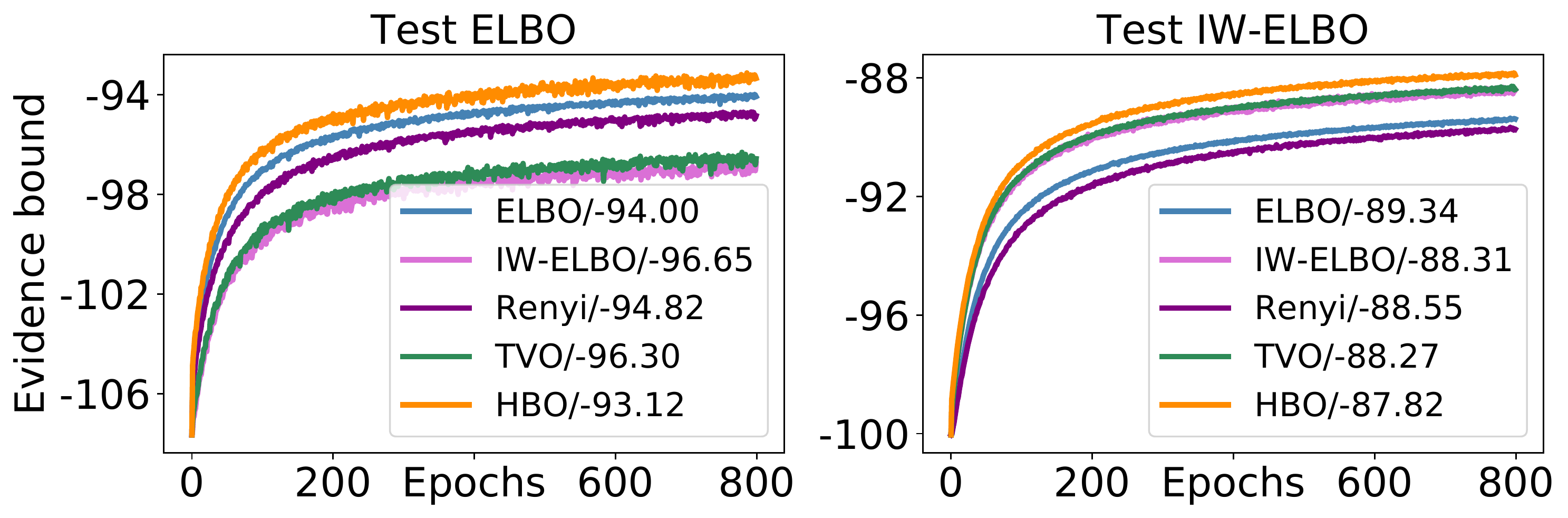}
        \vspace{-2.3em}
        \caption{MNIST results on test data.}
        \label{fig:mnist_training}
\end{minipage}
\end{figure*}

\vspace{-2pt}
{\bf Bound sharpness} is the primary concern of this study. We compare the tightness of $\TVO$, $\HBO$ and other competing variational bounds under various parameter configurations (partition number $K$, batch-size $B$, curvature $\alpha$, etc.). In Figure \ref{fig:bounds}, we visualize the different popular bounds with matched configurations ($K=100, B=10, \alpha=0.8$), along with the true likelihood $p(x)$. We see $\HBO$ consistently provides a tighter approximation relative to its counterparts. 
{In Figure \ref{fig:hbo_k_err} we plot the approximation error $e(\hat{p}) = \int p(x) | p(x) - \hat{p}(x) | \ud x$ under different practical computational budgets, showing $\HBO$ significantly improves over $\TVO$ given the same budget.} Note however, a sharper bound alone does not necessarily imply a better objective for variational learning \citep{rainforth2017tighter}. In the next experiment we examine other aspects that matter for a good variational objective. 

\vspace{-2pt}
{\bf Estimation variance and effective sample size} are two factors that greatly affect the learning efficiency of VI. Low-variance estimators generally directly promote fast convergence  \citep{mnih2016variational, jang2016categorical, roeder2017sticking, tucker2018doubly}; and the {\it effective sample-size} (ESS), or sample efficiency, describes on average how much does an individual sample contribute \citep{liu2001monte}. For a weighted representation, the normalized ESS is defined as $\ESS \triangleq \frac{\sum_i \tilde{w}_i}{m \sum_i \tilde{w}_i^2} \in [\frac{1}{m},1]$, where $\{\tilde{w}_i\}$ denote the unnormalized weights and $m$ is the original sample size. For a small $\ESS$, only a small fraction of the samples are contributing; while for  $\ESS=1$, each sample contributes equally.


Figure \ref{fig:var_ess} compares the estimation variance and ESS along the thermodynamic path for $\TVO$ and its much flatter counterpart $\HBO$. An interesting observation is that the variance shoots up when thermodynamic curves encounter a higher curvature. This partly explains why the original $\TVO$ is less competitive, as the partition points are mostly placed in the high-variance regions (small $\beta$).  While the sample efficiency achieves its maximum at the $\ELBO$ ($\beta=0$), it undergoes a sharp drop as we move along the $\TVO$ curve. $\HBO$ shows more robustness against the deterioration in sample efficiency and a much better variance profile in the low-temperature regime. In Figure \ref{fig:ess_cmp}, we compare averaged ESS of difference VI bounds on the real-world \texttt{MNIST} data. Our finding suggests smarter partition strategies should account for both variance and $\ESS$ to strike a better deal for the bias-variance trade-off in the design of empirical estimators.

\begin{figure*}[!ht]
    \centering
        \begin{minipage}{.68\textwidth}
        \centering
        \includegraphics[width=\textwidth]{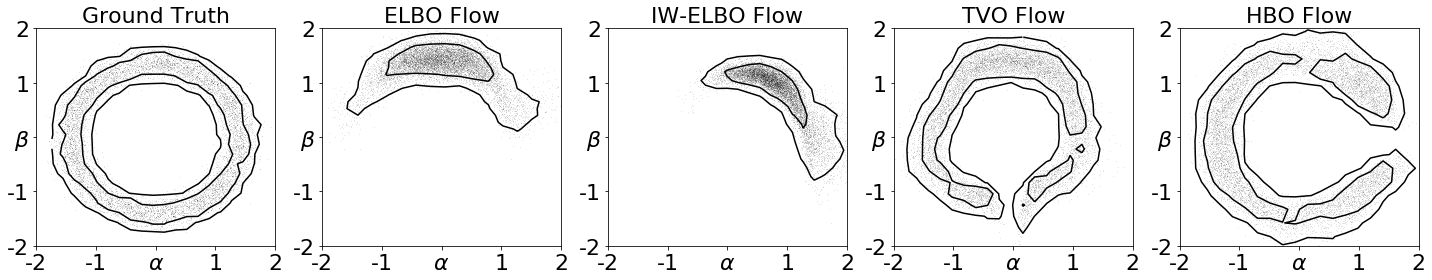}
    \label{fig:br_loss}
    \vspace{-1.8em}
    \caption{Posterior approximation with masked auto-regressive flows. \label{fig:vi_approx}}
    \end{minipage}%
    \hspace{5pt}
    \begin{minipage}{.3\textwidth}
        \centering
        \vspace{-.5em}
        \includegraphics[width=.9\textwidth]{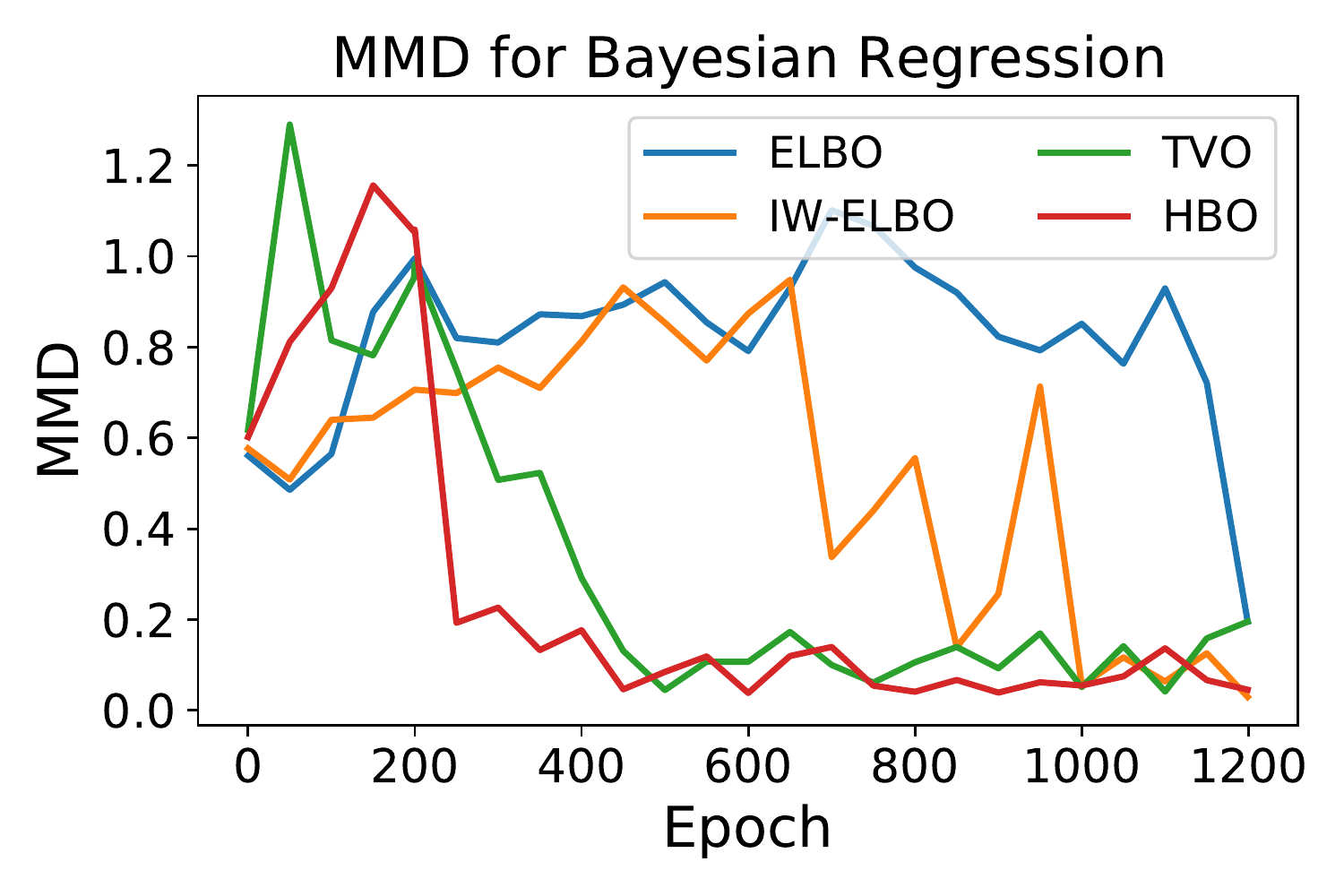}
    \label{fig:br_loss}
    \vspace{-1.5em}
    \caption{Posterior convergence for different bounds. \label{fig:mmd}}
    \end{minipage}%
\vspace{-1.em}
\end{figure*}


{\bf Matching complex posterior distributions.} We hypothesized that thermodynamic schemes did not outperform regular objectives because they only test simple posterior approximations. As such, we consider the more challenging case: $y = \sqrt{z_1^2+z_2^2}+\xi, z_1,z_2 \sim \CN(0,1), \xi \sim \CN(0,0.1^2)$,
and the goal is to infer pair $(z_1, z_2)$ given an observation $y=1$, whose posterior will be highly nonlinear (see Figure \ref{fig:vi_approx}). To model such complex posterior distribution, we use the {\it masked auto-regression flow} (MAF) \citep{papamakarios2017masked} to model the approximate posterior. We observe that neither $\ELBO$ nor $\IWELBO$ converged (Figure \ref{fig:vi_approx}). Both thermodynamic schemes give reasonable approximations, but $\HBO$ converges slightly faster and to a better solution. 

{\bf Bayesian regression.} We also benchmark $\HBO$'s performance against other bounds using the Bayesian regression model: $Y = \alpha + \beta \tilde{X} + \epsilon, \quad X = \tilde{X} +\zeta, p(\alpha,\beta) \propto (1+\beta^2)^{-3/2}, p(\sigma) \propto \frac{1}{\sigma}$ (details in SM Sec. E). We examine the convergence of different VI criteria quantitatively using the {\it maximal mean discrepancy} (MMD) metric \citep{gretton2012kernel} to evaluate the similarity between the ground-truth posterior (computed from the  \texttt{emcee} package) to the variational approximations. In Figure \ref{fig:mmd}, we see $\HBO$ delivers the fastest convergence to ground-truth, followed by $\TVO$ and then $\IWELBO$. Vanilla $\ELBO$ struggles the most. This is consistent with the theoretical predictions of the tightness of the bounds.

\vspace{-8pt}
\subsection{Real-world data}
\vspace{-6pt}

To demonstrate HBO's ability to model real-world complex distributions, we consider analyzing the \texttt{MNIST} data in the main text, with details on implementation along with analyses for other popular benchmark datasets deferred to the SM Sec. D.


\vspace{-2pt}
{\bf Approximation quality.} Two aspects of VI approximation are of particular interest: ($i$) inference: the quality of the approximate posterior $q_{\phi}(\zv|\xv)$; and ($ii$) model: the quality of the likelihood approximation $\log p_{\theta}(x)$. ($i$) is particularly important, as many applications leverage VI as a principled probabilistic approach for representation extraction, which seeks high-quality encoders $q_{\phi}(\zv|\xv)$. 

These two aspects can be respectively assessed through the $\ELBO$ and $\IWELBO$ bounds, with the latter using large importance samples to accurately approximate the marginal $\log$-likelihood. In Figure \ref{fig:mnist_training}, we plot the evolution trajectories of respective bounds on the test set for the models trained with different variational objectives, with the numbers summarized in Table \ref{tab:mnist}. Consistent with the analysis from \citet{rainforth2017tighter}, $\ELBO$ trains better encoder compared with $\IWELBO$ and R\'enyi, but does worse in likelihood learning. Our new thermodynamic $\HBO$ scheme is immune to this tension, and performs better than competing solutions in both metrics. 

\vspace{-2pt}
{\bf $\alpha$-tuning, ablation and additional analyses.} Choosing the right $\alpha$ is crucial for $\HBO$. We applied the simple heuristic search procedure described in Section \ref{sec:imp}, with $\alpha$ updated each epoch. It brings neglectable extra computations, while substantially improves performance (see ablation in the SM Sec. F). This distincts from R\'enyi and other parameterized variational objectives, where there is no good heuristics for parameter tuning. Limited by space, we provide extended ablations, analyses and results in the SM Sec. E-F, further assessing $\HBO$'s inference and modeling properties, along with more real-world applications for image generation, language modeling and Bayesian regressions.

\vspace{-8pt}
\section{Conclusions}
\vspace{-6pt}
This study provides a comprehensive discussion of the recently introduced thermodynamic variational schemes. In particular, we elucidate the role of thermodynamic variational objectives in the larger picture of the modern VI literature. Motivated from a geometric argument, we present $\HBO$, a novel generalization of thermodynamic variational bound that further improves $\TVO$. Via seeking an alternative integration path, $\HBO$ flattens the pathological curve of local evidence, yielding a tighter bound. We also cover important topics such as automated parameter tuning. Empirical evidence verified that $\HBO$ enjoys low variance and high effective-sample size, leading to improved performance.

\clearpage

\bibliography{hbo}

\newpage

\appendix

\renewcommand{\thetable}{S\arabic{table}}
\renewcommand{\thefigure}{S\arabic{figure}}
\renewcommand{\thealgorithm}{S\arabic{algorithm}}
\renewcommand{\thethm}{S\arabic{thm}}

\setcounter{table}{0}
\setcounter{figure}{0}
\setcounter{algorithm}{0}
\setcounter{thm}{0}

\onecolumn
\addcontentsline{toc}{section}{Appendix} 
\part{Appendix} 
\parttoc 
\newpage

\section{Extended Results and Discussions}
\subsection{Statement}
The author(s) would like to note that this work was originally prepared back in late 2019 titled {\it ``Flattening The Curve: Variational Inference with H\"older Bound''}. When it was first submitted for review in early 2020, the global pandemic hits. The reviewers had concerns the phrase ``flattening the curve'' may be considered implicitly referencing the on-going raging pandemic and would inadvertently jog sorrow memories. This was not the case since this study was complete work before COVID (8-page main text with 11-page Appendix, excluding refs). That said, the author(s) understood this concern and decided to change the title, withdraw submission and wait until COVID pandemic recedes to resubmit this work. The earliest appearance of other work with a similar idea appear only in late 2020, and only as a 4-pager workshop submission with only toy experiments. 

\subsection{When do TVO/HBO bounds work better?}

Despite their elegant formulation, the family of thermodynamic variational objectives, such as the original $\TVO$ and the proposed $\HBO$, has been critically challenged for its practical utility. As the original $\TVO$ paper has observed \citep{masrani2019the}, $\TVO$ offers similar performance relative to $\IWELBO$. We devote this section to clarify why people fail to benefit from the supposedly-exact $\TVO$-type bounds, and provide concrete examples where $\TVO$s learns efficiently while other more standard variational bounds struggle.

In the main text, we have provided arguments based on estimation variance and effective sample size to support $\HBO$ and $\TVO$. However, these two alone can not guarantee $\TVO$s work better than the other alternatives. Here we want to offer some additional insights. 

We argue the key lies in the gap between: ($a$) the complexity of posterior distribution $p_{\theta}(z|x)$; ($b$) the expressive power of the posterior approximator $q_{\phi}(z|x)$. 
Our observation is that, $\TVO$-type variational objectives are mostly handicapped by the expressive power of $q_{\phi}(z|x)$, which typically takes the convenient choice of mean-field Gaussian distributions in practice. When the ground-truth posterior distribution shows apparent deviations from Gaussianity, the Gaussian posterior approximations learned by alternative variational bounds are not characteristically different.  

To make this point clear, let us look at the Bayesian inference example described in Sec \ref{sec:bayes_inf}. In this example, the true posterior is approximately a unit circle (Figure \ref{fig:vi_approx}). With Gaussian approximation, the estimated parameter distribution only covers a corner of the circle. Of course one can conveniently blame this on a bad choice of $q_{\phi}(z|x)$, and thinks the true-posterior can be recovered by using a more flexible posterior, say {\it normalizing flows} \citep{rezende2015variational}. Unfortunately, that is not the case. In Figure \ref{fig:flow_elbo} and Figure \ref{fig:flow_iwelbo}, we show that even one adopts a highly-expressive non-parametric normalizing flow (see Sec \ref{sec:nf}), using $\ELBO$ and $\IWELBO$ objectives do not adequately approximate the true posterior. Encouragingly, we see good convergence to the ground-truth for $\TVO$-type objectives (Figure \ref{fig:flow_tvo} and Figure \ref{fig:flow_hbo}). Between the two, $\HBO$ converges a little bit faster. In Sec \ref{sec:more_exp} we will further confirm these observations quantitatively using other distributional metric. Based on experiment evidence, this better convergence might result from the mode-covering behavior of $\HBO$ (see below). 

\begin{figure}[t!]
	\centering
	\includegraphics[width=.85\textwidth]{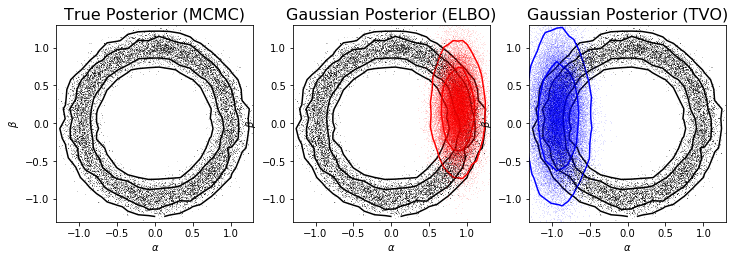}
	\vspace{-.5em}
	\caption{True posterior (left) and Gaussian variational approximations (through $\ELBO$ and $\TVO$).}
	\label{fig:vi_approx}
\end{figure}

\begin{figure}[t!]
	\centering
	\includegraphics[width=.7\textwidth]{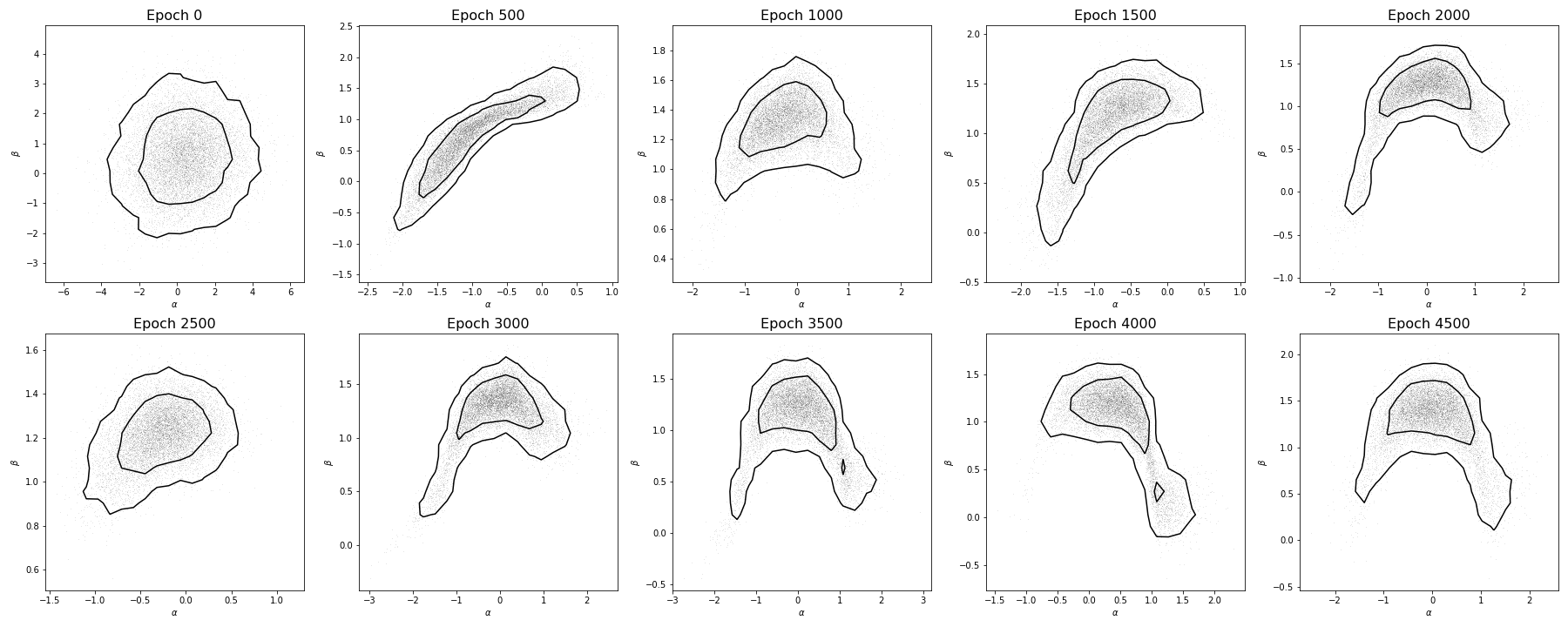}
	\vspace{-.5em}
	\caption{Flow-posterior trained with $\ELBO$,  {\color{red}\bf unable to converge}.}
	\label{fig:flow_elbo}
	\vspace{.5em}
	\includegraphics[width=.7\textwidth]{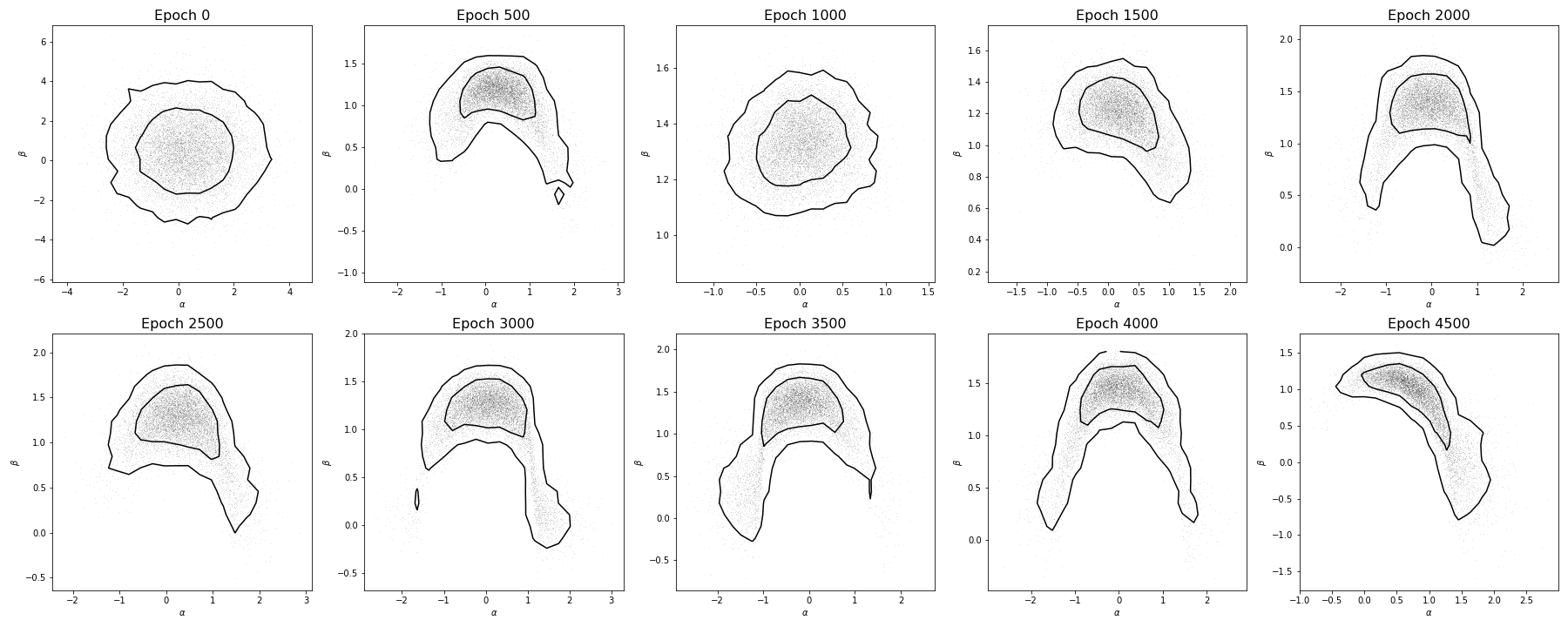}
	\vspace{-.5em}
	\caption{Flow-posterior trained with $\IWELBO(S=5)$, {\color{red}\bf unable to converge}.}
	\label{fig:flow_iwelbo}
	\vspace{.5em}
	\includegraphics[width=.7\textwidth]{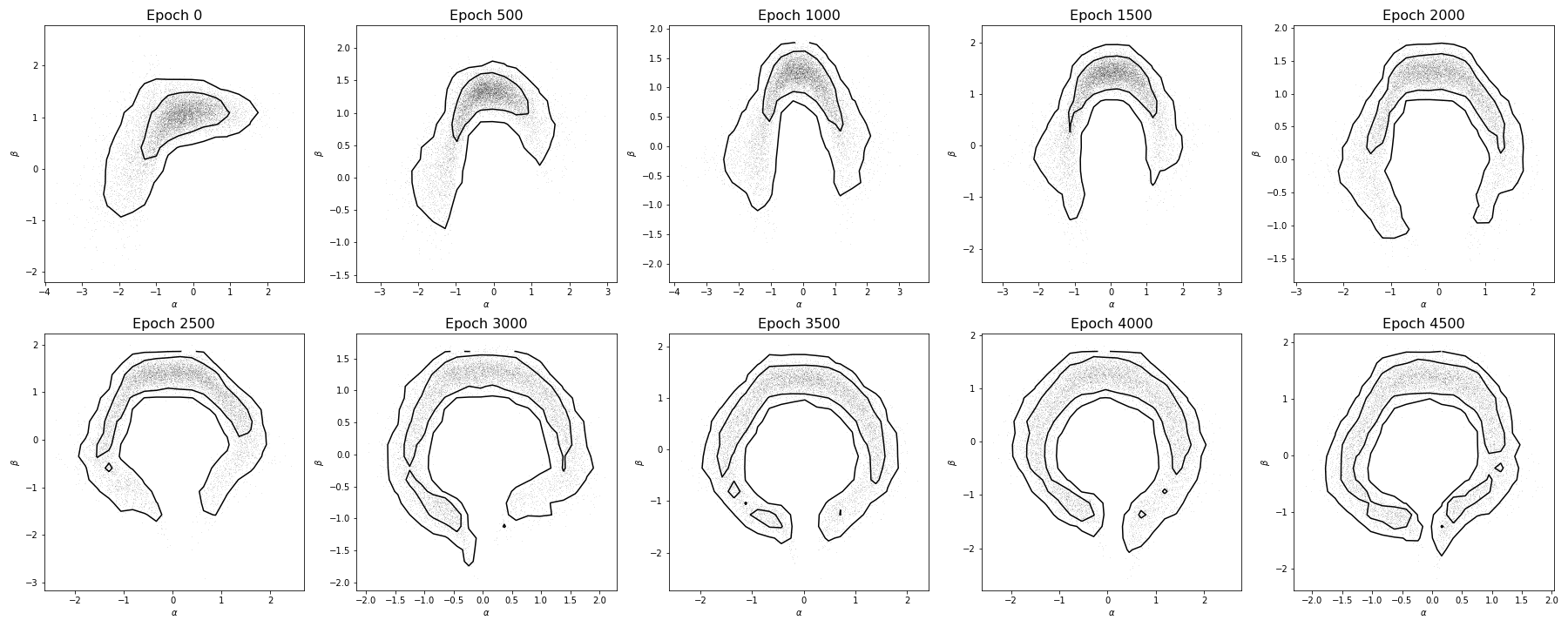}
	\vspace{-.5em}
	\caption{Flow-posterior trained with $\TVO(S=5,K=5)$, {\color{olive}\bf converge}.}
	\label{fig:flow_tvo}
	\vspace{.5em}
	\includegraphics[width=.7\textwidth]{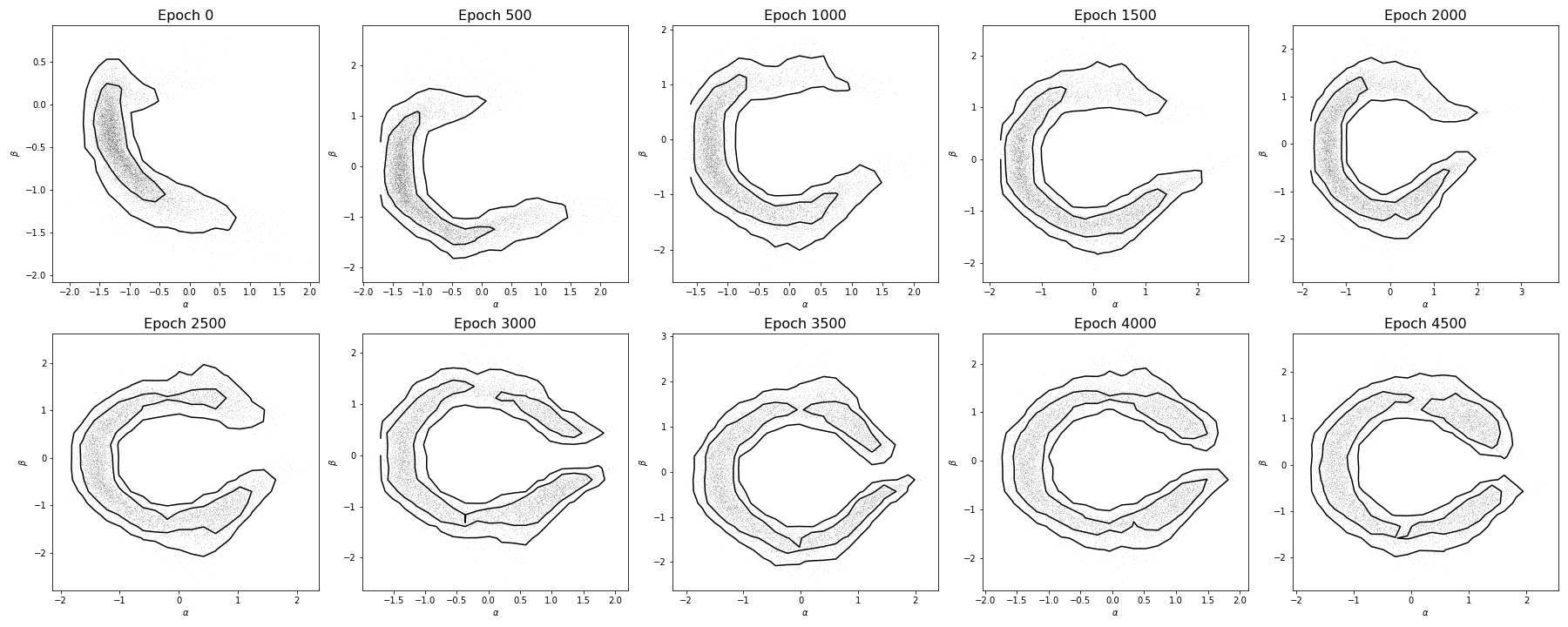}
	\vspace{-.5em}
	\caption{Flow-posterior trained with $\HBO(\alpha=10^{-3},S=5,K=5)$, {\color{olive}\bf converge}.}
	\label{fig:flow_hbo}
\end{figure}


\subsection{Mode covering behavior}

For Bayesian inference, one might be interested in the mode covering behavior of an estimator: whether the estimated posterior is condensed in the high-density region of the true posterior ({\it i.e.}, mode-seeking), or it covers up the support of the true posterior ({\it i.e.}, mode-covering). Different applications might desire different mode covering behaviors \citep{li2016renyi, dieng2017chi}. Our experiments seem to suggest overall $\HBO$ favors covering the true posterior distribution (see for instance, Figure \ref{fig:mode-covering}). 

\clearpage

\subsection{More related work}

\paragraph{Likelihood estimation} has been a long-standing challenge for both statistical and machine learning communities.  Classical solutions, such as kernel density estimation \cite{parzen1962estimation} and mixture models \cite{lindsay1995mixture}, does not scale-well with the dimensionality and complexity of modern datasets, such as images and natural languages. For machine learning applications, usually there are a few additional expectations for a ``good'' likelihood estimation procedure: it is {\it generative}, in the sense that samples can be easily drawn from the learned model \cite{mohamed2016learning, nowozin2016f}; it is {\it recapitulative}, such that it summarizes the important features of the data that can be repurposed \cite{dumoulin2016adversarially}. 

Recent advances focused on both modeling flexibility and computational scalability of likelihood-based learning procedures, and can be broadly categorized into {\it exact-likelihood} models and {\it approximate inference} models. The exact-likelihood models progressively build up complex distributions via stacking simple transformations with easy-to-compute Jacobians \cite{dinh2016density}, and thus commonly being referenced to as {\it flow}-based models \cite{rezende2015variational, kingma2016improving, kingma2018glow}. While proven quite powerful, they are excessively resource demanding and often requires dedicated engineering efforts for large-scale problems \cite{oord2017parallel}, and consequently limiting their direct application in practical terms. Note that flow-based procedures have been successfully applied to improve the expressiveness of the approximate posterior in VI, and marry it to our $\HBO$ may yield additional gains.

\paragraph{Approximate inference} models, like their name suggest, seek to efficiently optimize cheap approximations to the likelihood, which includes procedures such as {\it expectation maximization} (EM) \cite{dempster1977maximum}, {\it expectation propagation} (EP) \cite{minka2013expectation} and {\it variational inference} (VI) (see \cite{hernandez2016black} for a unifying perspective). In recent years, VI, more commonly known under the name {\it variational auto-encoder} (VAE) in the machine learning context \cite{kingma2013auto}, gained its popularity due to its easy implementation, intuitive interpretation, strong performance and versatility in its presentations. As discussed in the main text, ($i$) tightening the variational gap and $(ii)$ reducing variance construct two main research topics in VI. As we have shown, via properly tuning the geometry parameter $\alpha$, the proposed $\HBO$ improves both aspects: it allows sharp approximation with low computational budget \footnote{The savings would be tremendous if one need to sample exactly from the intermediate distribution rather than appealing to importance reweighting.}, and it empirically shows low variance and high sample efficiency. 


Note that variants of VI which does not necessarily conform to the likelihood perspective, such as $\beta$-VAE and adversarial auto-encoder (AAE), have been proven most useful in practice (assessed using domain-specific metrics in addition to the likelihood). These attempts can be sometimes interpreted as empirical adjustments to correct for the modeling bias due to misspecification. How, and will the integration of $\HBO$ perspectives lend additional benefits provide an interesting topic for future investigation.  

Close to $\HBO$ is the work of GLBO \cite{tao2018variational}, which shared a number of motivations. GLBO also sets off its discussion from a geometrical argument. Unlike $\HBO$, GLBO decomposes variational objectives into a composition of convex map and its inverse, interlaced with the expectation of the approximate posterior. A keen observation of GLBO is that, via altering the geometry of the convex transform, making it ``flatter'', results in tighter bounds, at the price of a higher variance. Compared to GLBO, our $\HBO$ considers flattening in the thermodynamic domain, and does not pay the price for the exacerbated variance. The GLBO paper also discussed the perturbed estimator and shared insights on model selection for VI, which inspires similar efforts in our development.

\paragraph{Alternative distribution discrepancy metrics} with appealing characteristics compared to the KL-divergence minimization criteria implicitly assumed by the likelihood-based learning, have been extensively studied in machine learning. Such efforts include direct generalizations to the {\it information theoretic divergence}, also known as the $f$-divergence. An interesting intersection with the current study is the R\'enyi VI, where the variational gap being optimized is the $\alpha$-divergence. \cite{li2016renyi} discussed different profiles of the approximate posterior learned with R\'enyi VI objectives,  and our analysis shows its connection to $\TVO$. Note that a major criticism against the information theoretic measures is that they are not metric-aware \cite{arjovsky2017towards, ozair2019wasserstein}. In other words, small perturbations to the distribution can result in drastic changes in the $f$-divergence. Such undefined behavior can be overcome by the {\it integral probability metrics} (IPM) \cite{muller1997integral, sriperumbudur2009integral}, including prominent examples such as {\it maximal mean discrepancy} (MMD) \cite{gretton2012kernel} and {\it Wasserstein distance} \cite{arjovsky2017wasserstein}. IPM is defined as the maximal contrast between two distributions wrt a fixed function space. And via a Kantorovich-Rubinstein duality argument, it can be shown that with an appropriate choice of function space, IPM can be expressed in terms of the distance given by the ground metric. This motivates the development of both primal and dual solver for IPM distance.

\paragraph{Adversarial schemes} have recently demonstrated their effectiveness in handling complex, intractable distributions \cite{goodfellow2014generative}. Their applications have been previously considered for the improvement of VI, such as {\it adversarial variational Bayes} (AVB) \cite{mescheder2017adversarial}, {\it coupled variational inference} (CVB) \cite{dai2018coupled} and Fenchel mini-max VAE \cite{tao2019fenchel}. While a direct generalization of such schemes to $\HBO$ is straightforward, we caution extra care must be taken, as the thermodynamic integration may be sensitive to the approximation error of the likelihood contrast. 

\paragraph{The utility and tightness trade-off} for the variational posterior is first formally investigated in \cite{rainforth2017tighter}. Intuitively, it says with a sharper variational bound to the marginal likelihood, the optimization procedure is less incentivized to improve the variational posterior $q_{\phi}(\zv|\xv)$. Our analysis suggests that thermodynamic objectives, such as $\HBO$, might be immune to this dilemma. And a plausible explanation is offered: in $\HBO$, posterior samples contribute more ``equally''. 


\paragraph{Automated tuning} is an important yet less studied topic in the VI context. It promises to alleviate the burden of model fitting for empirical investigators and avoid introducing subjective bias. While the specific tuning target is tailored for $\HBO$, some designing principles and experience can be shared. For one, the proposed $\HBO$ tuning explicitly seeks to minimize the variance along the thermodynamic curve. Also, our analysis the effective sample-size indicate alternative target for future improvements. Close to our developments is the adaptive $\TVO$ partition scheme, which replaces Riemannian integration with Lebesgue integration. Also in \cite{tao2018variational,lu2020reconsidering} the authors explored maximal entropy principle for model selection in VI.

\paragraph{More open questions} are in order following our $\HBO$ developments, which is beyond the scope of current study. A key question is under what conditions, if possible, can we ensure the monotonicity of the thermodynamic curve? This is significant as the monotonicity flipping point directly implies one-step approximation of the exact likelihood. Although in the absence of such analysis, one still hopes to considerably improve the performance of VI using the $\HBO$ perspectives. Practically, while our study characterize the qualitative behavior of $\HBO$, there is much room for quantitative improvements. Like all predecessors of thermodynamic inference procedures, our results do not see a drastic boost relative to the existing counterparts, possibly limited by the restrictive modeling choices we have adopted ({\it e.g.}, MLP architecture and Gaussian posterior). Potential improvements are expected using more expressive model architectures. Also of particular interest is whether there is a better alternative proposal sampler between the extremes of an MCMC sampler and the approximate posterior sampler.

\paragraph{MCMC Variational Inference \citep{salimans2015markov}} belongs to a more general family of VI schemes known as the {\it Auxiliary VI} (Aux-VI) \citep{maaloe2016auxiliary}, which seeks to improve bound sharpness via introducing auxiliary latent variables $\tilde{z}$. Standard $\ELBO$ employs $q(z|x)$ with simple forms that enjoy analytical expressions ({\it e.g.}, mean field Gaussian), which limits the expressive power of the posterior and subsequently compromises sharpness. To overcome this limitation, Aux-VI assumes an augmented latent variable model $p_{\theta}^{\dagger}(x,z,\tilde{z}) = p_{\theta}(x,z) r_{\theta}(\tilde{z}|x,z)$ and variational posterior $q_{\phi}^{\dagger}(z,\tilde{z}|x) = q_{\phi}^{\dagger}(z|x,\tilde{z})q_{\phi}^{\dagger}(\tilde{z}|x)$, 
denoting 
\beq
\begin{array}{c}
	\AVI \triangleq \EE_{Z,\tilde{Z} \sim q_{\phi}^{\dagger}}\left[\log \frac{p_{\theta}^{\dagger}(x,Z,\tilde{Z})}{q_{\phi}^{\dagger}(Z,\tilde{Z}|x)}\right], \\
	[10pt]
	\ELBO^{\dagger} \triangleq \EE_{Z\sim q_{\phi}^{\dagger}}\left[ \log \frac{p_{\theta}^{\dagger}(x,Z)}{q_{\phi}^{\dagger}(Z|x)} \right], 
\end{array}
\eeq
where we have used $\ELBO^{\dagger}$ to differentiate the auxiliary latent-augmented ELBO from the vanilla ELBO. One may readily verify that 
\beq
\AVI = \ELBO^{\dagger} - \bar{\BD}_{\KL}  \leq \ELBO^{\dagger} \leq \log p_{\theta}(x),
\eeq
where $\bar{\BD}_{\KL} \triangleq \EE_{Z\sim q_{\phi}^{\dagger}(z|x)}\left[ \KL(q_{\phi}^{\dagger}(\tilde{z}|x,Z)\parallel p_{\theta}(\tilde{z}|x,Z)) \right]$. 
Note $\tilde{z}$ is typically defined by a sequence of simple transition kernels $q(\tilde{z}_{t+1}|\tilde{z}_t, x)$ for tractable computations.
Here $\AVI$ is to be compared with standard $\ELBO$ defined by $q(z|x)$. 
The motivation behind Aux-VI is that, since the marginalized latent distribution $q_{\phi}^{\dagger}(z|x) \triangleq \EE_{\tilde{Z}\sim q(\tilde{z}|x)}[q(z,\tilde{Z}|x)]$ is very expressive, it is anticipated to bring more increases in the $ \ELBO^{\dagger}$ term compared to the reductions from the $\bar{\BD}_{\KL}$ term due to the auxiliary variables $\tilde{z}$, which means $\ELBO \leq \AVI$ is very likely to hold.  

A concrete implementation of Aux-VI is the MCMC-VI formulated in \citet{salimans2015markov}, where the $(z, \tilde{z})$ is specified in a hierarchical fashion using Markov chains, {\it i.e.}, 
\beq
\begin{array}{l}
	\tilde{z} = [\tilde{z}_0, \cdots, \tilde{z}_T], \,\,\, z \triangleq \tilde{z}_T, \\
	[5pt]
	q_{\phi}^{\dagger}(\tilde{z}|x): \tilde{z}_t \sim q_t(\tilde{z}_t|x,\tilde{z}_{t-1}), \\
	[5pt]
	r_{\theta}(\tilde{z}|x,z): \tilde{z}_t \sim \tilde{r}_t(\tilde{z}_t|x, \tilde{z}_{t+1}), 
\end{array}
\eeq 
and here $\{q_t, \tilde{r}_t\}_{t}$ denote the transition kernels. 
Hereafter we omit the dependence of approximate posterior $q$ on data $x$ for clarity. When each $q_t(\tilde{z}_t|\tilde{z}_{t-1})$ observe the detailed balance for $\pi_{\beta_t}$, {\it i.e.}, the intermediates on the geometric path, and if the reverse model $r(\tilde{z}_{t-1}|\tilde{z}_t) = q_{t}(\tilde{z}_t|\tilde{z}_{t-1})\pi_{t}(\tilde{z}_{t-1})/\pi_{t}(\tilde{z}_t)$, one arrives at
\beq
\MCMCVI = \underbrace{\EE_{\tilde{Z}_{1:T}\sim q(\tilde{z}|x)}\left[\sum_t \Delta \beta_t \log \frac{p_{\theta}(x,\tilde{Z}_t)}{q_0(\tilde{Z}_t)}\right] }_{\TVO} , 
\label{eq:mcmc}
\eeq
where $\Delta \beta_t \triangleq (\beta_t - \beta_{t-1})$. One can readily recognize that the $\MCMCVI$ defined in (\ref{eq:mcmc}) exactly recovers the $\TVO$ estimator. Note that the original MCMC-VI paper was unable to establish the tightness of its bound as $\max \{ \Delta \beta_t \} \rightarrow 0$, a result that straightforward under the $\TVO$ framework.

\section{Figures Cited in the Main Text}

Figure \ref{fig:holder_path} visualizes representative examples of H\"older paths, including special examples of geometric mean path ($\alpha=0$) and Wasserstein mean path ($\alpha=1$). 

Figure \ref{fig:hbo_tvo_k} compares $\TVO$ bounds and $\HBO$ bounds with different partition budget with uniform partitioning. We set $\alpha=0.2$ in this case. 

\begin{figure}[h]
	\begin{center}
		\includegraphics[width=.3\textwidth]{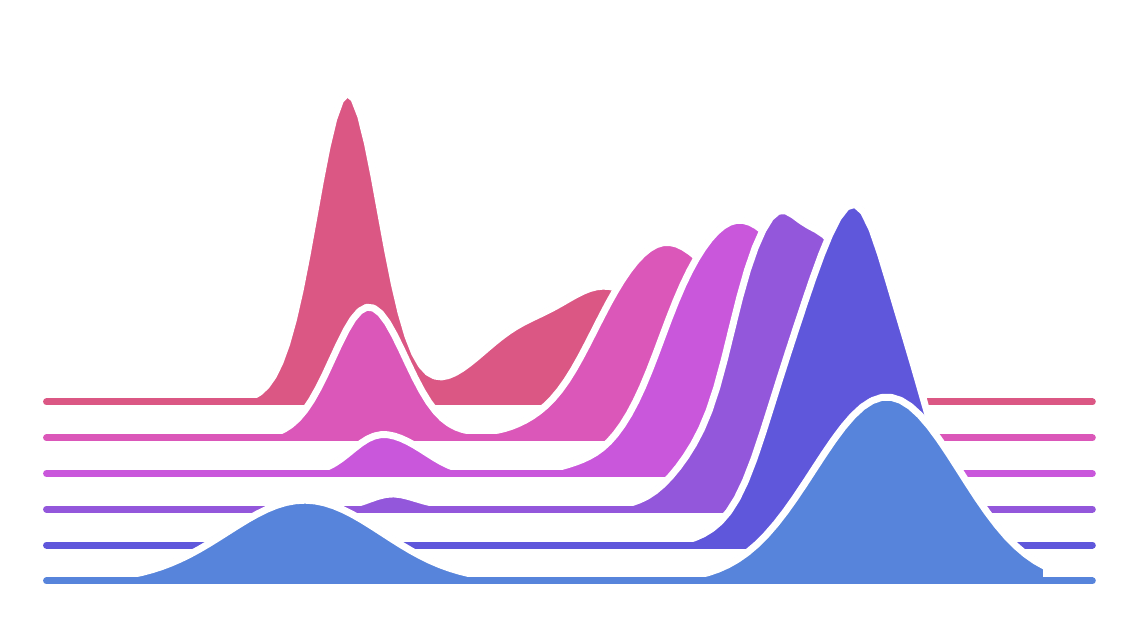}
		\includegraphics[width=.3\textwidth]{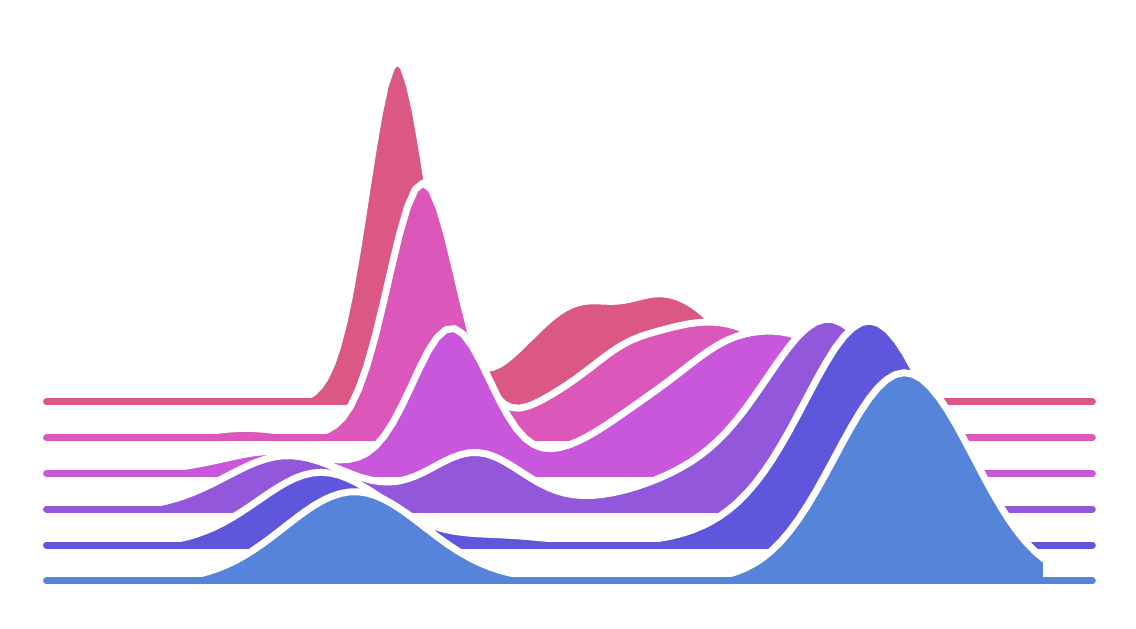}
		\includegraphics[width=.3\textwidth]{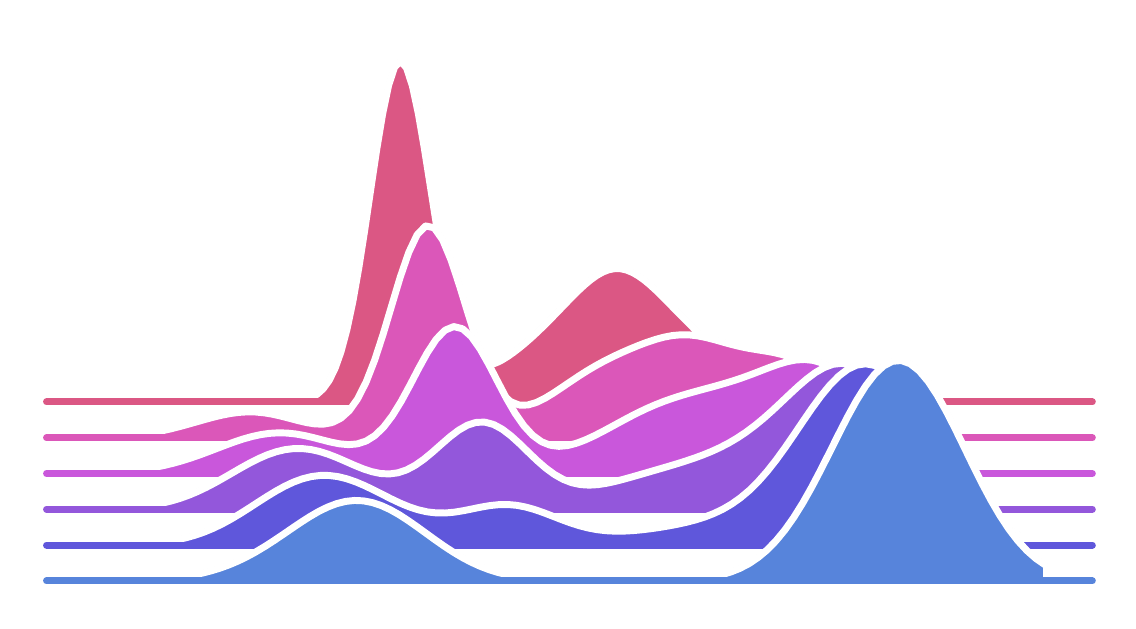}
	\end{center}
	\vspace{-1em}
	\caption{Comparison of different H\"older paths. From left to right: $\alpha=0, 0.5, 1$, respectively for geometric mean path, H\"older mean path and Wasserstein mean path. \label{fig:holder_path}}
	\vspace{-1.em}
\end{figure}

\begin{figure}[h]
	\begin{center}
		\includegraphics[width=.48\textwidth]{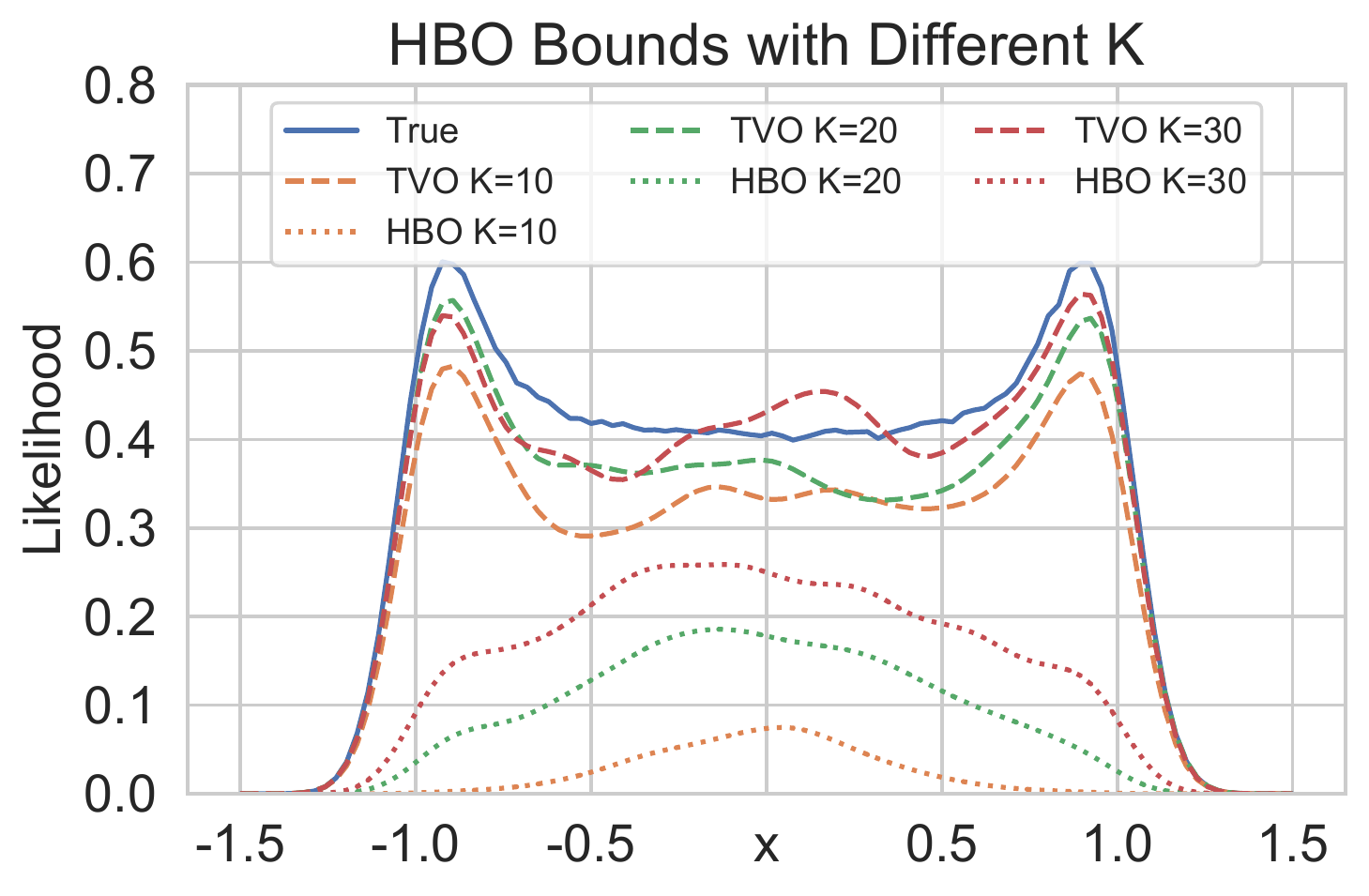}
	\end{center}
	\vspace{-1em}
	\caption{Comparison of $\TVO$ and $\HBO$ bounds with different partition budget $K$.  \label{fig:hbo_tvo_k}}
	\vspace{-1.em}
\end{figure}

\section{Technical Proofs and Derivations}

In this section, we give details of the technical proofs and derivations for the results reported in the main text. 



\subsection{Proof of Proposition 2.1 (Equivalence to the R\'enyi Bound)}


\begin{defn}
	Variational R\'enyi bound
	\begin{equation}
		\mathcal{L}_{\alpha}(\tilde{\pi}_1, \tilde{\pi}_0)= \frac{1}{\alpha} \log \mathbb{E}_{\pi_0}\left[\left(\frac{\tilde{\pi}_1}{\tilde{\pi}_0}\right)^{\alpha}\right]
	\end{equation}
\end{defn}

\paragraph{Proposition 2.1} $\mathcal{L}_{\alpha}(\tilde{\pi}_1, \tilde{\pi}_0) =\frac{1}{\alpha}\int_0^\alpha \mathbb{E}_{\pi_{0,\beta}}[U^\prime_{0, \beta}] \ud \beta$. 


\begin{proof}
	\begin{equation}
		\mathcal{L}_{\alpha}(\tilde{\pi}_1, \tilde{\pi}_0) = \frac{1}{\alpha}\log\int \tilde{\pi}_1^{\alpha}(z)\tilde{\pi}_0^{1-\alpha}(z) \ud z
		=\frac{1}{\alpha}\log Z_{0,\alpha} = \frac{1}{\alpha}\int_0^\alpha \mathbb{E}_{\pi_{0,\beta}}[U^\prime_{0, \beta}] \ud \beta
	\end{equation}
\end{proof}

\subsection{Proof of Proposition 3.1 (Wasserstein Thermodynamic Bounds)}

Recall Wasserstein thermodynamic path is given by the following family of intermediate distributions:

\begin{equation}
	\tilde{\pi}^W_{\beta} = \beta \tilde{\pi}_1+(1-\beta)\tilde{\pi}_0,~~ 
	Z_{\beta}^W = \int_\mathcal{Z} \tilde{\pi}^W_{\beta}(z) \ud z,~~
	\pi^W_\beta = \frac{\tilde{\pi}^W_{\beta}}{Z_{\beta}^W}, ~~\beta\in[0,1]
\end{equation}

For sufficiently smooth $\tilde{\pi_\beta}^W$ ({\it e.g.}, $\tilde{\pi}^W$ and $\log \tilde{\pi}^W$ are continuous wrt $z$ and $\beta$ both), we can exchange the integration with differentiation
\begin{equation}\label{f1}
	\begin{split}
		\frac{\partial \log Z_{\beta}^W}{\partial \beta} &= \frac{1}{Z_{\beta}^W}\frac{\partial}{\partial \beta}{Z_{\beta}^W}=\frac{1}{Z_{\beta}^W} \frac{\partial}{\partial \beta}\int \tilde{\pi}^W_{\beta} \ud z\\
		&=\int \frac{\tilde{\pi}^W_{\beta}}{Z_{\beta}^W}\frac{\partial}{\partial \beta}\log \tilde{\pi}^W_{\beta} \ud z = \mathbb{E}_{\pi_{\beta}^W}\left[\frac{\partial}{\partial \beta}\log \tilde{\pi}^W_{ \beta}\right]
	\end{split}
\end{equation}
Denote $U_{\beta}^W \triangleq \log \tilde{\pi}^W_{\beta}$, 
we integrate out $\beta$ on both sides of (\ref{f1}),
\begin{equation}
	\int_{0}^{1}\frac{\partial \log Z_{\beta}^W}{\partial \beta} \ud \beta = \int_{0}^{1}\mathbb{E}_{Z\sim \pi_{\beta}^W}\left[\partial_{\beta} {U_{\beta}^W}(Z)\right]\ud \beta,
\end{equation}
which gives
\begin{equation}
	\log Z_1 - \log Z_0 = \int_{0}^{1}\mathbb{E}_{Z\sim \pi_{\beta}^W}\left[U^\prime_{ \beta}(Z)\right] \ud \beta.
\end{equation}
Denote the local evidence as 
\begin{equation}
	\begin{split}
		h(\beta) \triangleq \mathbb{E}_{\pi_\beta^W}\left[\frac{\tilde{\pi}^W_1 - \tilde{\pi}^W_0}{\tilde{\pi}^W_\beta}\right]
	\end{split}
\end{equation} 
We want to show that $h(\beta)$ is non-increasing.
\begin{proof}
	For notational clarity, here we suppress the superscript $W$ on the partition function $Z_{\beta}$. 
	\begin{equation}
		\begin{split}
			\frac{\partial}{\partial \beta}h(\beta) &= \frac{\partial}{\partial \beta}\left[\int \pi_\beta^W(z)\frac{\tilde{\pi}^W_1(z) - \tilde{\pi}^W_0(z)}{\tilde{\pi}^W_\beta(z)}\ud z\right]\\
			&= \frac{\partial Z_\beta^{-1}} {\partial \beta}\int\left[\tilde{\pi}_1(z)-\tilde{\pi}_0(z)\right] \ud z\\
			&=-\frac{1}{Z_\beta^2}\frac{\partial Z_\beta}{\partial \beta} \int\left[\tilde{\pi}_1(z)-\tilde{\pi}_0(z)\right] \ud z\\
			&= -\frac{1}{Z_\beta^2}\left(\int\left[\tilde{\pi}_1(z)-\tilde{\pi}_0(z)\right] \ud z\right)^2\leq 0
		\end{split}
	\end{equation}
	This proves Wasserstein thermodynamic local evidence curve is non-increasing. 
	Since
	\begin{equation}
		\begin{split}
			&\tilde{\pi}_0(z) = q_{\phi}(z|x), \quad Z_0 = \int q_\pi(z|x) \ud z = 1,\\
			&\tilde{\pi}_1(z) = p_\theta(x,z), \quad Z_1 = \int p_\theta(x,z)\ud z = p_\theta(x).
		\end{split}
	\end{equation}
	Since $h(\beta)$ is non-increasing,
	\begin{equation}
		\begin{split}
			h(0) &= \mathbb{E}_{\pi_0} \left[\frac{\tilde{\pi}_1 - \tilde{\pi}_0}{\tilde{\pi}^W_0}\right]\\
			&=\int \left[\tilde{\pi}_1(z) - \tilde{\pi}_0(z) \right] \ud z\\
			&\leq \int \tilde{\pi}_1(z)\log\frac{\tilde{\pi}_1(z)}{\tilde{\pi}_0(z)}\ud z = Z_1\cdot \mbox{EUBO}.
		\end{split}
	\end{equation}
	where the last inequality follows from Lemma (\ref{lem}) below.
	\begin{equation}
		\begin{split}
			h(1) &= \mathbb{E}_{\pi_1}\left[\frac{\tilde{\pi}_1 - \tilde{\pi}_0}{\tilde{\pi}_1}\right]\\
			&= \int\frac{\pi_1}{\tilde{\pi}_1}\left[\tilde{\pi}_1 - \tilde{\pi}_0\right] \ud z\\
			&=\frac{\int\left[\tilde{\pi}_1 - \tilde{\pi}_0 \right] \ud z}{Z_1}
			\geq \frac{1}{Z_1}\mbox{ELBO}
		\end{split}	
	\end{equation}
	where we have applied Lemma (\ref{lem}) again for the inequality. 
\end{proof}
\begin{lem}\label{lem} When  $f, g>0$, the following inequalities hold.
	\begin{equation}
		f-g\leq f\log \frac{f}{g},~~ f-g\geq g\log \frac{f}{g}	. 
	\end{equation}
\end{lem}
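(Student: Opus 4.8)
The plan is to reduce both asserted inequalities to the single classical bound $\log t \le t-1$, valid for all $t>0$ with equality iff $t=1$. First I would establish this base inequality directly: setting $\phi(t)=t-1-\log t$, one has $\phi'(t)=1-1/t$, which is negative on $(0,1)$ and positive on $(1,\infty)$, so $\phi$ attains its global minimum at $t=1$ with $\phi(1)=0$; hence $\phi(t)\ge 0$ for every $t>0$, i.e. $\log t\le t-1$. Equivalently, this is just the tangent-line bound coming from concavity of $\log$ at $t=1$.

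For the second inequality $f-g\ge g\log\frac{f}{g}$, since $g>0$ I would divide through by $g$ (which preserves the inequality direction) and substitute $t=f/g>0$. This turns the claim into $t-1\ge\log t$, which is exactly the base inequality. For the first inequality $f-g\le f\log\frac{f}{g}$, I would again divide by $g$ and set $t=f/g$, reducing it to $t-1\le t\log t$. I would then obtain this from the base inequality applied to $1/t$: since $\log(1/t)\le 1/t-1$, i.e. $-\log t\le 1/t-1$, multiplying through by $t>0$ gives $-t\log t\le 1-t$, which rearranges to $t-1\le t\log t$, as needed.

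Honestly, there is no substantive obstacle here — this is an elementary one-variable inequality. The only points requiring care are keeping track of the sign of the divisor $g$ (positive, so no inequality flips) and noticing that the two directions are dual under the reciprocal substitution $t\mapsto 1/t$, so a single monotonicity (or convexity) argument for $\log t\le t-1$ suffices to yield both bounds at once.
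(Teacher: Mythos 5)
Your proof is correct and takes essentially the same route as the paper: both reduce the claim to the elementary bound $\log t\le t-1$ for $t=f/g>0$. The only cosmetic difference is that the paper divides the first inequality by $f$ (using $1-\tfrac{1}{x}\le\log x$ directly) where you divide by $g$ and then apply the base inequality to $1/t$; these are the same argument up to the reciprocal substitution you yourself point out.
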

\begin{proof}
	Denote $x = \frac{f}{g}>0.$ 
	\begin{equation}
		\begin{split}
			\frac{f-g}{f} &= 1-\frac{1}{x}\leq \log x =\log \frac{f}{g}.\\
			\frac{f-g}{g} &= x-1 \geq \log x = \log \frac{f}{g}.
		\end{split}
	\end{equation}
\end{proof}

\subsection{Proof of Proposition 3.3 (Monotonicity of $\HBO$)}

\begin{proof}
	Recall $\log p_\theta(x) = \int_0^1 g_{\alpha}(\beta) \ud \beta$, where $g_{\alpha}(\beta) = \mathbb{E}_{\pi_{\alpha,\beta}}\left[U^\prime_{\alpha, \beta}(z)\right]$. 
	For $\alpha\neq 0$,
	
	\beqs
	& U_{\alpha, \beta} = \log \tilde{\pi}_{\alpha, \beta}, \quad 
	U^\prime_{\alpha,\beta} = \frac{\partial U_{\alpha, \beta}}{\partial \beta}= \frac{1}{\tilde{\pi}_{\alpha, \beta}}\frac{\partial \tilde{\pi}_{\alpha, \beta}}{\partial \beta} = \frac{1}{\alpha}\frac{\tilde{\pi}_1^\alpha - \tilde{\pi}_0^\alpha}{\tilde{\pi}_{\alpha, \beta}^\alpha}, \\
	& U^{\prime\prime}_{\alpha,\beta} = \frac{\partial^2 U_{\alpha,\beta}}{\partial \beta^2}
	= -\frac{1}{\alpha}\frac{\left(\tilde{\pi}_1^\alpha - \tilde{\pi}_0^\alpha\right)^2}{\tilde{\pi}_{\alpha,\beta}^{2\alpha}} = -\alpha \left(U^\prime_{\alpha,\beta}\right)^2. 
	\eeqs
	Via direct computation, we have 
	\beqs
	\frac{\partial}{\partial \beta}g_{\alpha}(\beta) 
	& = & \frac{\partial}{\partial \beta}\mathbb{E}_{\pi_{\alpha,\beta}}\left[U^\prime_{\alpha, \beta}(z)\right]
	=\int \frac{\partial}{\partial \beta}\left[\pi_{\alpha,\beta}(z)U^\prime_{\alpha, \beta}(z)\right] \ud z \\
	&=&\int\frac{\partial}{\partial \beta}\left[\frac{\tilde{\pi}_{\alpha,\beta}(z)}{Z_{\alpha,\beta}}U^\prime_{\alpha, \beta}(z)\right] \ud z\\
	&=& \int \frac{1}{Z_{\alpha,\beta}}\frac{\partial\tilde{\pi}_{\alpha, \beta}(z)}{\partial \beta}U^\prime_{\alpha, \beta}(z)dz - \int\frac{\tilde{\pi}_{\alpha,\beta}(z)}{Z_{\alpha, \beta}^2}\frac{\partial Z_{\alpha, \beta}}{\partial\beta}U^\prime_{\alpha, \beta}(z) \ud z \\
	& & \hspace{4em} +\int\frac{\tilde{\pi}_{\alpha,\beta}(z)}{Z_{\alpha,\beta}}U^{\prime\prime}_{\alpha,\beta}(z) \ud z\\
	&=& \int \frac{\tilde{\pi}_{\alpha, \beta}(z)}{Z_{\alpha, \beta}}\frac{\partial \log \tilde{\pi}_{\alpha, \beta}(z)}{\partial \beta}U^\prime_{\alpha, \beta}(z)dz 
	+\int\frac{\tilde{\pi}_{\alpha,\beta}(z)}{Z_{\alpha,\beta}}U^{\prime\prime}_{\alpha,\beta}(z)dz \\
	& & \hspace{4em}- \int \frac{\tilde{\pi}_{\alpha,\beta}(z)}{Z_{\alpha,\beta}}U^\prime_{\alpha,\beta}(z)dz\int\frac{1}{Z_{\alpha,\beta}}\frac{\partial \tilde{\pi}_{\alpha,\beta}(z)}{\partial \beta} \ud z\\
	&=&\int\pi_{\alpha, \beta}(z)\left(U^\prime_{\alpha, \beta}\right)^2 \ud z
	-\int \pi_{\alpha, \beta}(z) U^\prime_{\alpha, \beta}(z) \ud z
	\int \frac{\tilde{\pi}_{\alpha,\beta}(z)}{Z_{\alpha,\beta}}U^\prime_{\alpha, \beta}(z)\ud z \\
	&&  \hspace{4em} +\int\frac{\tilde{\pi}_{\alpha,\beta}(z)}{Z_{\alpha,\beta}}U^{\prime\prime}_{\alpha,\beta}(z) \ud z\\
	&=&\mathbb{E}_{\pi_{\alpha, \beta}(z)}\left[\left(U^\prime_{\alpha,\beta}(z)\right)^2	\right]-\left(\mathbb{E}_{\pi_{\alpha, \beta}(z)}\left[U^\prime_{\alpha, \beta}(z)\right]\right)^2+ \mathbb{E}_{\pi_{\alpha, \beta}(z)}\left[U^{\prime\prime}_{\alpha,\beta}(z)\right]\\
	&=&\mathbb{E}_{\pi_{\alpha, \beta}(z)}\left[\left(U^\prime_{\alpha,\beta}(z)\right)^2	\right]
	-\left(\mathbb{E}_{\pi_{\alpha, \beta}(z)}\left[U^\prime_{\alpha, \beta}(z)\right]\right)^2 
	-\alpha \mathbb{E}_{\pi_{\alpha, \beta}(z)}\left[(U^\prime_{\alpha, \beta}(z))^2 \right]\\
	&=& \mbox{Var}_{\pi_{\alpha,\beta}(z)}\left[U^\prime_{\alpha,\beta}(z)\right]
	-\alpha \mathbb{E}_{\pi_{\alpha, \beta}(z)}\left[(U^\prime_{\alpha, \beta}(z))^2 \right]\\
	&=&-\left(\mathbb{E}_{\pi_{\alpha, \beta}(z)}\left[U^\prime_{\alpha, \beta}(z)\right]\right)^2 +(1-\alpha) \mathbb{E}_{\pi_{\alpha, \beta}(z)}\left[(U^\prime_{\alpha, \beta}(z))^2 \right]
	\eeqs
	Therefore, when $\alpha\leq 0$ then $\partial_\beta g_\alpha(\beta) \geq 0$, so $g_\alpha(\beta)$ is non-decreasing; 
	and when $\alpha\geq 1$ then $\partial_\beta g_\alpha(\beta)\leq 0$, and $g_\alpha(\beta)$ is non-increasing.
\end{proof}

\subsection{Derivation for the Importance Weighted $\HBO$}

\begin{proof}
	
	For simplicity, let us denote
	
	\beq
	s=\left(\frac{p(\xv, \zv)}{q(\zv)}\right)^\alpha, \text{ and } t=s-1
	\eeq
	
	The the (unnormalized) importance weight $\tilde{\pi}/q$ can be expressed ass
	
	\beq
	\frac{\tilde{\pi}_{\alpha, \beta}(\zv)}{{q(z)}}=\left[\beta \left(\frac{p}{q}\right)^\alpha+(1-\beta)\right]^{1/\alpha} = (\beta \cdot t +1)^{1/\alpha}
	\eeq
	
	Given a set of empirical samples $\{\zv_i\}_{i=1}^m$ from $q_{\phi}(\zv|\xv)$, the self-normalized importance weights are given by 
	\beq
	\bar{w}_{\alpha, i}^\beta = w_{\alpha, i}^\beta/\sum_{i}w_{\alpha, i}^\beta, 
	\eeq
	where $w_{\alpha, i}^\beta = (\beta \cdot t_i +1)^{1/\alpha}$. And on the other hand, we have the integrand for the local evidence term given by 
	\beq
	U_{\alpha, \beta}^\prime =\frac{p^{\alpha} -q^{\alpha}}{\alpha \pi_{\alpha, \beta}^\alpha} = \frac{t}{\alpha (\beta t+1)}
	\eeq
	
	Plugging into the empirical self-normalized importance weighted estimator, 
	\beq
	\mathbb{E}_{\pi_{\alpha,\beta}}[f(\Zv)] \approx \sum_i \left[\bar{w}_{\alpha, i}^\beta \cdot f(\mathbf{z}_i)\right]
	\eeq
	this gives us
	\beqs
	\CE_{\alpha, \beta} & =  & \mathbb{E}_{\pi_{\alpha, \beta}}[U_{\alpha, \beta}^\prime] \\
	& \approx & \sum_{i}\frac{t_i}{\alpha(\beta t_i+1)} \bar{w}_{\alpha,i}^\beta \\
	& = & \sum_{i}\frac{t_i}{\alpha(\beta t_i+1)}\frac{(\beta \cdot t_i +1)^{1/\alpha}}{\sum_{i'} (\beta t_{i'} +1)^{1/\alpha}} \\
	& = & \frac{1}{\alpha \sum_{i'} (\beta t_{i'} +1)^{1/\alpha}} \sum_i \frac{t_i}{(\beta t_i +1)^{1-1/\alpha}}
	\eeqs
	
	
	

\end{proof}

\subsection{Derivation for the Perturbed $\HBO$ Estimator}


\begin{proof}
	
	To construct a perturbed estimator for $\HBO = \int \EE_{\pi_{\alpha,\beta}}[U_{\alpha,\beta}^\prime] \ud \beta$ wrt parameter $\alpha$, we will need Taylor expansions for both the integrand $U_{\alpha,\beta}^\prime$ and path density $\pi_{\alpha,\beta}$. The following facts are handy for our derivations
	\beqs
	&\tilde{\pi}_{\alpha,\beta}^\alpha = \beta\tilde{\pi}_1^\alpha +(1-\beta)\tilde{\pi}_0^\alpha\\
	&\lim_{\alpha\rightarrow 0}\tilde{\pi}_{\alpha,\beta}^\alpha = 1\\
	&\partial_\alpha \tilde{\pi}_{\alpha, \beta}^\alpha  
	= \beta\tilde{\pi}_1^\alpha \log \tilde{\pi}_1 +(1-\beta)\tilde{\pi}_0^\alpha\log\tilde{\pi}_0\\
	&\lim_{\alpha\rightarrow 0}\partial_\alpha \tilde{\pi}_{\alpha, \beta}^\alpha 
	= \log \left[\tilde{\pi}_1^\beta\tilde{\pi}_0^{1-\beta}\right]. 
	\eeqs
	
	
	
	We first compute the Taylor expansion of integrand $U^\prime_{\alpha,\beta}$ at $\alpha= 0$
	\beqs
	U^{\prime}_{\alpha,\beta}|_{\alpha=0}
	&=& \lim_{\alpha\rightarrow 0} \frac{\tilde{\pi}_1^\alpha -\tilde{\pi}_0^\alpha}{\alpha \tilde{\pi}_{\alpha,\beta}^\alpha}
	= \lim_{\alpha\rightarrow 0} \frac {\partial_ \alpha\left[\tilde{\pi}_1^\alpha -\tilde{\pi}_0^\alpha\right]}{\partial_\alpha\left[\alpha\beta\tilde{\pi}_1^\alpha +\alpha (1-\beta)\tilde{\pi}_0^\alpha\right]}\\
	&=&\lim_{\alpha\rightarrow 0}\frac{\tilde{\pi}_1^\alpha \log \tilde{\pi}_1 - \tilde{\pi}_0^\alpha \log \tilde{\pi}_0}{\beta\tilde{\pi}_1^\alpha(1+\alpha \log\tilde{\pi}_1) + (1-\beta)\tilde{\pi}_0^\alpha(1+\alpha\log\tilde{\pi}_0)}\\
	&=& \log\tilde{\pi}_1 -\log\tilde{\pi}_0 \rightarrow U^\prime_{0,\beta}\\
	\partial_\alpha U^{\prime}_{\alpha,\beta}|_{\alpha=0}
	&=& \lim_{\alpha\rightarrow 0} \frac{\partial}{\partial_\alpha}\left[\frac{\tilde{\pi}_1^\alpha -\tilde{\pi}_0^\alpha}{\alpha \tilde{\pi}_{\alpha,\beta}^\alpha}\right]\\
	&=& \lim_{\alpha\rightarrow 0} \bigg(\underbrace{-\frac{1}{\alpha^2}\frac{\tilde{\pi}_1^\alpha - \tilde{\pi}_0^\alpha}{\tilde{\pi}_{\alpha,\beta}^\alpha} }_{T_1}
	+ \underbrace{\frac{1}{\alpha}\frac{\tilde{\pi}_1^\alpha \log\tilde{\pi}_1 - \tilde{\pi}_0^\alpha \log \tilde{\pi}_0}{\tilde{\pi}_{\alpha,\beta}^\alpha}}_{T_2}
	-\underbrace{\frac{\tilde{\pi}_1^\alpha - \tilde{\pi}_0^\alpha}{\alpha \tilde{\pi}_{\alpha, \beta}^{2\alpha}}\partial_\alpha \tilde{\pi}_{\alpha,\beta}^\alpha}_{T_3}\bigg)\\
	\lim_{\alpha\rightarrow 0}T_1 &=&-
	\lim_{\alpha\rightarrow 0}\frac{\tilde{\pi}_1^\alpha\log\tilde{\pi}_1 - \tilde{\pi}_0^\alpha \log\tilde{\pi}_0}{2\alpha \tilde{\pi}_{\alpha,\beta}^\alpha +\alpha^2\partial_\alpha \tilde{\pi}_{\alpha,\beta}^\alpha}\\
	&=&- \lim_{\alpha\rightarrow 0}\frac{\tilde{\pi}_1^\alpha(\log \tilde{\pi}_1)^2 - \tilde{\pi}_0^\alpha(\log \tilde{\pi}_0)^2 }{2\tilde{\pi}_{\alpha,\beta}^\alpha 
		+4\alpha \partial_\alpha \tilde{\pi}_{\alpha,\beta}^\alpha + \alpha^2 \partial^2_{\alpha\alpha}\tilde{\pi}_{\alpha,\beta}^\alpha} =- \frac{\left(\log{\tilde{\pi}_1}\right)^2 - \left(\log{\tilde{\pi}_0}\right)^2}{2}\\
	\lim_{\alpha\rightarrow 0}T_2 &=& \lim_{\alpha\rightarrow 0}\frac{\tilde{\pi}_1^\alpha\left(\log\tilde{\pi}_1\right)^2 -\tilde{\pi}_0^\alpha\left(\log\tilde{\pi}_0\right)^2}{\tilde{\pi}_{\alpha,\beta}^\alpha + \alpha\partial_\alpha \tilde{\pi}_{\alpha,\beta}^\alpha} 
	=\left(\log{\tilde{\pi}_1}\right)^2- \left(\log{\tilde{\pi}_0}\right)^2\\
	\lim_{\alpha\rightarrow 0} T_3 &=& \lim_{\alpha\rightarrow 0}\frac{\left(\tilde{\pi}_1^\alpha\log\tilde{\pi}_1 - \tilde{\pi}_0^\alpha\log\tilde{\pi}_0\right)\partial_\alpha\tilde{\pi}_{\alpha,\beta}^\alpha - \left(\tilde{\pi}_1^\alpha -\tilde{\pi}_0^\alpha\right)\partial^2_{\alpha\alpha}\tilde{\pi}_{\alpha, \beta}^\alpha}{\tilde{\pi}_{\alpha, \beta}^{2\alpha} + 2\tilde{\pi}_{\alpha,\beta}^\alpha\partial_\alpha \tilde{\pi}_{\alpha,\beta}^\alpha}\\
	&=& \log\left[\tilde{\pi}_1\tilde{\pi}_0^{-1}\right]\log \left[\tilde{\pi}_1^\beta\tilde{\pi}_0^{1-\beta}\right]
	\eeqs
	Putting everything together, we have
	\beqs
	\partial_\alpha U^\prime_{\alpha, \beta}|_{\alpha=0}  &=& \left(\log \tilde{\pi}_1 - \log\tilde{\pi}_0\right)^2\left(\frac{1}{2}-\beta\right) \rightarrow \partial_\alpha U^\prime_{0, \beta}\\
	U^\prime_{\delta,\beta} &\approx& \left(\log\tilde{\pi}_1 -\log\tilde{\pi}_0\right)+ \left(\log \tilde{\pi}_1 - \log\tilde{\pi}_0\right)^2\left(\frac{1}{2}-\beta\right)\delta + \mathcal{O}(\delta)^2
	\eeqs
	
	
	For the path density, we consider its unnormalized form
	
	\beqs
	\partial_\alpha \log \tilde{\pi}_{\alpha, \beta} 
	&=& -\underbrace{\frac{1}{\alpha^2}\log\left(\beta\tilde{\pi}_1^\alpha +(1-\beta)\tilde{\pi}_0^\alpha\right)}_{I_1} + \underbrace{\frac{\beta\tilde{\pi}_1^\alpha\log\tilde{\pi}_1+(1-\beta)\tilde{\pi}_0^\alpha\log\tilde{\pi}_0}{\alpha\left(\beta \tilde{\pi}_1^\alpha +(1-\beta)\tilde{\pi}_0^\alpha\right)}}_{I_2}\\
	\lim_{\alpha\rightarrow 0}I_1 
	&=& \lim_{\alpha\rightarrow 0}\frac{\beta\tilde{\pi}_1^\alpha \log\tilde{\pi}_1+(1-\beta)\tilde{\pi}_0^\alpha \log\tilde{\pi}_1^\alpha}{2\alpha(\beta\tilde{\pi}_1^\alpha+(1-\beta)\tilde{\pi}_0^\alpha)} \\
	&=& \lim_{\alpha\rightarrow 0}\frac{\beta\tilde{\pi}_1^\alpha (\log\tilde{\pi}_1)^2 +(1-\beta)\tilde{\pi}_0^\alpha (\log\tilde{\pi}_0)^2}{2(\beta\tilde{\pi}_1^\alpha +(1-\beta)\tilde{\pi}_0^\alpha)} \\
	& = & \frac{1}{2}\left[\beta(\log\tilde{\pi}_1)^2 +(1-\beta)(\log\tilde{\pi}_0)^2\right]\\
	\lim_{\alpha\rightarrow 0}I_2 
	&=& \lim_{\alpha\rightarrow 0}\frac{\beta\tilde{\pi}_1^\alpha(\log\tilde{\pi}_1)^2+(1-\beta)\tilde{\pi}_0^\alpha(\log \tilde{\pi}_0)^2}{\beta\tilde{\pi}_1^\alpha +(1-\beta)\tilde{\pi}_0^\alpha} \\
	&=&\beta(\log\tilde{\pi}_1)^2 +(1-\beta)(\log\tilde{\pi}_0)^2\\
	\lim_{\alpha\rightarrow 0 }\partial_\alpha \log \tilde{\pi}_{\alpha,\beta} 	& = & \frac{1}{2}\left[\beta(\log\tilde{\pi}_1)^2 +(1-\beta)(\log\tilde{\pi}_0)^2\right]\\
	\log\tilde{\pi}_{\delta, \beta} &\approx& (\beta\log\tilde{\pi}_1 +(1-\beta)\log\tilde{\pi}_0) + \frac{1}{2}\left[\beta(\log\tilde{\pi}_1)^2 +(1-\beta)(\log\tilde{\pi}_0)^2\right]\delta
	\eeqs
	This concludes our proof.
\end{proof}

\section{Experimental Setups}

\subsection{Datasets}

The following datasets are considered in the current study.

\begin{itemize}
	
	\item {\bf MNIST} a handwritten digit database with $70$k binarized $28\times 28$ images. Following standard split, we use $60$k for development (5/1 split for training and validation) and the rest $10$k for test.

	\item {\bf Omniglot} \cite{lake2015human} is a dataset of $1623$ handwritten characters across $50$ alphabets. Each data point is a binarized $28\times 28 $image. We split the dataset into $24,345$ for training and $8,070$ for testing.
	
	\item {\bf Cifar10} \cite{krizhevsky2009learning} consists of $60$k size $32\times 32$ colour images from $10$ classes. We split the dataset into $50$k training and $10$k for testing.
	
	\item {\bf CelebA} \cite{liu2018large} consists more than $200$k celebrity images. We split the dataset into $162,770$ training and $19,962$ for testing.
	
	\item {\bf Yelp} \cite{shen2017style, yang2017improved} contains more than $100,000$ long sentences with average length equals $96.7$. We split the dataset into $100,000$ training sentences, $10,000$ for validation and $10,000$ for test.
\end{itemize}

\subsection{Numerical integration}

To compute an approximation to the $\log$-likelihood, we need to numerically integrate the local evidence along the thermodynamic curve.  
We consider the following partition schemes for varying budget $K \in [2, 5, 10, 30, 50]$:
\begin{itemize}
	\item Log partition: fix $\beta_0 = 0$, split the interval $[\beta_1,1]$ evenly on a log scale. We follow the original $\TVO$ paper \cite{masrani2019the} choose $\beta_1 = 10^{-1.09}$ for a large number of partition.
	\item Uniform partition: split the interval $[0,1]$ evenly on a linear scale.
\end{itemize}
We have compared the following three integration strategies, 
\begin{itemize}
	\item Trapz:  $\sum_{i=0}^{K-1}\left(\beta_{i+1}-\beta_{i}\right)\cdot \frac{f(\beta_{i})+f(\beta_{i+1})}{2}$,
	
	\item Left: $\sum_{i=0}^{K-1}\left(\beta_{i+1}-\beta_{i}\right)\cdot f(\beta_{i})$,
	
	\item Right: $\sum_{i=0}^{K-1}\left(\beta_{i+1}-\beta_{i}\right)\cdot f(\beta_{i+1})$.
\end{itemize}

\subsection{Model architectures (standard)}

Following \cite{kingma2013auto, burda2015importance}, we use the standard Gaussian for the prior and conditional Gaussian for the approximate posterior, unless otherwise specified, with the mean and diagonal covariance parameterized by deep neural net. The condional likelihood model is set to Bernoulli for binary responses and Gaussian for continuous responses (for simplicty we fix the variance to $1$). 
\begin{equation}
	\begin{split}
		p_{\theta}(\xv,\zv) &= p_{\theta}(\xv|\zv)p(\zv),\\
		p(\zv) &= Normal(\zv|0,I), \\
		p_{\theta}(\xv|\zv) &= Bernoulli(\xv|decoder_{\theta}(\zv))
		~or~ p_{\theta}(\xv|\zv) =  Normal(\xv|decoder_\theta(\zv),\sigma^2)\\
		q_{\phi}(\zv|\xv) &=Normal(\zv|\mu_\phi(\xv),\sigma_\phi(\xv))
	\end{split}
\end{equation}
For MNIST and Omniglot dataset, we use two-layer MLPs with tanh activation as encoder. The output of the encoder is duplicated and passed through an additional linear layer to parameterize the mean and log-standard deviation of a 200 hidden dimensions conditionally independent Normal distribution. The decoder is a three-layer MLP with tanh activations and sigmoid 
output which parameterizes the probabilities of the Bernoulli distribution.

For Cifar10 dataset, we use three-layer Conv2d (filters: $64\times 4\times 4-128\times 4\times 4-512\times 4\times 4$) with ReLU activation as encoder \citep{chen2021finite}. The latent representation first passes through a Dense layer, then a three-layer DeConv2d decoder (filters: $256\times 4\times 4-64\times 4\times 4-3\times 4\times 4$) with ReLU activations. We set hidden dimensions to $1024$.

For CelebA dataset, we use the five-layer Conv2d $(32\times 4\times 4-64\times 4\times 4 - 128\times 4\times 4 - 256\times 4\times 4 - 512\times 4\times 4)$.
The decoder is a five-layer Upsampling ConV2d network where the filter size and kernel size are the reverse of the encoder. Each Conv2d layer in the decoder follows a upsampling layer. 
For both encoder and decoder, we choose LeakyReLU as activation and add a BN layer after each Conv2d layer.  

For Yelp dataset, we implement both encoder and decoder as one-layer LSTMs with 1024 hidden units and 512-dimensional word embeddings. The vocabulary size is 20K. The last hidden state of the encoder concatenated with a 32-dimensional Gaussian noise is used to sample 32-dimensional latent codes, which is then adopted to predict the initial hidden state of the decoder LSTM and additionally fed as input to each step at the LSTM decoder. A KL-cost annealing strategy is commonly used \citep{tao2019variational}, where the scalar weight on the KL term is increased linearly from 0.1 to 1.0 each batch over 10 epochs. There are dropout layers with probability 0.5 between the input-to-hidden layer and the hidden-to-output layer on the decoder LSTM only.

All the parameters for the image model are initialized at random and optimized using ADAM.

\subsection{Inference with normalizing flows}
\label{sec:nf}

To enable more flexible posterior approximation, we consider empowering $q_{\phi}(z|x)$ with normalizing flows \citep{rezende2015variational}. We modified the Karpathy's \texttt{Torch} implementation of flows \footnote{\url{https://github.com/karpathy/pytorch-normalizing-flows/}} ({\it e.g.}, adding conditional flow support), and replace standard encoder with a flow-based encoder. To get maximal sampling efficiency, we choose the MAF-based IAF flow as our default choice \citep{kingma2016improving, papamakarios2017masked,chen2020supercharging}. We notice such shift-scale flows sometimes suffer stability issues, which can be remedied by the slower alternatives \citep{papamakarios2019normalizing}, such as spline flows \citep{durkan2019neural}. 

\subsection{Baselines and specifications} 

We consider the following representative or popular baselines to benchmark our $\HBO$. 
\begin{itemize}
	\item $\ELBO$/VAE: \cite{kingma2013auto} Vanilla $\ELBO$  
	\item $\IWELBO$/IW-VAE: \cite{burda2015importance} Importance weighted $\ELBO$. We set importance samples to $10$. 
	\item R\'enyi: \cite{li2016renyi} We use the R\'enyi-Max variant which reported best performance in the original R\'enyi paper
	\item $\TVO$: \cite{masrani2019the} Thermodynamics variational objective
\end{itemize}
For fair comparison, other aspects are all matched. 

\section{Additional Results and Analyses for Synthetic Examples}
\label{sec:more_exp}

\subsection{Bayesian regression}

To benchmark $\HBO$'s performance against other bounds, we consider Bayesian parameter estimation for regression problems. Specifically, we consider the  classic toy model given below, given assigns a non-informative prior:
\beqs
Y = \alpha + \beta \tilde{X} + \epsilon, \quad X = \tilde{X} +\zeta \label{eq:br_mdl} \\
p(\alpha,\beta) \propto (1+\beta^2)^{-3/2}, \quad p(\sigma) \propto \frac{1}{\sigma}. \label{eq:br_prior}
\eeqs
where $\epsilon,\zeta \sim \CN(0,\sigma^2)$, $\tilde{X} = \CU[0,100]$, and observations are given in pairs of $\mathcal{D} = \{(x_i, y_i)\}_{i=1}^n$. We set ground-truth parameters to $\alpha = 25, \beta=0.5, \sigma^2 = 10$, and sample $n=20$ points. The goal is to inference the posterior distribution $p(\alpha,\beta,\sigma|\mathcal{D})$. We use the \texttt{emcee} \footnote{\url{https://emcee.readthedocs.io/en/stable/}} package to draw MCMC samples as ground-truth reference (See Figure \ref{fig:br_mcmc_fit}). 

In Figure \ref{fig:br_loss} we examine the convergence of different VI criteria quantitatively. Specifically, we appeal to the {\it maximal mean discrepancy} (MMD) metric \citep{gretton2012kernel} to evaluate the similarity between the ground-truth posterior to the variational approximations. Since the original scale of parameters differ, we use the mean and standard deviation of the true posterior to normalize all sample estimates, and set kernel bandwidth parameter to $0.5$. We use $5k$ MCMC samples as the reference distribution and find the results sufficiently stable. As we can see, $\HBO$ delivers the fastest convergence to ground-truth, followed by $\TVO$ and then $\IWELBO$. Vanilla $\ELBO$ struggles the most. This is consistent with the theoretical predictions of the tightness of the bounds. We further carried out the ablation study to examine the convergence for different $\HBO(\alpha)$. 

\begin{figure}[!t]
	\centering
	\includegraphics[width=0.5\textheight]{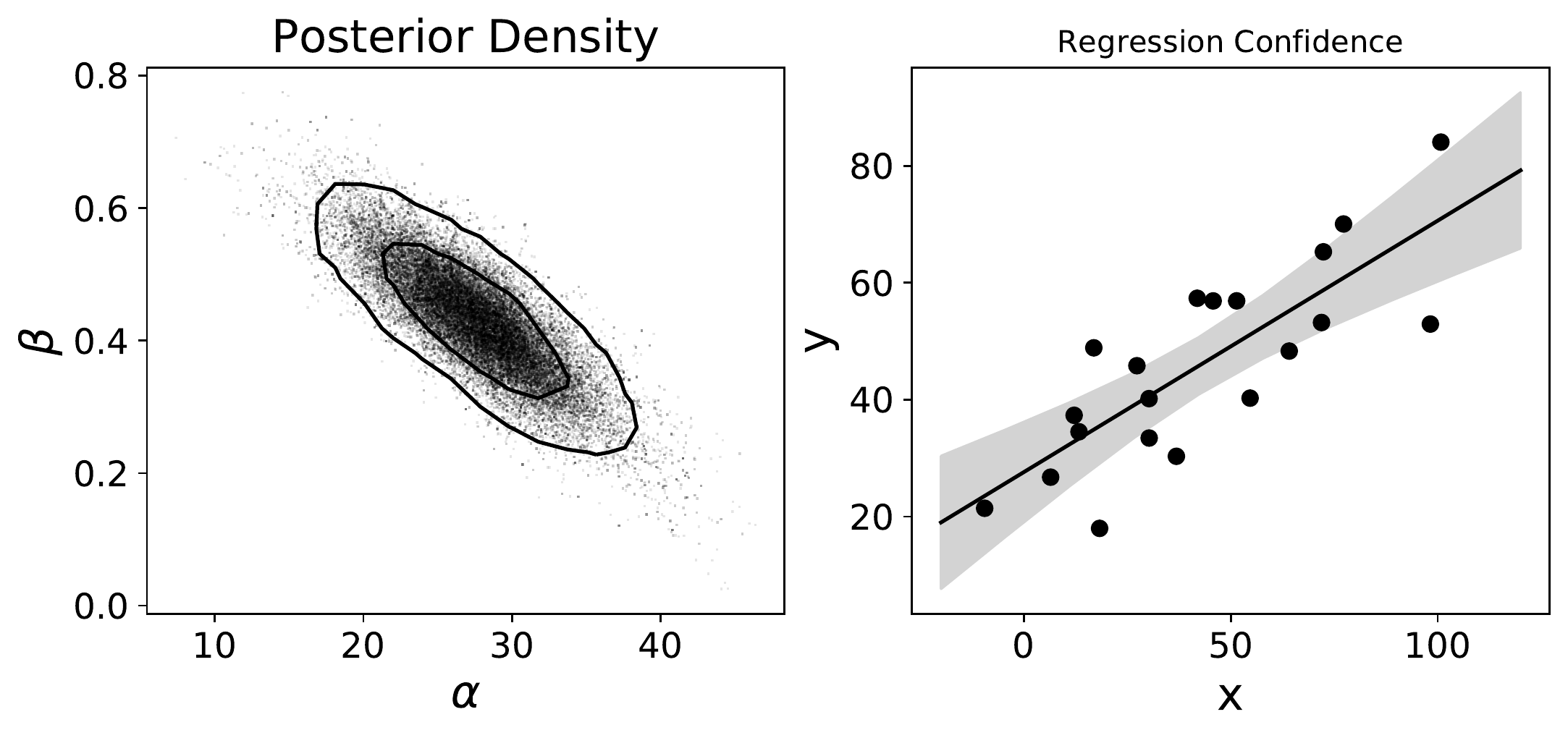}
	\caption{MCMC posterior (left) and model fit (right) for Eqn (\ref{eq:br_mdl}-\ref{eq:br_prior}). Dots are data points, solid line fitted curve, and shaded region confidence intervals.}\label{fig:br_mcmc_fit}
\end{figure}

\begin{figure}
	\centering
	\includegraphics[width=\textwidth]{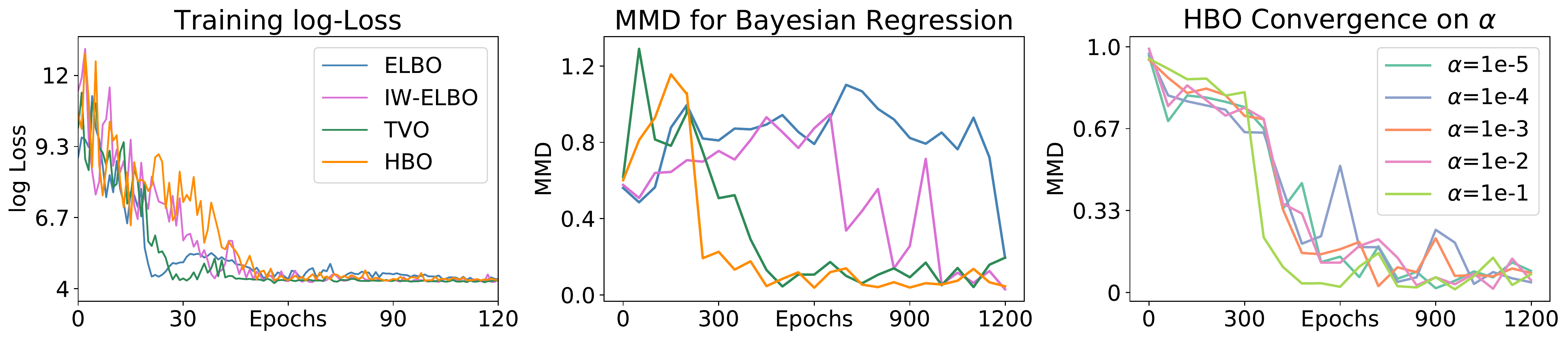}
	\caption{Convergence plot for different objectives. (left) training loss; (middle) MMD loss for different bounds; (right) MMD loss for different $\HBO(\alpha)$. Lower is better.}
\end{figure}

\subsection{Bayesian inference for complex posterior}\label{sec:bayes_inf}

To make the inference more challenging, we consider the following model:
\begin{equation}
y = \sqrt{z_1^2+z_2^2}+\xi, z_1,z_2 \sim \CN(0,1), \xi \sim \CN(0,0.1^2),
\end{equation}
and the goal is to infer pair $(z_1, z_2)$ given an observation $y$. The posterior will be circle-like for a sufficiently large $y$ (see Figure \ref{fig:vi_approx}), and here we focus on the case where $y=1$.


\begin{figure}
	\centering
	\includegraphics[width=.8\textwidth]{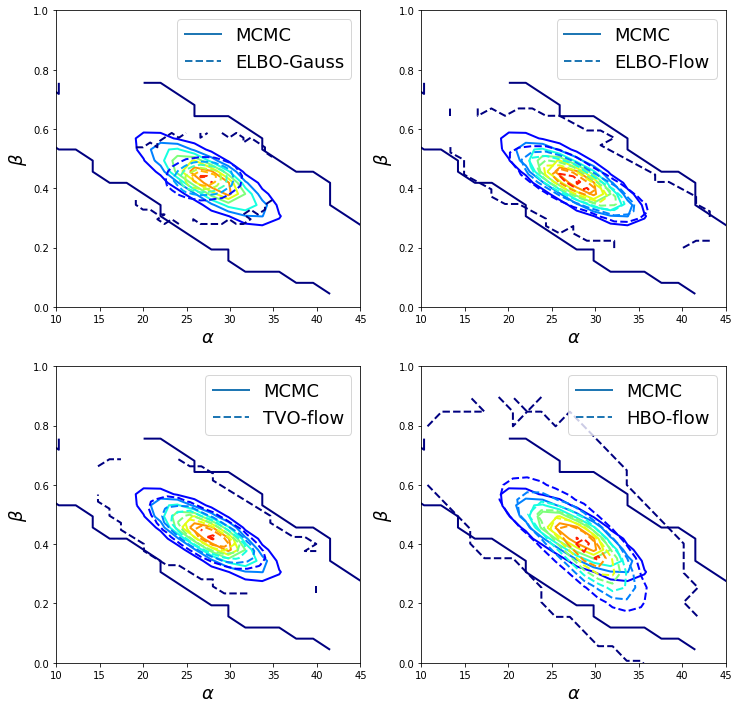}
	\caption{Comparison of posterior estimates from different variational objectives, overlaid on ground-truth contours.}
	\label{fig:mode-covering}
\end{figure}

\section{Additional Results and Analyses for Real-World Datasets}


In Figure \ref{fig:ess_cmp} we compare the effective sample-size (ESS) \citep{chen2021simpler} of different schemes on MNIST. To come up with a single summarizing statistics, ESS is averaged along the discretized thermodynamics path. Consistent with our analysis from the toy model, $\HBO$ shows better latent sample efficiency compared with other alternatives with a sharper bound. 

In Figure \ref{fig:utility_elbo} we visualize the spread of ELBO evaluation comparing the utility of learned posterior respectively learned from $\TVO$ and $\HBO$. It shows posterior samples from $\HBO$ are more likely relative to those from $\TVO$. We also examined R\'enyi, but it shows and more generated profile, and therefore we removed it from our presentation for visual clarity. 

Figure \ref{fig:alpha_tuning} plots a few thermodynamic local evidence curve corresponding to different $\alpha$. We see the phase transition ({\it i.e.}, flipped monotonicity) happening. Note that the best curve does not seem strictly flat, which can be potentially attributed to the approximation error originated from our perturbed estimator and the importance weighting scheme.

\begin{figure}[!htb]
	\centering
	\hspace{1em}
	\begin{minipage}{.48\textwidth}
		\centering
		\includegraphics[width=0.3\textheight]{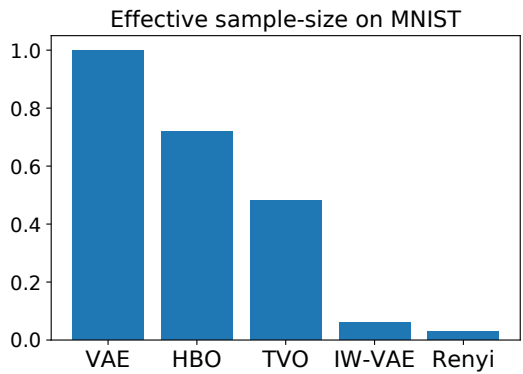}
		\caption{Comparison of effective sample-size with different VI objectives on MNIST. \label{fig:ess_cmp}}
	\end{minipage}%
	\vspace{-1.5em}
\end{figure}

\begin{figure}[!htb]
	\centering
	\begin{minipage}{0.48\textwidth}
		\centering
		\includegraphics[width=.3\textheight]{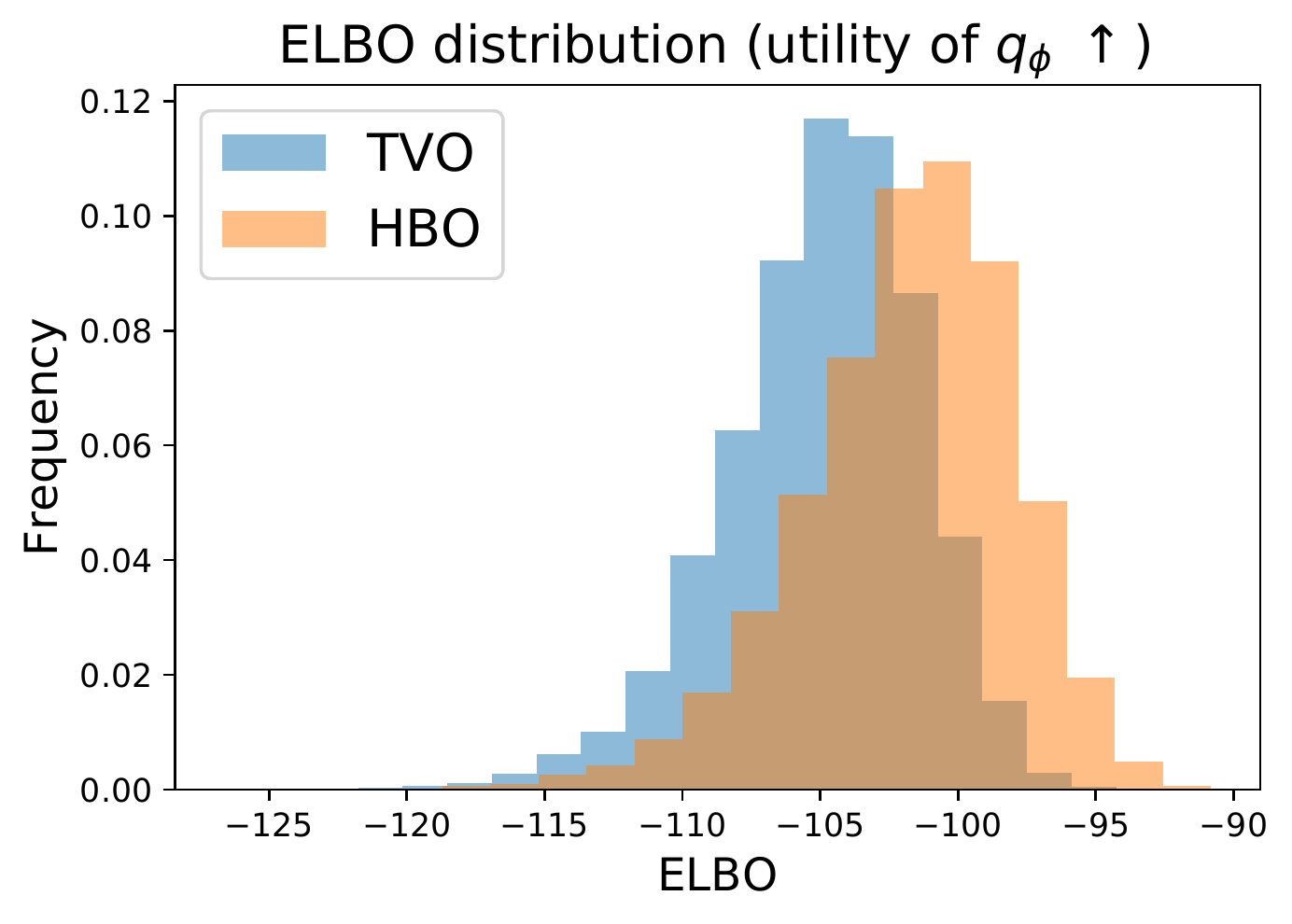}
		\caption{Distribution of local evidence ($\ELBO$) using models respectively trained with $\TVO$ and $\HBO$.}
		\label{fig:utility_elbo}
	\end{minipage}
	\begin{minipage}{.48\textwidth}
		\centering
		\includegraphics[width=0.3\textheight]{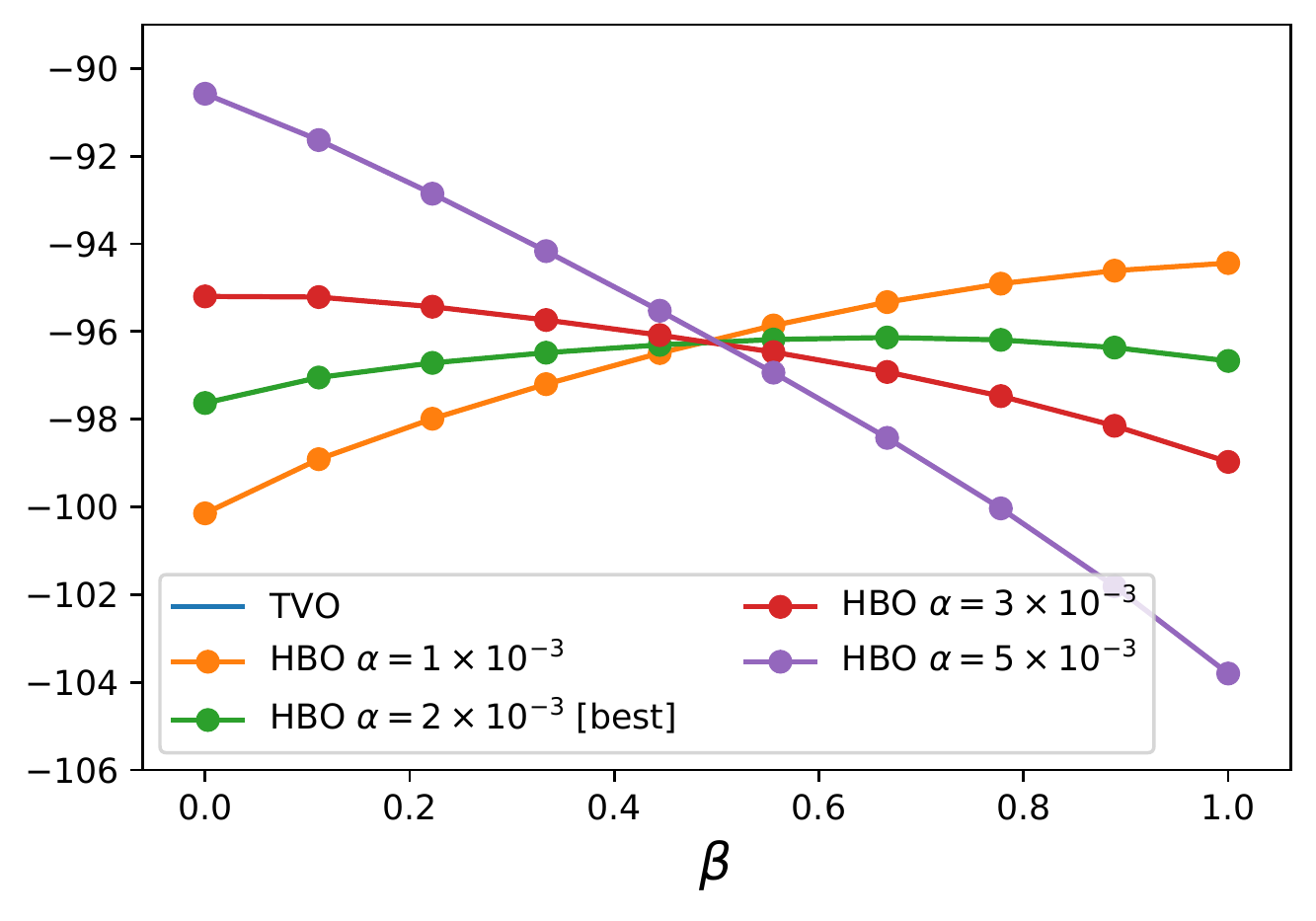}
		\caption{$\HBO$ curves for different $\alpha$ on MNIST.}
		\label{fig:alpha_tuning}
	\end{minipage}%
\end{figure}

\begin{figure}[!htb]
	\centering
	
	\begin{minipage}{0.48\textwidth}
		\centering
		\includegraphics[width=0.3\textheight]{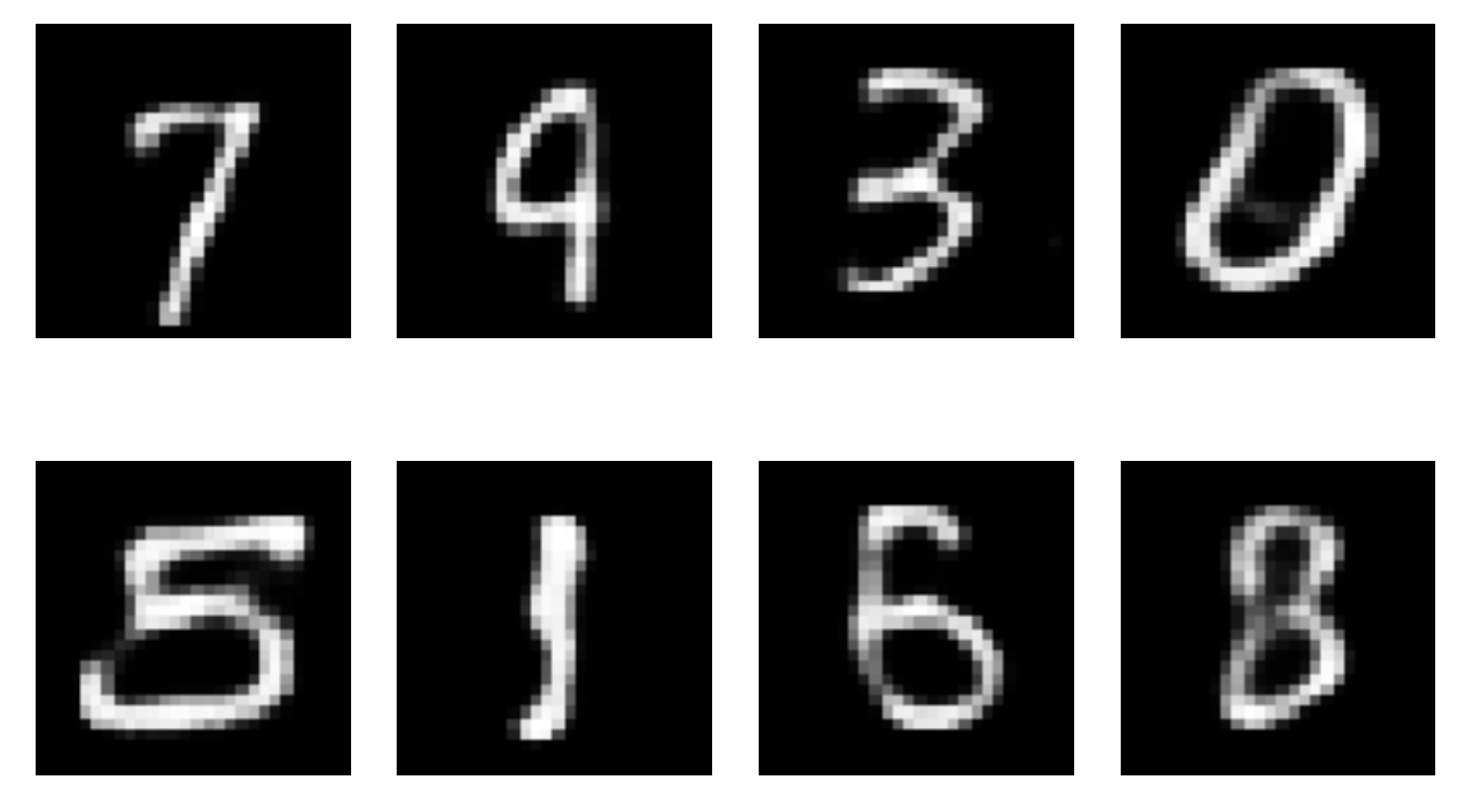}
		\caption{Samples generated from the $\HBO$ model on MNIST.}
		\label{fig:hbo_mnist}
	\end{minipage}
	\hspace{1em}
	\begin{minipage}{.48\textwidth}
		\centering
		\includegraphics[width=0.3\textheight]{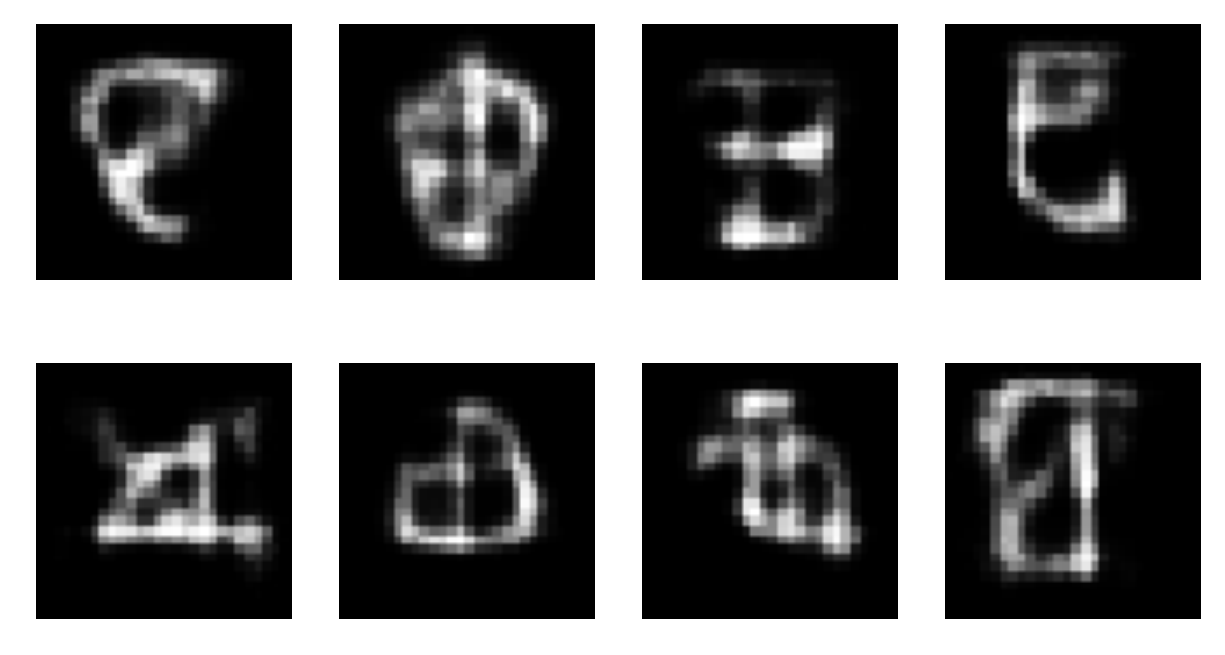}
		\caption{Samples generated from the $\HBO$ model on Omniglot.}
		\label{fig:hbo_omniglot}
	\end{minipage}
\end{figure}

\begin{figure}
	\includegraphics[width=.7\textheight]{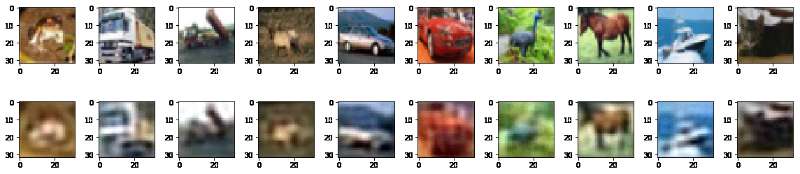}
	\caption{Samples reconstructed from Cifar.}
	\label{fig:hbo_cifar}
\end{figure}

\begin{figure}
	\begin{minipage}{.48\textwidth}
		\includegraphics[width=.3\textheight]{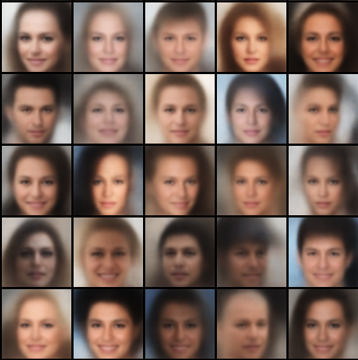}
		\caption{Samples generated from CelebA.}
		\label{fig:hbo_celeba}
	\end{minipage}
	\hspace{3em}
	\begin{minipage}{.48\textwidth}
		Original:
		"<s> slow service , rude hostess and cold food . what 's the point when you have so many options ? </s>"\\
		
		VAE:
		"food service . food rude , rude beer . the a not point of you 're to many bad for </s>"\\
		
		HBO:
		"the service . slow staff , the food . </s> a the point of you are a much people for </s>"
		
		\begin{center}
			Samples reconstructed from Yelp.
		\end{center}
	\end{minipage}
\end{figure}

\end{document}